\newcommand{\spqed}{}
\definecolor{britishracinggreen}{rgb}{0.0, 0.26, 0.15}
\definecolor{brinkpink}{rgb}{0.98, 0.38, 0.5}
\DeclareFontFamily{U}{jkpmia}{}
\DeclareFontShape{U}{jkpmia}{m}{it}{<->s*jkpmia}{}
\DeclareFontShape{U}{jkpmia}{bx}{it}{<->s*jkpbmia}{}
\DeclareMathAlphabet{\mathfrak}{U}{jkpmia}{m}{it}
\SetMathAlphabet{\mathfrak}{bold}{U}{jkpmia}{bx}{it}
\newcommand{\boldit}[1]{%
  { \bfseries \itshape \boldmath{%
        #1
  }}%}%
}%
\newcommand{\boldbox}[1]{%
\begin{center}%
    \begin{minipage}{.95\textwidth}%
      \boldit{#1}%
    \end{minipage}%
\end{center}}
\newlength{\myParindent}
\newcommand{\boldfbox}[1]{%
\begin{center}%
  \fbox{%
    \begin{minipage}{.95\textwidth}%
      \boldit{#1}%
    \end{minipage}}%
\end{center}}
\newtheorem{theorem}{Theorem}
\newtheorem{lemma}[theorem]{Lemma}
\newtheorem{remark}[theorem]{Remark}
\theoremstyle{definition}
\newtheorem{definition}[theorem]{Definition}
\newtheorem{example}{Example}
\newcommand{\N}{\mathbb{N}}
\newcommand{\R}{\mathbb{R}}
\newcommand{\RO}{\mathbb{A}}
\newcommand{\RD}{\mathbb{D}}
\newcommand{\RODS}{\mathbb{O}}
\newcommand{\for}{\textrm{for}}
\newcommand{\itfor}{\textit{for}}
\newcommand{\vareps}{\varepsilon}
\newcommand{\pre}{\preceq}
\newcommand{\longto}{\longrightarrow}
\newcommand{\lin}{{\,\!\in\!\,}}
\newcommand{\mfrak}[1]{\mathfrak{#1}}
\newcommand{\mcal}[1]{\mathcal{#1}}
\newcommand{\mscr}[1]{\mathscr{#1}}
\newcommand{\norm}[1]{{\left\| {#1} \right\|}}
\newcommand{\abs}[1]{{\left| {#1} \right|}}
\newcommand{\card}[1]{{\abs{#1}}}
\newcommand{\lorder}[1][]{%
  \ifthenelse{\equal{#1}{}}%
             {{\le_{\alpha}}}%
             {{\le_{{#1}}}}%
}
\newcommand{\quotient}[2]{{{#1}/\!{#2}}}
\newcommand{\image}[1]{{im\!\left({#1}\right)}}
\newcommand{\D}{\mathcal{D}}
\newcommand{\PD}{\mathcal{PD}}
\newcommand{\U}{\mathcal{U}}
\newcommand{\M}{\mathcal{M}}
\DeclareMathOperator{\HC}{\mathcal{HC}}
\newcommand{\Link}{{\mcal{L}}}
\newcommand{\SLink}{{\mcal{SL}}}
\newcommand{\ALink}{{\mcal{AL}}}
\newcommand{\CLink}{{\mcal{CL}}}
\newcommand{\opt}{{opt}}
\newcommand{\noncmp}{{\perp}}
\DeclareMathOperator*{\argmin}{arg\,min}
\DeclareMathOperator{\diam}{diam}
\DeclareMathOperator{\sep}{sep}
\newcommand{\subs}{\subseteq}
\newcommand{\subp}{\Subset}
\newcommand{\pow}[1]{{\mcal{P}\!\left({#1}\right)}}
\newcommand{\Eqs}[1]{{\mfrak{R}\!\left(#1\right)}}
\newcommand{\Part}[1]{{\mfrak{P}\!\left(#1\right)}}
\newcommand{\ceil}[1]{{\left\lceil{#1}\right\rceil}}
\newcommand{\ultrax}[1]{{\mathfrak{#1}}}
\newcommand{\ultra}{{\ultrax{u}}}
\newcommand{\Ultra}{{\ultrax{U}}}
\newcommand{\la}{\langle}
\newcommand{\ra}{\rangle}
\newcommand{\ari}{{ARI}}
\newcommand{\mari}{{$\ari$}}
\newcommand{\oari}{{\bar{o}ARI}}
\newcommand{\moari}{{$\oari$}}
\newcommand{\loops}{{\mathrm{loops}}}
\newcommand{\mloops}{{$\loops$}}
\newcommand{\ER}{{Erd\H{o}s-R\'{e}nyi}}
\DeclareMathOperator{\E}{{\mathbb{E}}}
\newcommand{\deft}[1]{\textbf{\boldmath{#1}}}
\newcommand{\CarlssonMemoli}{{Carlsson and M\'{e}moli}}
\title{Order preserving\\hierarchical agglomerative clustering}
\author{Daniel Bakkelund\\{\tt daniel.bakkelund@ifi.uio.no}}
\date{}
\pgfplotsset{width=7cm,compat=1.17}
\definecolor{mylinkcolor}{rgb}{0.0, 0.0, 0.5}
\begin{document}
\maketitle

\begin{abstract}
  Partial orders and directed acyclic graphs are commonly recurring data structures that
arise naturally in numerous domains and applications and are used to represent ordered relations between
entities in the domains. Examples are task dependencies in a project plan, transaction order in distributed
ledgers and execution sequences of tasks in computer programs, just to mention a few.
We study the problem of \emph{order preserving hierarchical clustering} of this kind of ordered data.
That is, if we have $a<b$ in the original data and denote their respective clusters 
by $[a]$ and $[b]$, then we shall have $[a]<[b]$ in the produced clustering.
%% % 
%% Maintaining the order in this fashion is key in many applications, such as parallel
%% processing, project planning and other fields where the order of things are of importance.
%% %
The clustering is similarity based and uses standard linkage functions, such as single- and
complete linkage, and is an extension of classical hierarchical clustering.

To achieve this, we define the output from running classical hierarchical clustering on strictly 
ordered data to be \emph{partial dendrograms}; sub-trees of classical dendrograms with 
several connected components. We then construct an embedding of partial dendrograms over 
a set into the family of ultrametrics over the same set. 
An optimal hierarchical clustering is defined as
the partial dendrogram corresponding to the ultrametric closest to the original
dissimilarity measure, measured in the $p$-norm. 
Thus, the method is a combination of classical hierarchical clustering and ultrametric fitting.

A reference implementation is employed for experiments on both synthetic random data and real world
data from a database of machine parts. When compared to existing methods, the experiments show that
our method excels both in cluster quality and order preservation.

\end{abstract}

\newcommand{\kwdsep}{$\,\cdot\,$}

\begin{center}
\begin{minipage}{.88\textwidth}
{\flushleft {\bf Keywords} Hierarchical clustering\kwdsep
Order preserving clustering\kwdsep
Partial dendrogram\kwdsep
Unsupervised classification\kwdsep
Ultrametric fitting\kwdsep
Acyclic partition
}
\end{minipage}
\end{center}

\section{Introduction}

Clustering is one of the oldest and most frequently used techniques for exploratory data analysis
and unsupervised classification.
The toolbox contains a large variety of methods and algorithms, spanning from the initial, but
still popular ideas of $k$-means \citep{Macqueen1967} and hierarchical 
clustering \citep{Johnson1967}, 
to more recent methods, such as density- and model based clustering \citep{KriegelEtAl2011,FraleyRaftery2002},
and semi-supervised methods \citep{BasuDavidsonWagstaff2008}, plus a large list of variants.
All these methods have one thing in common: they try to extract hidden structure from the data,
and make it visible to the analyst. But they also share another feature: if the analysed data
is already endowed with some form of structure, the structure is lost in the clustering process;
the clustering does not try to retain the structure.
%This in spite of the fact that strictly partially ordered data, such as directed acyclic graphs,
%rooted trees and linear orders, are types of data that is more and more commonly analysed
%by practitioners.

In this paper, we show how to extend hierarchical clustering to relational data in a way that
preserves the relations. In particular, if the input is a set $X$ equipped with a strict partial order $<$,
and if $a,b \in X$, we ensure that if $a<b$ then we will have $[a] <' [b]$ after clustering,
where $[a]$ and $[b]$ are the respective clusters of $a$ and $b$, and $<'$ is a partial order
on the clusters naturally induced by $<$.

Since directed acyclic graphs (DAGs) correspond to partial orders, our method works equally well for
DAGs. If the input is a DAG, then every clustering in the produced hierarchy is a DAG of clusters,
and there exists a DAG homomorphism from the original DAG to the cluster DAG.

\subsection{Motivating real-world use case} 
\label{section:motivating-use-case}
The motivation for our method comes from an industry database of machine parts
that are arranged in part-of relations: parts are registered as sub-parts of other parts.
For historical reasons, there have been incidents of copy-paste of machine designs,
and the copies have been given entirely new identifiers with no links to the original design.
In hindsight, there is a wish to identify these equivalent machine parts, but telling them apart is hard. 
Also, the metadata that is available has a tendency of displaying high similarity 
between a part and its sub-parts, leading to ``vertical clustering'' in the data. 

Since the motivation is to identify equivalent machinery with the aim of replacing one piece of
machinery with an equivalent part, and since a part and its sub-parts by no means
can be interchanged, it is essential to maintain this parent-child relationship.
Moreover, since a part and its sub-part are never equivalent, this is a strict order
relation. The set of all machine parts thus makes up a strictly partially ordered set.
By preserving these relations in the clustering process,
we can eliminate the errors due to close resemblance between the part and the sub-part,
resulting in improved over all quality of the clustering.

\smallskip

This is but one concrete example of a real world problem where the method we present performs
significantly better than standard methods that disregard the structure. It is possible to imagine
several other cases for which we have not yet had the opportunity to test our methodology.
We will only mention two here; citation network analysis and time series alignment:

\smallskip

Citation networks are partial orders, where the order is defined by the citations. If we perform
order preserving clustering in the above sense on citation networks, the clusters will contain related
research, and the clusters will be ordered according to appearance relative other related research.
This differs from clustering with regards to time: when clustering with time as a parameter,
you have to choose, implicitly or explicitly,
a time interval for each cluster. When the citation graph is used for ordering, the clusters will contain
research that occurred in parallel, citing similar sources, and being cited by similar sources, regardless
to whether they occurred in some particular time interval.

\smallskip

A time series is a totally ordered set of events, so that a family of time series is a partially ordered
set. Assume that you want to do time series alignment, matching events from one time series with events
from another, but for some reason the time stamps are corrupted and cannot be used for this purpose.
Given a measure of (dis-)similarity between events, we can cluster the events to figure out which events
are the more similar. Since an optimal order preserving clustering is one
that both preserves all event orders and matches the most similar events across the time series,
ideally the result is a series of clusters with each cluster containing
the events that correspond to each other across the time series.

\subsection{Problem overview}
\label{section:problem-overview}
\label{section:HC-at-a-glance}
\label{section:introducing-a-strict-order}
 
Given a set $X$ together with a notion of \hbox{(dis-)similarity} between the elements of~$X$, 
a hierarchical agglomerative clustering can be obtained as follows \citep[\S 3.2]{JainDubes1988}:
\begin{enumerate}
\item Start by placing each element of $X$ in a separate cluster.
\item Pick the two clusters that are most similar according to the (dis-)similarity measure, 
  and combine them into one cluster by taking their union.
  \label{step:glance-pick}
\item If all elements of $X$ are in the same cluster, we are done. 
  Otherwise, go to Step~\ref{step:glance-pick} and continue.
\end{enumerate}
The result from this process is a dendrogram; a tree structure showing the sequence of the clustering
process (Figure~\ref{fig:dendrogram}).

\vbox{\begin{center}
  \input{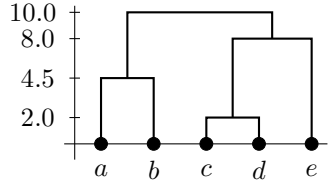}
  \vspace{-10px}
  \captionof{figure}{A dendrogram over the set $X=\{a,b,c,d,e\}$.
    The elements of $X$ are the leaf nodes of the dendrogram, and, starting at the bottom, the horizontal
    bars indicate which elements are joined at which step in the process. The numbers on the $y$-axis
    indicate at which dissimilarity level the different clusters were formed.}
  \label{fig:dendrogram}
\end{center}}

%% The set of dendrograms over a finite set is in a bijective correspondence with the set of
%% ultrametrics over the same set \citep{CarlssonMemoli2010}.
%% From Figure~\ref{fig:dendrogram}, we define the ultrametric distance between two elements
%% to be the minimal height you have to ascend to in order to traverse from one element 
%% to the other. For example, the ultrametric distance between elements $c$ and $e$ is $8.0$.

%% \bigskip

Now, given a partially ordered set $X=\{a,b,c,d\}$ where $a<b$ and $c<d$,
we can use arrows to denote the order relation, thinking of $X$ as
a directed acyclic graph with two connected components.
If we want to produce a hierarchical clustering of $X$, while at the same time
maintaining the order relation, our options are depicted in the Hasse digram in 
Figure~\ref{fig:meet-semilattice}.

\begin{figure}[htbp]
\vbox{\begin{center}
  \input{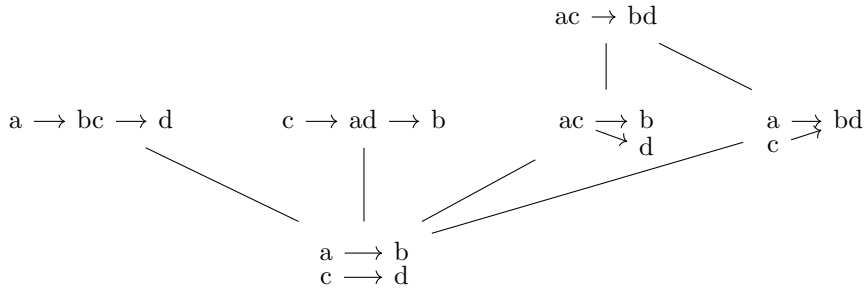}
  \vspace{-10px}
  \captionof{figure}{Possible order preserving hierarchical clusterings over the
    set $X=\{a,b,c,d\}$ with $a<b$ and $c<d$. Adjacent elements indicate clusters.}
  \label{fig:meet-semilattice}
\end{center}}
\end{figure}

Each path in this diagram, starting at the bottom and advancing upwards, represents a hierarchical
clustering. But, since we are required to preserve the strict order relation, we cannot merge any more
elements than what we see here. This means that we will never obtain dendrograms like the one in 
Figure~\ref{fig:dendrogram}, that joins at the top when all elements are placed in a single cluster.
Rather, the output of hierarchical agglomerative clustering would take the form of
\emph{partial dendrograms} like those of Figure~\ref{fig:partial-dendrograms-intro}.

\begin{figure}[htpb]
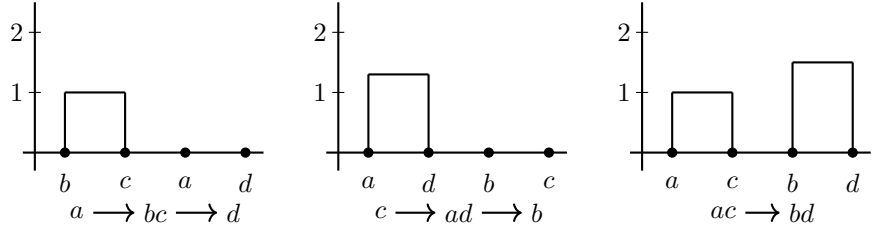

\vbox{\begin{center}
    \begin{tabular}{ccc}
      \input{partial_dendrogram_a_bc_d} &
      \input{partial_dendrogram_c_ad_b} &
      \input{partial_dendrogram_ac_bd}   
    \end{tabular}
    \captionof{figure}{Partial dendrograms over the set $X=\{a,b,c,d\}$ with $a<b$ and $c<d$.
      Each partial dendrogram corresponds to a path in Figure~\ref{fig:meet-semilattice} 
      starting at the bottom and advancing upwards to 
      the ordered set depicted below the dendrogram.}
    \label{fig:partial-dendrograms-intro}
\end{center}}
\end{figure}
 
To complicate matters, if both $(a,d)$ and $(a,c)$ are pairs of minimal dissimilarity,
then they are both candidates for the first merge.
From Figure~\ref{fig:meet-semilattice} we can see that $ad$ and $ac$ 
are mutual exclusive merges, and that choosing one over the other
leads to very different solutions. We therefore need a method to decide which
candidate merge, or which candidate partial dendrogram, is the better.

\subsection{Outline of our method and contributions}
\label{section:outline-of-contributions}

As our first contribution, to solve the problem of picking one candidate merge among a set of
tied connections, we present a permutation invariant method for hierarchical agglomerative clustering.
The method uses the classical linkage functions of single-, average- and complete linkage, but
is optimisation based, as opposed to the algorithmic definition of classical hierarchical
clustering. Recalling that every hierarchical clustering corresponds to a unique
ultrametric \citep{Jardine1971}, the
optimisation criterion is that of minimising the matrix norm of the difference between the original
dissimilarity and the ultrametric corresponding to the hierarchical clustering, a method known
as \emph{ultrametric fitting}~\citep{DeSoeteCarrolDeSarbo1987}. 
%As we prove in Section~\ref{section:optimised-hc}, for single linkage the method is identical to classical
%hierarchical clustering, and for complete- and average linkage, the optimal solution is also
%a valid solution for the corresponding linkage functions for classical hierarchical clustering.

\smallskip

We have seen that order preserving hierarchical agglomerative clustering on strictly partially ordered sets
leads to partial dendrograms. In order to evaluate the ultrametric fitting of a partial dendrogram,
our next contribution is an embedding of partial dendrograms over a set into the family of ultrametrics
over the same set.
%This allows us to optimise for the best partial dendrogram over a set given an initial dissimilarity.

\smallskip

Our main contribution, \emph{order preserving hierarchical agglomerative clustering of strictly
  partially ordered sets}, is the combination of the two. We define an optimal order preserving hierarchical
clustering to be the hierarchical clustering with the partial dendrogram that has the best ultrametric
fit relative the original dissimilarity measure.

\smallskip

In want of an efficient algorithm, we present a method of approximation that can be computed in
polynomial time. We demonstrate the approximation on synthetic data generated as random directed
acyclic graphs and random dissimilarity measures, as well as on data from the parts database 
motivating this research. We evaluate the quality of the obtained clustering by computing the
adjusted Rand index relative a planted partition \citep{HubertArabie1985}.
We provide a novel method for comparing two induced order relations using a modified adjusted Rand index,
which we believe is a first of its kind. We also provide simple method for computing the level of order
preservation of a clustering of an ordered set by counting the number of induced loops.

\smallskip

Beyond our main contribution, we believe that the embedding of partial dendrograms into
ultrametrics may be of interest to a larger audience.
The embedding provides a means for treating partial dendrograms as complete dendrograms, offering
access to the entire rack of tools that already exists in this domain.
An obvious example candidate is that of hierarchical clustering with must-link and no-link
constraints. The no-link constraints will necessarily lead to partial dendrograms that can be
easily evaluated in our framework.

\subsubsection{Summary of contributions}

Our main contribution is the theory for order preserving hierarchical agglomerative
clustering for strict posets. Further contributions we wish to highlight are:
\begin{itemize}
\item A theory for embedding partial dendrograms over a set into the set of complete dendrograms
  over the same set.
\item An optimisation based, permutation invariant hierarchical clustering methodology for non-ordered
  sets that is very similar to classical hierarchical clustering.
\item A polynomial time approximation scheme for order preserving hierarchical agglomerative clustering
\item A novel method for comparison of induced order relations over a set based on the adjusted Rand index.
\item A measure of the level of order preservation of a clustering of an ordered set.
\end{itemize}

\subsection{Related work}
\label{section:related-work}
Hierarchical agglomerative clustering is described in a plethora of books and articles, and we shall
not try to give an account of that material. For an introduction to the subject, 
see \citep[\S 3.2]{JainDubes1988}. 

\subsubsection{Clustering of ordered data}
There are quite a few articles presenting clustering of ordered data, placing themselves in one
of two categories.

The first is clustering of sets where the (dis)similarity measure is replaced by information
about whether one pair of elements is more similar than another pair of elements, for example
based on user preferences. This is sometimes referred to as \emph{comparison based clustering}.
See the recent article by \citet{GhoshdastidarPerrotLuxburg2019} for an example and
references.
In this category, we also find the works of \citet{JanowitzBook2010}, providing a wholly
order theoretic description of hierarchical clustering, including the case where the
dissimilarity measure is replaced by a partially ordered set.

The second variant is to partition a family of ordered sets so that similarly ordered sets
are associated with each other. Examples include the paper by \citet{KamishimaFujiki2003}, 
where they develop a variation of $k$-means, called $k$-$o'$means, 
for clustering preference data, each list of preferences being a totally ordered set.
Other examples in this category include clustering of times series,
identifying which times series are alike \citep{Luczak2016}.

Our method differs from all of the above in that we cluster elements inside one ordered
set through the use of a (dis)similarity measure, while maintaining the original orders of elements.

\subsubsection{Clustering to detect order}

Another variant is the detection of order relations in data through clustering:
In \citep{CarlssonEtAl2014}, it is demonstrated how hierarchical agglomerative quasi-clustering
can be used to deduce a partial order of ``net flow'' from an asymmetric network.

In this category, it is also worth mentioning dynamic time warping. This is a method
for aligning time series, and can be considered as clustering across two time series that
is indeed order preserving. See \citep{Luczak2016} for further references on this.

\subsubsection{Acyclic graph partitioning problems}
The problem of order preserving hierarchical agglomerative clustering can be
said to belong to the family of \emph{acyclic graph partitioning problems} \citep{HerrmannEtAl2017}. 
If we consider the strict partial
order to be a directed acyclic graph (DAG), the task is to partition the vertices into groups so that
the groups together with the arrows still makes up a DAG.

Graph partitioning has received a substantial attention from researchers, especially within computer
science, over the last $50$ years. 
Two important fields of application of this theory are VLSI and parallel execution.

In VLSI, short for Very Large Scale Integration, the problem can be formulated as follows:
Given a set of micro processors, the wires that connect them, and a set of circuit boards, how do you
best place the processors on the circuit boards in order to optimise a given objective function?
Typically, a part of the objective function is to minimise the wire length. But other features may also
be part of the optimisation, such as the amount or volume of traffic between certain processors 
etc. \citep{MarkovHuKim2015}

For parallel processing, the input data is a set of tasks to be executed. The tasks are organised 
as a DAG, where predecessors must be executed before descendants. Given a finite number
of processors, the problem is to group the tasks so that they can be run group-wise on a processor, 
or running groups in parallel on different processors, in order to execute all tasks as quickly as possible.
Typically additional information available is memory requirements, expected execution times for the tasks,
etc. \citep{BulucEtAl2016}

It is not difficult to understand why both areas have received attention, 
being essential in the development of modern computers. The development of theory and
methods has been both successful and abundant, and a large array of techniques are available, both academic
and commercially.

Although both problems do indeed perform clustering of strict partial orders, their solutions are not 
directly transferable to exploratory data analysis. Mostly because they have very specific constraints
and objectives originating from their respective problem domains.

The method we propose in this paper has as input a strict partial order (equivalently; a DAG)
together with an arbitrary dissimilarity measure. We then use the classical linkage functions single-,
average-, and complete linkage to suggest clusterings of the vertices from the input dataset, while
preserving the original order relation.

Our method therefore places itself firmly in the family of acyclic graph partitioning methodologies, 
but with different motivation, objective and solution, compared to existing methods.

\subsubsection{Hierarchical clustering as an optimisation problem}
Several publications aim at solving hierarchical clustering in terms of optimisation.
However, due to the procedural nature of classical hierarchical clustering, combined with
the linkage functions, pinning down an objective function may be an impossible task. Especially
since classical hierarchical clustering is not even well defined for complete linkage in the presence
of tied connections. This leads to a general abandonment of linkage functions in optimisation based
hierarchical clustering.

Quite commonly, optimisation based hierarchical clustering is done in terms of \emph{ultrametric fitting}.
That is, it aims to find an ultrametric that is as close to the original dissimilarity measure as possible, 
perhaps adding some additional constraints \citep{GilpinNijssenDavidson2013,ChierchiaPerret2019}.
It is well known that solving single linkage hierarchical clustering is equivalent to finding the 
so called \emph{maximal sub-dominant ultrametric}. That is; the ultrametric that is pointwise maximal
among all ultrametrics not exceeding the original dissimilarity \citep{Rammal1986}.
But for the other linkage functions, there is no equivalent result.

Optimisation based hierarchical clustering therefore generally present alternative definitions
of hierarchical clustering. Quite often based on objective functions that originate from some
particular domain. Exceptions from this are, for example,
Ward's method \citep{Ward1963}, where the topology of the clusters are the focus of the objective,
and also the recent addition by \citet{Dasgupta2016}, where the 
optimisation aims towards topological properties of the generated dendrogram.

Although our method is, eventually, based on ultrametric fitting, we optimise over a very particular
set of dendrograms. Namely the dendrograms that can be generated through 
classical hierarchical clustering with linkage functions. 
It is therefore reasonable to claim that our method places itself 
between classical hierarchical clustering and optimised models.

\subsubsection{Clustering with constraints}
A significant amount of research has been devoted to the topic of clustering with constraints in
the form of pairwise \emph{must-link} or \emph{no-link} constraints, often in addition to
other constraints, such as minimal- and maximal distance constraints, and so on.
Some work as also been
done on hierarchical agglomerative clustering with constraints, starting with the works 
of \citet{DavidsonRavi2005}. For a thorough treatment of constrained clustering,
see \citep{BasuDavidsonWagstaff2008}.

Order preserving clustering (as well as acyclic partitioning) can be seen as a particular version
of constrained clustering, where the constraint is a \emph{directed, transitive cannot-link} constraint.
A type of constraint that is not found in the constrained clustering literature.

\subsubsection{Clustering in information networks}
A large amount of research has been conducted on the problem of clustering nodes in networks,
and a more recent field of research is that of clustering data organised in
\emph{heterogeneous information networks}, or HINs for short \citep{PioEtAl2018}.
A HIN is an undirected graph where both vertices and edges may have different, or
even multiple, types. RDF graphs \citep{Lassila1999} is but one example of HINs.
In a sense, we can say that the availability of multiple
types allow HINs to model the real world more closely, but with the penalty of increased complexity.
It is fair to consider HIN clustering a generalisation of classical network clustering,
where, in the classical setting, all vertices and edges are of one common type.

However, the general case in clustering both classical networks and HINs is that although the network
structure serves to influence the clustering, the structure is usually lost in the clustering. The most
classical example is where connectedness between vertices contribute to vertex similarity, and then the
most connected vertices (clique-like subgraphs) are clustered together. Although this can be seen as a
type of relation preserving clustering, in order preserving clustering, the opposite is taking place:
the more connected two vertices are, the more reason \emph{not} to place them
in the same cluster. Indeed, as we show in Section~\ref{section:order-theory}, for the theory we present
in this paper, two elements can only be clustered together if there are \emph{no} paths connecting them.

An example of HIN clustering that \emph{is} structure preserving is \citep{LiEtAl2017}.
A HIN comes with a schema, or a \emph{schematic graph}, describing which types are related to which
other types. For \citet{LiEtAl2017},
the goal is to cluster each set of same-type nodes according to a discovered similarity measure.
The result is thus a schematic graph where each node is a clustering of
vertices of the same type. This differs from the problem we study in that
we do not know which elements are of the same type; to discover this is the goal of the clustering.
Hence, the problems are similar but different; we could rephrase our problem as that of \emph{deriving} a
directed schematic graph from unlabeled vertices, where each vertex in the schematic graph is a set of
equivalent machine parts, and the directed edges are the part-of relations.

\subsection{Organisation of the remainder of this paper} 
\label{section:paper-overview}

Section~\ref{section:background} provides necessary background material. 
%We start by recalling strict and non-strict partial order relations and equivalence relations. 
%Thereafter, we revisit classical hierarchical agglomerative clustering,
%recalling central concepts such as dissimilarity measures, ultrametrics and dendrograms. 

In Section~\ref{section:optimised-hc}, we develop
\emph{optimised hierarchical agglomerative clustering} for non-ordered sets; our
permutation invariant clustering model that is tailored especially to fit into our
framework for agglomerative clustering of ordered sets.

In Section~\ref{section:order-theory}, we tackle the problem of order preservation during clustering:
We define what we mean by order preservation, and classify exactly the clusterings that are order preserving.
We also provide concise necessary and sufficient conditions for an hierarchical agglomerative
clustering algorithm to be order preserving.

Section~\ref{section:partial-dendrograms} defines partial dendrograms and develops the embedding of
partial dendrograms over an ordered set into the family of ultrametrics over the same set. 

Our main result, \emph{order preserving hierarchical agglomerative clustering for strict partial orders},
is presented Section~\ref{section:HC}.

Section~\ref{section:approximation} provides a polynomial time approximation scheme for our
method, and Section~\ref{section:approximation-demo} demonstrates the efficacy of the approximation on
synthetic data.

Section~\ref{section:demo} presents the results from applying our approximation method to a subset of the
data in the parts database, comparing with existing methods,
and finally, Section~\ref{section:conclusions} closes the article with some concluding remarks, and a list of
future work topics.

\section{Background}
\label{section:background}

In this section we recall basic background material.
We start by recollecting the required order-theoretical tools together with equivalence relations,
before recalling classical hierarchical clustering.

\subsection{Relations}

\begin{definition}
A \deft{relation} $R$ on a set $X$ is a subset $R \subs X \times X$, and we say that $x$ and $y$ are 
\deft{related} if $(x,y) \in R$. The short hand notation 
$a R b$ is equivalent to writing $(a,b) \in R$.
\end{definition}

\subsubsection{Strict and non-strict partial orders}

A \deft{strict partial order} on a set $X$ is a relation $S$ on $X$ that is
irreflexive and transitive.
Recall that, an irreflexive and transitive 
relation is also anti-symmetric. % \deft{anti-symmetric}:
A \deft{strictly partially ordered set}, or a \deft{strict poset}, 
is a pair $(X,S)$, where $X$ is a set and $S$ is a strict partial
order on $X$. We commonly denote a strict partial order by the symbol $<$. 

On the other hand a \deft{partial order} on $X$ is a relation $P$ on $X$ that is
reflexive, asymmetric and transitive,
and the pair $(X,P)$ is called a \deft{partially ordered set}, or a \deft{poset}. 
The usual notation for a partial order is $\le$.

We shall just refer to strict and non-strict partial orders as \emph{orders}, unless there is
any need for disambiguation:
If $R$ is an order on $X$, we say that $a,b \in X$ are \deft{comparable}
if either $(a,b) \in R$ or $(b,a) \in R$. And, if every pair of elements in $X$ are comparable, 
we call $X$ \deft{totally ordered}.
A totally ordered subset of an ordered set is called a \deft{chain}, and a subset where no two
elements are comparable is called an \deft{antichain}.
We denote \deft{non-comparability} by $a \noncmp b$. That is, for any elements $a,b$ in an 
antichain, we have $a \noncmp b$.

A \deft{cycle} in a relation $E$ is a sequence in $E$ on
the form $(a,b_1),(b_1,b_2),\ldots,(b_n,a)$.
The \deft{transitive closure} of $E$ is the minimal set
$\overline{E}$ for which the following holds: If there is a sequence of pairs
$(a_1,a_2),(a_2,a_3),\ldots,(a_{n-1},a_n)$ in $E$, then $(a_1,a_n) \in \overline E$.

Let $(X,E)$ be an ordered set. An element $x_0 \in X$ is a \deft{minimal element} 
if there is no element $y \in X-\{x_0\}$
for which $(y,x_0) \in E$. Dually, $y_0$ is a \deft{maximal element} if there is no $x \in X-\{y_0\}$
for which $(y_0,x) \in E$.
If $(X,E)$ has a unique minimal element, then this is called the \deft{bottom element} 
or the \deft{least element}, and a unique maximal element is called the \deft{top element} or 
the \deft{greatest element}.

Finally, a map $f : (X,<_X) \to (Y,<_Y)$ is \deft{order preserving} if 
$a <_X b \, \Rightarrow \, f(a) <_Y f(b)$, and if $f$ is a set isomorphism (that is, a bijection) for which $f^{-1}$
is also order preserving, we say that $f$ is an \deft{order isomorphism}, and that the sets
$(X,<_X)$ and $(Y,<_Y)$ are \deft{order isomorphic}, writing $(X,<_X) \approx (Y,<_Y)$.

\subsubsection{Partitions and equivalence relations}
A \deft{partition} of $X$ is a collection of disjoint subsets of $X$, the union of which is~$X$. 
%That is; a clustering of $X$ is a partition of $X$. 
The family of all partitions of~$X$, denoted~$\Part{X}$, 
has a natural partial order defined by partition-refinement:
If $\mcal{A} = \{A_i\}_i$ and $\mcal{B} = \{B_j\}_j$ are partitions of $X$, we say that $\mcal{A}$ is
a \deft{refinement} of $\mcal{B}$, writing $\mcal{A} \subp \mcal{B}$, 
if, for every $A_i \in \mcal{A}$ there exists a $B_j \in \mcal{B}$ such that $A_i \subs B_j$.
The sets of a partition are referred to as \deft{blocks}.

%Partitions are intimately related to the concept of equivalence relations:
An \deft{equivalence relation} is a relation $\mscr{R}$ on $X$ that is reflexive, symmetric and transitive.
Let the family of all equivalence relations over a set $X$ be denoted by $\Eqs{X}$. 
If $\mscr{R} \in \Eqs{X}$ and $(x,y) \in \mscr{R}$, we say that
$x$ and $y$ are \deft{equivalent}, writing $x \sim y$. The maximal set of elements equivalent
to $x \in X$ is called the \deft{equivalence class of $x$}, and is denoted $[x]$. 
$\Eqs{X}$ is also partially ordered, but by subset inclusion: that is, for
$\mscr{R,S} \in \Eqs{X}$, we say that $\mscr{R}$ is less than or equal to $\mscr{S}$ if and only if 
$\mscr{R} \subs \mscr{S}$.

The \deft{quotient of $X$ modulo $\mscr{R}$}, denoted $\quotient{X}{\mscr{R}}$,
is the set of equivalence classes of~$X$ under~$\mscr{R}$.
Notice that~$[x]$ is an element of $\quotient{X}{\mscr{R}}$, but a subset of~$X$.
Since the equivalence classes are subsets of $X$ that together cover $X$, $\quotient{X}{\mscr{R}}$
is a partition of~$X$ with equivalence classes being the blocks of the partition.
The family of partitions of $X$ is in a one-to-one correspondence with the equivalence relations
of $X$, and the correspondence is order preserving; 
if $\mcal{A} = \quotient{X}{\mscr{A}}$ and $\mcal{B} = \quotient{X}{\mscr{B}}$, we have
\[
\mcal{A} \subp \mcal{B} \ \Leftrightarrow \ \mscr{A} \subs \mscr{B}.
\]

Both $\Part{X}$ and $\Eqs{X}$ have top- and bottom elements:
The least element of $\Part{X}$ is the singleton partition $S(X)$, where each element is in
a block by itself: $S(X)=\{\{x\} \, | \, x \in X\}$. 
The singleton partition corresponds to the diagonal equivalence
relation, given by $\Delta(X) = \{(x,x) \, | \, x \in X\}$, which is the least element of $\Eqs{X}$.
The greatest element of $\Part{X}$ is the trivial partition $\{X\}$, corresponding to the equivalence
relation~$X \times X$, where all element are equivalent. That is
\begin{align*}
S(X) &= \quotient{X}{\Delta(X)} & \text{and} && \{X\} &= \quotient{X}{(X \times X)}.
\end{align*}

If $\mcal{A}$ and $\mcal{B}$ are partitions of $X$ with $\mcal{A}$ being a refinement of $\mcal{B}$, we
say that $\mcal{A}$ is \deft{finer} than $\mcal{B}$, and that $\mcal{B}$ is \deft{coarser} than $\mcal{A}$. 
We use the exact same terminology for the corresponding equivalence relations.

For a subset $A \subs X$, let the notation $\quotient{X}{A}$ denote the partition of $X$
where all of $A$ is one equivalence class, and the rest of $X$ remains as singletons. Formally,
this corresponds to the equivalence relation $\mscr{R}_{\!A} = \Delta(X) \cup (A \times A)$.
And finally, the \deft{quotient map} corresponding to an equivalence relation $\mscr{R} \in \Eqs{X}$
is the unique map $q_\mscr{R}:X \to \quotient{X}{\mscr{R}}$ defined as $q_\mscr{R}(x)=[x]$.
That is, $q_\mscr{R}$ sends each element to its equivalence class.

\subsection{Classical hierarchical clustering} \label{section:classical-HC}
In this section, we recall classical hierarchical clustering in terms of \citet{Jardine1971}.
Our theory builds directly on the theory for classical hierarchical clustering, 
so we need to provide a fair bit of detail, especially in view of the fact that there is a general
lack of standardised notation for hierarchical clustering theory, and that the level of formality in
definitions and notation varies among publications.

We start by recalling the formal definition of a dendrogram,
before recalling dissimilarity measures and ultrametrics. Thereafter, we recall linkage functions,
and at the end of the section, we tie all the concepts together and provide a definition of
classical hierarchical agglomerative clustering.

\begin{definition} \label{def:clustering}
A \deft{clustering} of a set $X$ is a partition of $X$, and a \deft{hierarchical clustering} is a 
chain in $\Part{X}$ containing both the bottom and top elements. 
A \deft{cluster} in a clustering is a block in the partition. 
\end{definition}

\begin{example}
For the three-element space $X=\{a,b,c\}$, the lattice of partitions takes
the form of the below Hasse diagram.
\begin{center}
\input{three_elt_partitions}
\end{center}
The elements in bold make up a chain in $\Part{X}$ that contains both the bottom- and top
elements, and therefore constitutes a hierarchical clustering of $X$.
\end{example}

Alternatively, a clustering of $X$ is an equivalence relation $\mscr{R} \in \Eqs{X}$, and a hierarchical
clustering is a chain in $\Eqs{X}$ containing both the bottom- and top elements of $\Eqs{X}$. A cluster
is, then, an equivalence class in $\quotient{X}{\mscr{R}}$.
We will refer to clusters as equivalence classes, clusters or blocks depending on the context, all terms being 
frequently used in clustering literature.
 
\subsubsection{Dendrograms}
For the remainder of the paper, let $\R_+$ denote the non-negative reals. We generally assume that
$\R_+$ is equipped with the usual total order $\le$.

Now, for a set $X$, let $\Part{X}$ be partially ordered by partition refinement, and let
$\theta : \R_+ \to \Part{X}$ be an order preserving map.
Consider the following list of possible properties of $\theta$:
\begin{description}
  \item[D$1$.] $\forall t \in \R_+ \, \exists \vareps > 0 \ \text{s.t.} \ \theta(t) = \theta(t + \vareps)$.
  \item[D$2$.] $\exists t_0 > 0 \ \text{s.t.} \ \theta(t_0) = \{X\}$, 
    the greatest element of $\Part{X}$, 
  \item[D$3$.] $\theta(0) = S(X)$, the least element of $\Part{X}$. 
\end{description}
 
If $\theta$ satisfies D$1$, then $\theta$ corresponds to what \citet{CarlssonMemoli2013} refers to as
a \emph{persistent set}. If $\theta$ satisfies D$1$ and D$2$, then $\theta$ is what \citet{Jardine1971}
refers to as a \emph{numerically stratified dendrogram}, and if $\theta$ also satisfies D$3$,
then Jardine and Sibson refer to $\theta$ as a \emph{definite numerically stratified dendrogram}.
Furthermore, the concept we have referred to as partial dendrograms
corresponds to $\theta$ satisfying D$1$ and D$3$, so a partial dendrogram is the same as a definite
persistent set.

It is the authors' impression that the current use of the term dendrogram in conjunction
to classical hierarchical clustering mainly covers what Jardine and Sibson call a definite numerically
stratified dendrogram. We thus land on the following definitions:

\begin{definition} \label{def:dendrogram}
  A \deft{dendrogram over $X$} is an order preserving map $\theta : \R_+ \to \Part{X}$ satisfying
  axioms D$1$, D$2$ and D$3$. If $\theta$ satisfies D$1$ and D$3$, we call $\theta$ a
  \deft{partial dendrogram over $X$}.
\end{definition}

We will use the term dendrogram to denote both the graphical and the functional representation.
If $\image{\theta}=\{\mcal{B}_i\}_{i=0}^n$, we assume that the enumeration is compatible
with the order relation on $\Part{X}$; in other words, that $\{\mcal{B}_i\}_{i=0}^n$ is a chain in $\Part{X}$.
We denote \deft{the family of all dendrograms over $X$} by~$\D(X)$, and
\deft{the family of all partial dendrograms over $X$} by~$\PD(X)$.

\subsubsection{Dissimilarity measures and ultrametrics}
A \deft{dissimilarity measure} on a set $X$ is a function $d:X \times X \to \R_+$, satisfying
\begin{description}
\item[$d1.$] $\forall x \in X \, : \, d(x,x) = 0$,
\item[$d2.$] $\forall x,y \in X \, : \, d(x,y) = d(y,x)$.
\end{description}
If $d$ additionally satisfies
\begin{description}[resume]
\item[$d3.$] $\forall x,y,z \in X \, : \, d(x,z) \le \max\{d(x,y), d(y,z)\}$,
\end{description}
we call $d$ an \deft{ultrametric} \citep{Rammal1986}.
The pair $(X,d)$ is correspondingly called a \deft{dissimilarity space}
or an \deft{ultrametric space}. The family of all dissimilarity measures over $X$ is
denoted by~$\mcal{M}(X)$, and the family of all ultrametrics by $\U(X)$.

\begin{example}[Ultrametric] \label{ex:ultrametric-inequality}
Property $d3$ is referred to as the \deft{ultrametric inequality}, and is a
strengthening of the usual triangle inequality. In an ultrametric space $(X,\ultra)$, \emph{every triple of 
points is arranged in an isosceles triangle:} Let $a,b,c \in X$,
and let the pair $a,b$ be of minimal distance such that $\ultra(a,b) \le \min \{\ultra(a,c),\ultra(b,c)\}$. 
The ultrametric inequality gives us
\[
\begingroup
\setlength{\arraycolsep}{2px}
\left.
\begin{array}{rcccl}
\ultra(a,c) &\le& \max \{ \ultra(a,b), \ultra(b,c) \} &=& \ultra(b,c) \\
\ultra(b,c) &\le& \max \{ \ultra(b,a), \ultra(a,c) \} &=& \ultra(a,c) 
\end{array}
\right\} \ \Leftrightarrow \ \ultra(a,c) = \ultra(b,c).
\endgroup
\]
\end{example}

%In general, $(X,\ultra)$ is not Euclidean. However, the discrete metric is an
%ultrametric, so any set of equidistant points in $\R^n$ makes an Euclidean, ultrametric subspace of $\R^n$. 
Ultrametrics show up in many different contexts, such as $p$-Adic number theory \citep{Holly2001},
infinite trees \citep{Huges2004}, numerical taxonomy \citep{Sneath1973} 
and also within physics \citep{Rammal1986}, just to cite a few.
For hierarchical clustering, ultrametrics are relevant because the dendrograms over a set are
in a bijective relation to the ultrametrics over the same set \citep{CarlssonMemoli2010}.

We shall also need the following terms, which apply to any dissimilarity space: 
The \deft{diameter} of $(X,d)$ is given by the maximal inter-point distance:
\[
\diam(X,d) \ = \ \max \{ \, d(x,y) \, | \, x,y \in X \, \}.
\]
And the \deft{separation} of $(X,d)$ is the minimal inter point distance:
\[
\sep(X,d) \ = \ \min \{ \, d(x,y) \, | \, x,y \in X \land x \ne y \, \}.
\]

It is a well known fact that there exists an injective map from
dendrograms to ultrametrics \citep{Jardine1971}:
\[
\Psi_X : \D(X) \longto \U(X).
\]
In \citep{CarlssonMemoli2010} the map $\Psi_X$ is shown to be a bijection.
If $\theta \in \D(X)$, the map is defined as
\begin{equation} \label{eqn:Psi-X-def}
  \Psi_X(\theta)(x,y) \, = \, \min \{\, t \in \R_+ \, | \, \exists B \in \theta(t) \, : \, x,y \in B \, \}.
\end{equation}
That is, the ultrametric distance is the least real number $t$ for which $\theta$ maps to a
partition where $x$ and $y$ are in the same block. 
The minimisation is well defined due to Axiom~D$1$.
The ultrametric can be read from the diagrammatic representation of the dendrogram as
the minimum height you have to ascend to in order to traverse from one element
to the other following the paths in the tree. 

\subsubsection{Classical hierarchical clustering}
\label{section:classical-hc}
We first need to recall linkage functions.
Our definition follows the lines of \citet{CarlssonMemoli2010}:
\begin{definition} \label{def:linkage-function}
  Let $\pow{X}$ denote the power set of $X$.
  A \deft{linkage functions} on $X$ is a map
  \[
  \Link :  \pow{X} \times \pow{X} \times \M(X) \longto \R_+,
  \]
  so that for each partition $Q \in \Part{X}$ and dissimilarity measure $d \in \mcal{M}(X)$,
  the restriction $\Link|_{Q \times Q \times \{d\}}$ is a dissimilarity measure on $Q$.

  \medskip
  
  The classical linkage functions are defined as
  \begin{flalign*}
    \setlength{\arraycolsep}{5pt}
    \begin{array}{lclcl}
      \text{{\bf Single linkage}} &:&
      \SLink(p,q,d) & \!\!\!\! = \!\!\!\! &
      \min_{x \in p} \min_{y \in q} d(x,y),  \\[.9em]
      \text{{\bf Complete linkage}} &:&
      \CLink(p,q,d) & \!\!\!\! = \!\!\!\! &
      \max_{x \in p} \max_{y \in q} d(x,y),  \\[.3em]
      \text{{\bf Average linkage}} &:&
      \ALink(p,q,d) & \!\!\!\! = \!\!\!\! &
      \dfrac{\sum_{x \in p} \sum_{y \in q} d(x,y)}{\card{p} \cdot \card{q}}.
    \end{array} & &
  \end{flalign*}
\end{definition}

\begin{definition}[Classical $\HC$] \label{def:classical-HC}
Given a dissimilarity space $(X,d)$ and a linkage function $\Link$, if we follow the procedure outlined in
Section~\ref{section:HC-at-a-glance}, using $\Link$ as the ``notion of dissimilarity'',
the result is a chain
of partitions $\{Q_i\}_{i=1}^{\card{X}-1}$ together with the dissimilarities $\{\rho_i\}_{i=1}^{\card{X}-1}$
at which the partitions were formed.
The sequence of pairs $\mcal{Q} = \{(Q_i,\rho_i)\}_{i=1}^{\card{X}-1}$ corresponds uniquely to a
dendrogram $\theta_\mcal{Q}$ as follows:
\begin{equation} \label{eqn:dendrogram-from-chain}
  \theta_\mcal{Q}(x) = Q_{\max\{i \in \N \, | \, \rho_i \le x\}}.
\end{equation}
We define a \deft{classical hierarchical clustering of $(X,d)$ using $\Link$} to be a dendrogram
\[
\HC^\Link(X,d) \ = \ \theta_\mcal{Q}
\]
obtained through this procedure.
\end{definition}

\begin{remark} \label{remark:monotonicity}
  Notice that~\eqref{eqn:dendrogram-from-chain} maps $\{(Q_i,\rho_i)\}_{i=1}^{\card{X}-1}$ to a dendrogram
  if and only if
  \begin{equation} \label{eqn:lower-monotone}
    \sep(Q_i,\Link) \le \sep(Q_{i+1},\Link) \quad \itfor \ 0 \le i < \card{X}-1.
  \end{equation}
Otherwise, the $\rho_i$ will not make up a monotone sequence, and the resulting function $\theta_\mcal{Q}$
will not be an order preserving map. Although all of $\SLink$, $\ALink$ and $\CLink$ 
satisfy~\eqref{eqn:lower-monotone}, it is fully possible to define linkage functions that do not.
\end{remark} 

At any point during the clustering process, if we encounter a partition
$Q$ with two distinct pairs of elements $(p_1,q_1),(p_2,q_2) \in Q \times Q$ for which
\[
\Link(p_1,q_1,d) \, = \, \Link(p_2,q_2,d) \, = \, \sep(Q,\Link),
\]
we say that the two connections are \deft{tied}, since they are both eligible candidates for
the next merge.
It is well known that $\HC^\SLink$ is invariant with respect to the order of resolution of
ties \citep{Jardine1971}, a property referred to as being \deft{permutation invariant},
a characteristic shared by neither $\HC^\ALink$ nor $\HC^\CLink$.

\section{Optimised hierarchical clustering}
\label{section:optimised-hc}

In this section we devise a permutation invariant version of hierarchical clustering based on the classical
definition.
%% First, we define a family of dendrograms over which we wish to optimise.
%% We then define the best solution to be the dendrogram from this family that closest resembles the initial
%% dissimilarity measure.
%%
%% \smallskip
%%
The key to permutation invariance is in dealing with tied connections.
If we consider the procedure for hierarchical clustering outlined in Section~\ref{section:HC-at-a-glance},
we can resolve tied connections by picking a random minimal dissimilarity pair. The way the
procedure is specified, this turns $\HC^\Link$ into a \emph{non-deterministic} algorithm;
it may produce different dendrograms for the same input in the presence of ties, depending on which tied
pair is selected. But more importantly, it is capable of producing \emph{any} dendrogram that can be
produced by \emph{any} tie resolution order:

\begin{definition}
Given a dissimilarity space $(X,d)$ and a linkage function $\Link$, 
let $\D^\Link(X,d)$ be \deft{the set of all possible outputs from $\HC^\Link(X,d)$}.
\end{definition}

A dissimilarity measure $d$ over a finite set $X$ can be described as 
an $\card{X} \times \card{X}$ real matrix $[d_{i,j}]$. Hence, given an ultrametric $\ultra \in \U(X)$
we can compute the pointwise difference
\begin{equation} \label{eqn:p-norm}
  \norm{\ultra - d}_p \ = \ \sqrt[\leftroot{0}\uproot{10}{\displaystyle p}]{ 
    \sum_{x,y \in X} \abs{\ultra(x,y) - d(x,y)}^p
  }.
\end{equation} 

We suggest the following definition, recalling the definition of $\Psi_X$~\eqref{eqn:Psi-X-def}:
\begin{definition} \label{def:opt-HC}
Given a dissimilarity space $(X,d)$ and a linkage function $\mcal{L}$,
\deft{the optimised hierarchical agglomerative
clustering over $(X,d)$ using $\mcal{L}$} is given by
\begin{equation} \label{eqn:opt-HC}
\HC_\opt^\mcal{L}(X,d) \ = \ \argmin_{\theta \in \D^\Link(X,d)} \norm{\Psi_X(\theta) - d}_p.
\end{equation}
\end{definition}

That is; among all dendrograms that can be generated by $\HC^\Link(X,d)$,
optimised hierarchical agglomerative clustering picks the dendrogram that is 
closest to the original dissimilarity measure.
In the tradition of ultrametric fitting, this is the right choice of candidate.

As $\D^\Link(X,d)$ contains all dendrograms generated over all possible
permutations of enumerations of $X$, the below theorem follows
directly from Definition~\ref{def:opt-HC}:

\begin{theorem} \label{thm:permutation-invariance}
$\HC_\opt^\Link$ is permutation invariant. That is, the order of enumeration of the elements of the
set $X$ does not affect the output from $\HC_\opt^\Link(X,d)$.
\end{theorem}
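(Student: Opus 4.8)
The plan is to unwind the definitions and observe that neither the feasible set nor the objective in the optimisation~\eqref{eqn:opt-HC} depends on the enumeration of $X$; the statement then falls out immediately, as the remark preceding it anticipates.

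First I would pin down precisely where the enumeration order enters classical hierarchical clustering. Inspecting the procedure of Section~\ref{section:HC-at-a-glance}, the only source of non-determinism is Step~\ref{step:glance-pick}: when several connections are tied at the current separation $\sep(Q,\Link)$, one of them must be chosen. Fixing an enumeration of the elements of $X$ is tantamount to fixing a deterministic rule for resolving such ties (for instance, always merging the lexicographically first tied pair), and each such rule selects one particular dendrogram from among the possible outputs of $\HC^\Link(X,d)$.

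Next I would argue that the feasible set $\D^\Link(X,d)$ is manifestly independent of the enumeration. By definition it is the set of \emph{all} dendrograms obtainable from $\HC^\Link(X,d)$ under \emph{every} tie-resolution order. Whatever deterministic rule a given enumeration induces, its output lies in $\D^\Link(X,d)$; conversely, every member of $\D^\Link(X,d)$ is realised by some tie-resolution order, hence by some enumeration. Thus a change of enumeration neither adds nor removes candidates: the feasible set in~\eqref{eqn:opt-HC} is one and the same set. I would then observe that the objective $\theta \mapsto \norm{\Psi_X(\theta) - d}_p$ depends only on $\theta$ and $d$: the map $\Psi_X$ is defined intrinsically from $\theta$ by~\eqref{eqn:Psi-X-def}, and the $p$-norm~\eqref{eqn:p-norm} is a sum of the terms $\abs{\Psi_X(\theta)(x,y) - d(x,y)}^p$ over all pairs $x,y \in X$, whose total is unaffected by the order in which the pairs are summed.

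Combining these observations, the $\argmin$ in~\eqref{eqn:opt-HC} is taken over a fixed feasible set against a fixed objective, neither of which references the enumeration, so the minimiser cannot depend on it. The only point requiring care — and the nearest thing to an obstacle — is that the $\argmin$ may be set-valued when several dendrograms attain the minimal $p$-norm. In that case I would read $\HC_\opt^\Link(X,d)$ as the set of optimal dendrograms, which is itself enumeration-invariant by the argument above; if a single representative is desired, one need only check that the selection rule applied among the optimisers is likewise enumeration-free. With that caveat noted, the theorem follows directly from Definition~\ref{def:opt-HC}.
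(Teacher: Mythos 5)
Your proposal is correct and follows exactly the paper's own argument: the paper notes that since $\D^\Link(X,d)$ contains all dendrograms generated over all possible permutations of enumerations of $X$, the result follows directly from Definition~\ref{def:opt-HC}. You merely spell out the same observation (enumeration-independence of the feasible set and of the objective) in more detail, and your remark about the set-valued $\argmin$ matches the paper's own later treatment of $\HC_\opt^\Link(X,-)$ as a map into $\pow{\D(X)}$.
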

And since $\HC^\SLink$ is permutation invariant, we have $\big| \D^\SLink(X,d) \big| = 1$, yielding
\begin{theorem} \label{thm:SL-is-optimised}
$\HC_\opt^\SLink(X,d) = \HC^\SLink(X,d)$.
\end{theorem}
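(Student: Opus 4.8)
The plan is to observe that the statement is an immediate corollary of the permutation invariance of single linkage: that property collapses $\D^\SLink(X,d)$ to a one-element set, at which point the minimisation in \eqref{eqn:opt-HC} is vacuous and simply returns the unique available dendrogram.

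First I would unfold the definition of $\D^\SLink(X,d)$ as the collection of \emph{all} dendrograms that the non-deterministic procedure $\HC^\SLink(X,d)$ can emit. The only source of non-determinism in that procedure is the choice of which tied connection to merge at Step~\ref{step:glance-pick}, and each such choice is realised by some enumeration of the elements of $X$ that induces the corresponding tie-breaking order. Thus $\D^\SLink(X,d)$ is precisely the set of outputs obtained as the enumeration of $X$ ranges over all permutations.

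Next I would invoke the classical fact that $\HC^\SLink$ is invariant under the order of resolution of ties \citep{Jardine1971} --- equivalently, by Theorem~\ref{thm:permutation-invariance}, that its output is unaffected by the enumeration of $X$. Combined with the previous step, this forces all elements of $\D^\SLink(X,d)$ to coincide, so that $\D^\SLink(X,d) = \{\HC^\SLink(X,d)\}$ is a singleton.

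Finally, since the $\argmin$ in \eqref{eqn:opt-HC} ranges over a singleton, it returns that single element regardless of the value taken by $\norm{\Psi_X(\theta) - d}_p$; hence $\HC_\opt^\SLink(X,d) = \HC^\SLink(X,d)$. The only point requiring genuine care --- and the nearest thing to an obstacle in an otherwise purely formal argument --- is the identification in the second paragraph of ``all tie-resolution orders'' with ``all permutations of $X$'', i.e.\ making precise that permutation invariance in the sense of Theorem~\ref{thm:permutation-invariance} really does exhaust the non-determinism captured by $\D^\SLink(X,d)$; once that is granted, every remaining step is immediate.
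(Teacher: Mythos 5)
Your argument is correct and is exactly the route the paper takes: it deduces $\big|\D^\SLink(X,d)\big| = 1$ from the permutation invariance of $\HC^\SLink$, whence the $\argmin$ in~\eqref{eqn:opt-HC} trivially returns the unique dendrogram. The caveat you raise about identifying tie-resolution orders with permutations of $X$ is the same implicit identification the paper relies on, so there is no divergence in substance.
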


Since $\HC^\ALink$ and $\HC^\CLink$ are not permutation invariant, there is no
corresponding result in these cases.
For complete linkage, however, we have the following theorem. First, notice that due to the definition
of complete linkage (Definition~\ref{def:linkage-function}), 
if $\theta$ is a solution to $\HC_\opt^\CLink(X,d)$ and $\ultra = \Psi_X(\theta)$ is the corresponding
ultrametric, then
\[
\ultra(x,y) \ge d(x,y) \quad \forall x,y \in X.
\]
Hence, in the case of complete linkage we can reformulate~\eqref{eqn:opt-HC} as follows:
\begin{equation} \label{eqn:opt-HC-CL}
\HC_\opt^\CLink(X,d) \ = \ \argmin_{\theta \in \D^\CLink(X,d)} \norm{\Psi_X(\theta)}_p.
\end{equation}
To see why this is the case,
notice that if $u,u' \in \mcal{M}(X)$ and both $d \le u$ and $d \le u'$ pointwise, then
we can produce two non-negative functions $\delta,\delta'$ on $X \times X$ so that
$u = d + \delta$ and $u' = d + \delta'$. In particular, we have $u-d = \delta$,
from which we deduce
\begin{gather*}
  \norm{u-d}_p \le \norm{u'-d}
  \ \Leftrightarrow \
  \norm{\delta}_p \le \norm{\delta'}_p
  \ \Leftrightarrow \
  \norm{d + \delta}_p \le \norm{d + \delta'}_p
  \ \Leftrightarrow \
  \norm{u}_p \le \norm{u'}_p.
\end{gather*}

\begin{theorem} \label{thm:CL-is-NP-hard}
Solving $\HC_\opt^\CLink(X,d)$ is NP-hard.
\end{theorem}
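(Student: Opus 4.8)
The plan is to exploit the reformulation in \eqref{eqn:opt-HC-CL}, so that for complete linkage it suffices to minimise $\norm{\Psi_X(\theta)}_p$ over $\theta \in \D^\CLink(X,d)$, and to reduce a partition problem to this minimisation. Given a graph $G=(V,E)$ I would set $X=V$ and define the dissimilarity $d(x,y)=1$ when $\{x,y\}\in E$ and $d(x,y)=2$ otherwise (with $d(x,x)=0$); this is a valid dissimilarity measure taking values in $\{1,2\}$.

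First I would characterise $\D^\CLink(X,d)$ for such an instance. Since $\CLink$ is the maximal cross-distance, two blocks can merge at level $1$ only when every cross pair has distance $1$, i.e.\ when their union is a clique of $G$; hence a block built entirely at level $1$ is exactly a clique of $G$. The greedy procedure run at level $1$ therefore terminates in a \emph{maximal} clique partition $\{C_1,\dots,C_k\}$ (no two blocks can be merged into a larger clique), and by choosing the tie-resolution order one should be able to realise any maximal clique partition. Every pair of distinct blocks then contains a cross pair at distance $2$, so all remaining merges happen at level $2$ and collapse everything into one cluster. Consequently $\Psi_X(\theta)(x,y)=1$ when $x,y$ lie in a common block and $2$ otherwise, whence
\[
\norm{\Psi_X(\theta)}_p^p \;=\; 2^{p+1}\binom{\card{X}}{2} \;-\; 2\,(2^{p}-1)\sum_{i}\binom{\card{C_i}}{2}.
\]
As $2^{p}-1>0$, minimising the left-hand side over $\D^\CLink(X,d)$ is equivalent to maximising $\sum_i \binom{\card{C_i}}{2}$ over clique partitions of $G$; and because a non-maximal partition can always be improved by one further merge, the global optimum is attained at a maximal partition and is thus realised by some $\theta \in \D^\CLink(X,d)$.

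It then remains to show that maximising $\sum_i \binom{\card{C_i}}{2}$, equivalently $\sum_i \card{C_i}^2$, over clique partitions is NP-hard. Here I would reduce from \deft{Partition into Triangles}, using the standard reduction (e.g.\ from $3$-dimensional matching) whose output is a $K_4$-free graph $G$ on $\card{X}=3m$ vertices. In a $K_4$-free graph every clique has at most three vertices, so each vertex contributes at most $3$ to $\sum_i \card{C_i}^2$, with equality exactly when it lies in a triangle; hence the value $\sum_i \card{C_i}^2 = 9m$ is achievable if and only if $G$ admits a partition into $m$ triangles. Composing the two reductions, computing $\min_{\theta}\norm{\Psi_X(\theta)}_p$ would decide Partition into Triangles, establishing NP-hardness.

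The step I expect to be the main obstacle is the exact characterisation of $\D^\CLink(X,d)$: one must argue both that greedy complete linkage can realise \emph{every} maximal clique partition under a suitable tie-resolution order, and that the optimum of the combinatorial objective is always maximal (hence reachable), so that nothing is lost by optimising only over the greedily reachable dendrograms. The $K_4$-free restriction in the second reduction is the other delicate point, since a single $K_4$ would over-contribute per vertex and destroy the clean threshold $9m$; I would therefore need to verify that the chosen reduction for Partition into Triangles indeed yields $K_4$-free instances (or preprocess to remove larger cliques).
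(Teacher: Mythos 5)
Your reduction instance and the observation that the level-$1$ blocks of a complete-linkage dendrogram over $(V,d)$ are exactly cliques of $G$ coincide with the paper's; from there the two arguments genuinely diverge. The paper reduces from \emph{Clique}: it argues by an exchange-of-elements computation on the counts of distance-$1$ entries that the norm-minimising dendrogram maximises the size of the largest level-$1$ block, so that the maximum clique size can be read off the optimal ultrametric. You instead write the objective in closed form, $\norm{\Psi_X(\theta)}_p^p=2^{p+1}\binom{\card{X}}{2}-2(2^p-1)\sum_i\binom{\card{C_i}}{2}$, and reduce the resulting maximisation of $\sum_i\binom{\card{C_i}}{2}$ over clique partitions to \emph{Partition into Triangles} on $K_4$-free graphs via the threshold $\sum_i\card{C_i}^2=9m$. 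Your two preparatory claims hold: every unmergeable clique partition is reachable by building each part one vertex at a time under suitable tie-breaking, and the combinatorial optimum is attained at an unmergeable partition because merging two parts whose union is a clique strictly increases $\sum_i\binom{\card{C_i}}{2}$. Your threshold argument is in fact the more robust of the two, since it never needs the optimal clique partition to contain a maximum clique as a block --- a property that superadditivity of $s\mapsto\binom{s}{2}$ alone does not guarantee, as splitting one large clique can free its vertices to complete several triangles elsewhere --- whereas the paper's exchange argument relies on exactly that kind of conclusion. The one step you must still discharge is the NP-hardness of Partition into Triangles restricted to $K_4$-free graphs: this is true and essentially classical (the standard gadget reduction from Exact Cover by $3$-Sets can be checked to produce $K_4$-free instances, since no created triangle acquires a common fourth neighbour), but it has to be cited or verified explicitly for the chain of reductions to close; "preprocessing away" larger cliques is not an option, so the burden sits entirely on the choice of source reduction.
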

\begin{proof}
Let $G=(V,E)$ be an undirected graph with vertices $V$ and edges $E \subs V \times V$.
Recall the \emph{clique} problem: Given a positive integer $K < |V|$, 
is there a clique in $G$ of size at least $K$? Equivalently: is there a set $V' \subs V$ with
$|V'| \ge K$ for which $V' \times V' \subs E$? This is a known NP-hard problem \citep{Karp1972}.

To reduce \emph{clique} to $\HC_\opt^\CLink$, define a dissimilarity measure on $V$ as follows:
\begin{equation} \label{eqn:clique-dissimilarity}
d(v,v') = 
\begin{cases}
  1 & \text{if $(v,v') \in E$},\\
  2 & \text{otherwise}.
\end{cases}
\end{equation}
Then $(V,d)$ is a dissimilarity space. Let $\theta$ be a solution of $\HC_\opt^\CLink(V,d)$,
and set $\ultrax{d} = \Psi_V(\theta)$.

An intrinsic property of $\CLink$ is that if two blocks $p,q \in Q_i$ are merged, then
\[
\forall v,v' \in p \cup q \ : \ d(v,v') \le \CLink(p,q,d).
\]
And since we have $d(v,v') = 1 \Leftrightarrow (v,v') \in E$, it means that 
for a subset $V' \subs V$, we have that
\begin{equation} \label{eqn:is-in-clique-condition}
\forall v,v' \lin V' \, : \, \ultrax{d}(v,v') = 1 
\ \Leftrightarrow \
\text{$V'$ is a clique in $G$}.
\end{equation}
It follows that a largest possible cluster at 
proximity level $1$ is a maximal clique in $G$.

We claim that minimising the norm is equivalent to
producing a maximal cluster at proximity level $1$:
Let $\ultrax{d}$ be the $\card{V} \times \card{V}$ distance matrix $[\ultrax{d}_{i,j}]$.
Due to the definition of $\CLink$, we have $\ultrax{d}(v,v') \in \{0,1,2\}$.
If $\theta(1) = \{V_i\}_{i=1}^s$, then these are exactly the blocks that are subsets of cliques,
so each $V_i$ contributes with $|V_i|(|V_i|-1)$ ones in $[\ultrax{d}_{i,j}]$. 

Having more ones reduces the norm of $\ultrax{d}$. 
Let $V_j$ be of maximal cardinality in~$\{V_i\}_{i=1}^s$.
Assume first that $V_j$ has at least two elements more than the next to largest block,
and let $|V_j|=P$.

Removing one element from $V_j$ reduces the number of ones in the dissimilarity matrix by
$P(P-1)-(P-1)(P-2)=2(P-1)$. Let the next to largest block have $Q$ elements. Transferring the element
to this block then increases the number of ones by $(Q+1)Q - Q(Q-1)=2Q$. Since $Q < P-1$,
this means that the total number of ones is reduced by moving an element from the largest block to any
of the smaller blocks. Hence, achieving the largest possible number of ones implies maximising the size of the 
largest block.

If now, $V_j$ only has one element more than the next to largest block, 
moving an element as above corresponds to keeping the number of ones.
Since each $V_i$ for $1 \le i \le s$ is a subset of a clique in $G$,
the maximal number of ones is achieved by producing a block $V_j$ that contains exactly a
maximal clique of $G$.

Therefore, if $\mcal{I}_{\{1\}}(x)$ is the indicator function for the set $\{1\}$,
the size of a maximal clique in $G$ can be computed as
\[
\max_{1 \le i \le \card{V}} \Big\{ 
  \sum_{j=1}^\card{V} \mcal{I}_{\{1\}}\big( \ultrax{d}_{i,j} \big)
\Big\},
\]
counting the maximal number of row-wise ones in $[\ultrax{d}_{i,j}]$ in $O(N^2)$ time.
We therefore conclude that $\HC_\opt^\CLink$ is NP-hard.
\spqed
\end{proof}

\boldbox{%
The computational hardness of \boldmath{$\HC_\opt^\CLink$} is directly connected to the presence of tied
connections: every encounter of \boldmath{$n$} tied connections leads to~\boldmath{$n!$} 
new candidate solutions.
}%
Since neither $\HC_\opt^\ALink$ is permutation invariant, the authors strongly believe that this is
also NP-hard, although that remains to be proven.

\bigskip

We cannot in general expect the mapping $\theta \mapsto \norm{\Psi_X(\theta) - d}_p$
to be injective, meaning that the answer to~\eqref{eqn:opt-HC} may not be unique.
Recall that $\pow{X}$ denotes the power set of $X$.
We shall consider $\HC_\opt^\Link(X,-)$ to be the function
\[
\HC_\opt^\Link(X,-) :\mcal{M}(X) \longto \pow{\D(X)},
\]
mapping a dissimilarity measure over $X$ to a set of dendrograms over $X$.

\subsection{Other permutation invariant solutions}
\citet{CarlssonMemoli2010} offer an alternative approach to permutation invariant hierarchical agglomerative
clustering. In their solution, when they face a set of tied connections, they merge all tied the pairs
in one operation, resulting in permutation invariance.

In the case of order preserving clustering, a family of tied connections can contain several mutually
exclusive merges due to the order relation. Using the method of \CarlssonMemoli{} leads to a problem
of figuring which blocks of tied connections to merge together, and in which combinations and order.
This leads to a combinatorial explosion of alternatives. The method we have suggested is utterly simple,
but it is designed to circumvent this very problem.

\section{Order preserving clustering}
\label{section:order-theory}
In this section, we determine what it means for an equivalence relation to be order preserving
with regards to a strict partial order, and establish precise conditions that are necessary and sufficient for 
a hierarchical agglomerative clustering algorithm to be order preserving.

\subsection{Order preserving equivalence relations}

Recalling the definition of a clustering (Definition~\ref{def:clustering}),
let $(X,<)$ be a strict poset. If $\mscr{R}$
is an equivalence relation on $X$ with quotient map $q:X \to \quotient{X}{\mscr{R}}$, we have
already established, in Section~\ref{section:motivating-use-case}, that we require
\[
\forall x,y \lin X \, : \, x<y \, \Rightarrow \, q(x) <' q(y).
\]

\boldbox{%
  That is, we are looking for a particular class of equivalence relations;
  namely those for which the quotient map is order preserving.%
}%

Given a strict poset $(X,E)$, there is a particular induced relation on the quotient
set $\quotient{X}{\mscr{R}}$ for any equivalence relation $\mscr{R} \in \Eqs{X}$ \citep[\S 3.1]{Blyth2005}:
\begin{definition} \label{def:induced-relation}
  Given a strict poset $(X,E)$ and an equivalence relation $\mscr{R} \in \Eqs{X}$,
  first define the relation $S_0$ on $X$ by
  \begin{equation} \label{eqn:induced-relation}
    ([a],[b]) \in S_0 \, \Leftrightarrow \,
    \exists x,y \in X \, : \, a \sim x \land b \sim y \land (x,y) \in E. 
  \end{equation}
  The transitive closure of $S_0$ is called
  \deft{the relation on $\quotient{X}{\mscr{R}}$ induced by $E$}.
  We denote this relation by~$S$.
\end{definition}

\newcommand{\triline}{{%
\arrow[dd,-,crossing over,shift left=0]%
\arrow[dd,-,shift left=1.5,crossing over]%
\arrow[dd,-,shift right=1.5,crossing over]}}%
\newcommand{\lessline}{{\arrow[rruu,->,crossing over]}}

\begin{example} \label{ex:fence}
An instructive illustration of what the relation $S_0$ looks like for a strict poset $(X,<)$ 
under the equivalence relation $\mscr{R}$ is that of an \deft{$\mscr{R}$-fence} \citep{Blyth2005},
or just fence, for short:
\begin{center}
\begin{tikzcd}
b_1 \triline & & b_2 \triline      &                      & b_{n-1} \triline  & & b_n \triline \\
             & &                   & \cdots \arrow[ru,->] &                   & &              \\
a_1 \lessline & & a_2 \arrow[ru,-] &                      & a_{n-1} \lessline & & a_n
\end{tikzcd}
\end{center}
Triple lines represent equivalences under $\mscr{R}$, 
and the arrows represent the order on \hbox{$(X,<)$}. The fence illustrates visually how one
can traverse from~$a_1$ to~$b_n$ along arrows and through equivalence classes in $\quotient{X}{\mscr{R}}$, 
and in that case we say that the fence \deft{links}~$b_1$ to~$a_n$.
\textbf{\boldmath{The induced relation $S$ has the property that $(a,b) \in S$
  if there exists an $\mscr{R}$-fence in $X$ linking $a$ to $b$.}}
\end{example}

Recall that a cycle in a relation $R$ is a sequence of pairs starting and ending
with the same element: $(a,b_1),(b_1,b_2),\ldots,(b_n,a)$.
The below theorem is an adaptation of \citep[Thm.3.1]{Blyth2005} to strict partial orders.
\begin{theorem} \label{thm:regular-equivalence-relation}
Let $(X,E)$ be a strict poset, $\mscr{R} \in \Eqs{X}$, and let $S$ be the relation on
$\quotient{X}{\mscr{R}}$ induced by $E$. Then the following statements are equivalent:
\begin{enumerate}
\item $S$ is a strict partial order on $\quotient{X}{\mscr{R}}$; \label{item:spo}
\item There are no cycles in $S_0$;       \label{item:acyclic}
\item $q_\mscr{R} : (X,E) \longto (\quotient{X}{\mscr{R}},S)$ is order preserving. \label{item:q-preserves}
\end{enumerate}
\end{theorem}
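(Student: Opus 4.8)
The plan is to prove the three equivalences by a short cycle of implications, $(\ref{item:spo}) \Rightarrow (\ref{item:acyclic}) \Rightarrow (\ref{item:q-preserves}) \Rightarrow (\ref{item:spo})$, after first recording the one structural fact that carries most of the weight: since $S$ is \emph{defined} as the transitive closure of $S_0$, it is transitive for free, so $S$ is a strict partial order exactly when it is irreflexive. The whole statement thus collapses to relating irreflexivity of $S$, acyclicity of $S_0$, and order preservation of $q_\mscr{R}$. A second observation I would flag at the outset is that $S_0 \subseteq S$, and that for any $(x,y) \in E$ we get $([x],[y]) \in S_0$ by using $(x,y)$ itself as the witnessing pair in~\eqref{eqn:induced-relation}. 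Hence $q_\mscr{R}$ always sends $E$-related elements to $S$-related classes, so the only genuine content in ``order preserving'' is \emph{strictness}, i.e.\ that comparable elements land in distinct classes.

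For $(\ref{item:spo}) \Rightarrow (\ref{item:acyclic})$ I would argue contrapositively: a cycle $([a],[b_1]),\dots,([b_n],[a])$ in $S_0$ is, by definition of the transitive closure, an $S_0$-path from $[a]$ back to itself, so it witnesses $([a],[a]) \in S$ and contradicts irreflexivity. For $(\ref{item:acyclic}) \Rightarrow (\ref{item:q-preserves})$ I would first upgrade acyclicity to a strict order: were $([a],[a]) \in S$, unfolding the transitive closure would produce an $S_0$-path from $[a]$ to $[a]$, i.e.\ a cycle, a contradiction; so $S$ is irreflexive and, with transitivity already in hand, a strict partial order. Then $q_\mscr{R}$ is a map into the strict poset $(\quotient{X}{\mscr{R}},S)$ sending each $(x,y) \in E$ to $([x],[y]) \in S$, and irreflexivity forces $[x] \ne [y]$, so $q_\mscr{R}$ is order preserving in the sense of the definition.

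The remaining leg $(\ref{item:q-preserves}) \Rightarrow (\ref{item:spo})$ is where I would be most careful, and it is the main subtlety rather than a computation: the paper's notion of an order-preserving map $f : (X,<_X) \to (Y,<_Y)$ presupposes that the codomain is a strict poset, so merely asserting that $q_\mscr{R}$ is order preserving already entails that $(\quotient{X}{\mscr{R}},S)$ is a strict poset, i.e.\ that $S$ is a strict partial order. I would make this presupposition explicit so that the implication is neither vacuous nor circular. One edge case worth isolating for intuition is the length-one cycle: a self-loop $([a],[a]) \in S_0$ corresponds exactly to two comparable elements of $X$ lying in a common $\mscr{R}$-class, the simplest obstruction to strictness; for longer cycles I would claim no single within-class edge, since there the obstruction is global and is cleanest to read off through irreflexivity of the transitive closure. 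Throughout, I would lean on the fence picture of Example~\ref{ex:fence} to interpret cycles in $S_0$ concretely as closed fences, which makes all three equivalences intuitively transparent.
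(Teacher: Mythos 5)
Your first two legs are sound and in substance match the paper's: since $S$ is the transitive closure of $S_0$ it is transitive for free, so statement~\ref{item:spo} reduces to irreflexivity of $S$; a cycle in $S_0$ witnesses $([a],[a])\in S$, and conversely any failure of irreflexivity unfolds into a cycle in $S_0$, giving $\ref{item:spo}\Leftrightarrow\ref{item:acyclic}$; and once $S$ is known to be a strict order, $(x,y)\in E\Rightarrow([x],[y])\in S_0\subseteq S$ yields order preservation. (The paper organises the proof as $1\Leftrightarrow 2$ together with $3\Rightarrow 1$ and $1\Rightarrow 3$ rather than as a single cycle of implications, but that difference is cosmetic.)

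The gap is in your leg $\ref{item:q-preserves}\Rightarrow\ref{item:spo}$. You dispose of it by declaring that ``order preserving'' presupposes the codomain is a strict poset, so that statement~\ref{item:q-preserves} already contains statement~\ref{item:spo}. That reading trivialises the implication and is not the one the theorem uses: the paper's own proof of $3\Rightarrow 1$ is a genuine argument, and its proof of $1\Rightarrow 3$ makes the operational meaning of ``$q_\mscr{R}$ is order preserving'' explicit, namely that for every $(x,y)\in E$ one has $([x],[y])\in S$ \emph{and} $([y],[x])\notin S$ --- a condition on $S$ that is checkable without assuming in advance that $S$ is a strict order. Under that reading your implication still needs an argument, though a short one: suppose $S_0$ has a cycle $([a_0],[a_1]),([a_1],[a_2]),\ldots,([a_n],[a_0])$. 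The first link provides $x\sim a_0$ and $y\sim a_1$ with $(x,y)\in E$; chaining the remaining links through the transitive closure gives $([a_1],[a_0])=([y],[x])\in S$, contradicting order preservation at the edge $(x,y)$. The length-one case $([a],[a])\in S_0$ is your ``two comparable elements in one class'' and is covered by the same computation. With this inserted your cycle of implications closes; without it, the direction of the theorem that actually gets used --- deducing that the induced relation is a strict order from a verifiable property of the quotient map --- is left unproven.
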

\begin{proof}
From the definition of strict posets, they contain no cycles, so $1 \Rightarrow 2$.
Since a non-cyclic set is irreflexive, and since $S$ is transitive by construction, $2 \Rightarrow 1$.

Let $q_\mscr{R}$ be order preserving. Notice that if $S_0$ is the set defined 
in~\eqref{eqn:induced-relation}, we have $S_0 = q_\mscr{R} \times q_\mscr{R}(E)$.
In particular, for all $x,y \in X$ for which $(x,y) \in E$, we have $([x],[y]) \in S_0$.
Assume that $S$ is not a strict order. Then there is a cycle in $S_0$;
that is there are $x,y \in X$ for which $(x,y) \in E$, 
but $([y],[x]) \in S_0$ also. This yields
\[
\exists a',b' \in X \, : \, a' \sim x \land b' \sim y \land (b',a') \in E.
\]
But, since $([x],[y]) \in S_0$, we also have
\[
\exists a,b \in X \, : \, a \sim x \land b \sim y \land (a,b) \in E.
\]
This yields $a \sim a'$ and $b \sim b'$, so we have 
\[
\big( q_\mscr{R}(a),q_\mscr{R}(b) \big) \in S_0
\ \land \
q_\mscr{R}(b) = q_\mscr{R}(b')
\ \land \
\big( q_\mscr{R}(b'),q_\mscr{R}(a') \big) \in S_0.
\]
But, since we have both $q_\mscr{R}(a)=q_\mscr{R}(a')$ and $(a,b) \in E$,
this contradicts the fact that $q_\mscr{R}$ is order preserving, so our assumption that both
$([x],[y])$ and $([y],[x])$ are elements of $S_0$ must be wrong. 
Hence, if $q_\mscr{R}$ is order preserving, there are no cycles in $S_0$, 
and $S$ is a strict partial order on $\quotient{X}{\mscr{R}}$.
This shows that $3 \Rightarrow 1$.

Finally, let $S$ be a strict partial order, and assume that $q_\mscr{R}$ is not order preserving.
Then, there exists $x,y \in X$ where $(x,y) \in E$ and for which at least one of $([x],[y]) \not \in S$
or $([y],[x]) \in S$ holds. Now, $([x],[y]) \in S$ by Definition~\ref{def:induced-relation}.
Therefore,~$([y],[x]) \in S$ implies that~$S$ has a cycle, contradicting the fact that~$S$ is
a strict partial order. 
\spqed
\end{proof}

\begin{definition}\label{def:regular_equivalence_relation}
  Let $(X,E)$ be a strict poset.
  An equivalence relation $\mscr{R} \in \Eqs{X}$ is \deft{regular} if 
  there exists an order on $\quotient{X}{\mscr{R}}$ for which the quotient map is order preserving.
  We denote \deft{the set of all regular equivalence relations} over an 
  ordered set~$(X,<)$ by~$\Eqs{X,<}$. Likewise, the family of all \deft{regular partitions} of 
  $(X,<)$ is denoted $\Part{X,<}$.
\end{definition}

In general, we will denote the induced order relation for a strict poset $(X,<)$
and a regular equivalence relation $\mscr{R} \in \Eqs{X,<}$ by $<'$.

\subsection{The structure of regular equivalence relations}
We now establish a sufficient and necessary condition for an agglomerative clustering algorithm
to be order preserving.
Recall that, if $A \subs X$, $\quotient{X}{A}$ denotes the quotient for which
the quotient map $q_A : X \to \quotient{X}{A}$ sends all of $A$ to a point, and is the identity
otherwise. 
That is, for every $x,y \in X$, we have
\[
q_A(x) = q_A(y) \ \Leftrightarrow \ x,y \in A.
\]

\begin{theorem} \label{thm:one-point-quotient}
If $A \subs X$ for a strict poset $(X,<)$, the quotient map $q_A : X \to \quotient{X}{A}$ is
order preserving if and only if $A$ is an antichain in $(X,<)$.
\end{theorem}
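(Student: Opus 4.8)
The plan is to deduce everything from Theorem~\ref{thm:regular-equivalence-relation}. Writing $\mscr{R}_A = \Delta(X) \cup (A\times A)$ for the equivalence relation underlying $q_A$, that theorem tells us $q_A$ is order preserving precisely when the relation $S_0$ induced on $\quotient{X}{A}$ (Definition~\ref{def:induced-relation}) contains no cycles. So the whole statement reduces to the combinatorial claim that \emph{$S_0$ has a cycle if and only if $A$ is not an antichain}. The structure I would exploit throughout is that the blocks of $\mscr{R}_A$ are the single block $A$ together with the singletons $\{x\}$ for $x \notin A$; thus $u \sim v$ means either $u=v$, or both $u,v \in A$.

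For the easy direction, suppose $A$ is not an antichain, so there are $a,b \in A$ with $a<b$. Taking $x=a$ and $y=b$ in~\eqref{eqn:induced-relation} — noting $a \sim a$ and $a \sim b$ since $a,b \in A$ — gives $([a],[a]) \in S_0$, i.e. a self-loop at the block $A$, which is a cycle. By Theorem~\ref{thm:regular-equivalence-relation} this already shows $q_A$ is not order preserving.

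For the converse I would argue contrapositively and trace a cycle down to a comparability inside $A$. Assume $S_0$ has a cycle; passing to a subwalk I may take it simple, on distinct blocks $[c_0],\dots,[c_{m-1}]$ with edges $([c_i],[c_{i+1}])$ (indices mod $m$). Each edge supplies witnesses $x_i \in [c_i]$ and $y_i \in [c_{i+1}]$ with $x_i < y_i$. The crucial observation is that $y_i$ and $x_{i+1}$ lie in the common block $[c_{i+1}]$: if that block is a singleton then $y_i = x_{i+1}$ and the two $<$-steps splice together, whereas if it is the block $A$ the walk may ``jump'' from $y_i$ to a possibly different element $x_{i+1}$ of $A$. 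If no block equalled $A$, all splices would hold and I would obtain $x_0 < y_0 = x_1 < \dots < y_{m-1} = x_0$, a $<$-cycle contradicting irreflexivity and transitivity of the strict order; hence $A$ occurs, and by simplicity exactly once, say as $[c_j]$. Splicing around the remaining singleton steps then yields a single $<$-chain from $x_j \in A$ to $y_{j-1} \in A$, so transitivity gives $x_j < y_{j-1}$ with both endpoints in $A$; and $x_j \neq y_{j-1}$, since otherwise the chain would force $x_j < x_j$. Thus $A$ contains two comparable elements and is not an antichain.

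Combining the two directions with Theorem~\ref{thm:regular-equivalence-relation} closes the proof. The main obstacle is this converse: the block $A$ is the unique place where the induced-relation walk can advance without a genuine underlying $<$-edge, so the work lies in isolating that single occurrence, splicing the remaining singleton steps into one honest $<$-chain, and verifying that its two endpoints in $A$ are genuinely distinct rather than collapsing into a forbidden $<$-loop.
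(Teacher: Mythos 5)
Your proof is correct and follows essentially the same route as the paper: both directions reduce to the cycle criterion of Theorem~\ref{thm:regular-equivalence-relation}, and the converse exploits the fact that $A$ is the unique non-singleton block, so any cycle in $S_0$ must route through $A$ while the singleton steps splice into an honest $<$-chain forcing two comparable elements of $A$. The paper compresses this splicing into a single three-node cycle diagram; your version just carries it out explicitly for cycles of arbitrary length, which is a presentational rather than a substantive difference.
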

\begin{proof}
If $A$ is not an antichain, then $\quotient{X}{A}$ places comparable elements in the same equivalence
class, so $q_A$ is not order preserving.

Assume $A$ is an antichain. If $q_A$ is not order preserving, then there is a cycle 
in~$(\quotient{X}{A},<')$, and since we have only one non-singleton equivalence class, the
cycle must be on the form
\begin{center}
\begin{tikzcd}
  b \arrow[r,->] & A \arrow[r,->] & c \arrow[ll,bend right=30,swap].
\end{tikzcd}
\end{center}
But this means we have $a,a' \in A$ for which $b < a$ and $a'<c$, but since $c < b$, this implies
$a'<a$, contradicting the fact that $A$ is an antichain.
\spqed
\end{proof}

Since a composition of order preserving maps is order preserving, this also applies to a composition
of quotient maps for a chain of regular equivalence relations $\mscr{R}_1 \subs \cdots \subs \mscr{R}_n$.
Combining this with Theorem~\ref{thm:one-point-quotient}, we have the following:

\boldfbox{
  A clustering of a strict poset will be order preserving if it can be produced as
  a sequence of pairwise merges of non-comparable elements.
}

We close the section with an observation 
about the family of all hierarchical clusterings over a strict poset:

\begin{theorem} \label{thm:meet-semilattice}
For a strict poset $(X,<)$, the set $\Part{X,<}$ of regular partitions over $(X,<)$
has $S(X)$ as its least element.
Unless $<$ is the empty order, there is no greatest element.
\end{theorem}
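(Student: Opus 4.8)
The plan is to handle the two assertions in turn; the first is routine and the second carries the real content. For the least element, I would first verify that $S(X)$ is regular: it is the quotient by the diagonal $\Delta(X)$, whose quotient map $q_{\Delta(X)}$ is, up to relabelling, the identity and hence order preserving, so $S(X) \in \Part{X,<}$. Since $S(X)$ is already the bottom of the ambient lattice $\Part{X}$ (recalled in the background) and $\Part{X,<} \subseteq \Part{X}$ inherits the refinement order, $S(X)$ lies below every regular partition; being itself regular, it is the least element. No further work is needed here.

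For the second assertion, the first step is to dispose of the obvious candidate for a top. The greatest element of $\Part{X}$ is the one-block partition $\{X\}$, and by Theorem~\ref{thm:one-point-quotient} its quotient map is order preserving precisely when $X$ is an antichain, i.e.\ when $<$ is empty. Thus, as soon as $<$ is non-empty, $\{X\} \notin \Part{X,<}$: the fully merged clustering is never regular, which already records the narrative fact that order-preserving clustering never collapses everything into a single cluster.

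To strengthen this to the genuine absence of a maximum, I would exhibit two regular partitions with no common regular coarsening. Given a comparable pair $a < b$ together with a third element $c$ non-comparable to both, the one-merge partitions $\quotient{X}{\{a,c\}}$ and $\quotient{X}{\{b,c\}}$ are each regular by Theorem~\ref{thm:one-point-quotient}, since $\{a,c\}$ and $\{b,c\}$ are antichains. Any partition coarser than both must contain $a,c$ in one block and $b,c$ in one block, hence place $a,b,c$ together; this forces the comparable pair $a<b$ into a single block, so by Theorem~\ref{thm:one-point-quotient} (equivalently, via an $\mscr{R}$-fence producing a cycle as in Theorem~\ref{thm:regular-equivalence-relation}) no such coarsening is regular. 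Two regular partitions with no common upper bound in $\Part{X,<}$ preclude a maximum, and this is exactly the branching visible at the top of Figure~\ref{fig:meet-semilattice}.

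The main obstacle is that non-emptiness of $<$ alone does not guarantee the triple $(a,b,c)$ above: if $(X,<)$ is totally ordered the only antichains are singletons, so $\Part{X,<} = \{S(X)\}$ and $S(X)$ is simultaneously least and greatest. The careful version of the statement must therefore separate the always-true claim --- that the ambient top $\{X\}$ is lost once $<$ is non-empty --- from the stronger claim that no maximum exists, the latter requiring a comparable pair that admits a simultaneously non-comparable third element. Identifying the precise hypothesis under which the maximum genuinely fails, and confirming that the degenerate (e.g.\ totally ordered) posets are the only exceptions, is the step I expect to demand the most care.
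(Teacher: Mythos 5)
Your treatment of the least element coincides with the paper's: $S(X)$ is regular and is the bottom of the ambient lattice $\Part{X}$, hence the least element of $\Part{X,<}$. For the absence of a greatest element you take a genuinely different, and more careful, route. The paper's entire argument is that a comparable pair $a<b$ can never share a block (Theorem~\ref{thm:one-point-quotient}), so the ambient top $\{X\}$ is not regular, and it concludes directly that there is no greatest element. That is precisely the step you flag as insufficient: excluding $\{X\}$ from $\Part{X,<}$ does not by itself rule out $\Part{X,<}$ having a maximum of its own. Your repair --- exhibiting the regular one-merge partitions $\quotient{X}{\{a,c\}}$ and $\quotient{X}{\{b,c\}}$, every common coarsening of which forces the comparable pair $a,b$ into a single block and hence creates a loop in the induced relation --- is sound, and it establishes the stronger claim whenever some comparable pair $a<b$ admits a third element $c$ non-comparable to both.

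Your closing observation is also correct and worth stressing: that extra hypothesis is not implied by non-emptiness of $<$. For a two-element chain, or any totally ordered $X$, the only antichains are singletons, so $\Part{X,<}=\{S(X)\}$, which trivially has a greatest element; the theorem as literally stated fails there, and the paper's proof in fact only establishes the weaker assertion that $\{X\}\notin\Part{X,<}$ (which is all that is used downstream --- it is why order preserving clustering produces partial rather than complete dendrograms). So your proposal is correct under the hypothesis you isolate, and it exposes a genuine gap in the published argument: the statement should either be weakened to ``$\{X\}$ is not a regular partition'' or be given the additional hypothesis you describe.
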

\begin{proof}
$S(X)$ is always a regular partition, so $S(X) \in \Part{X,<}$.
And since $S(X)$ is a refinement of every partition of $X$, $S(X)$ is
the least element of $\Part{X,<}$. 

If the order relation is not empty, then there are at least two elements that are comparable,
and, according to Theorem~\ref{thm:one-point-quotient}, they cannot be in the same equivalence class.
Hence, there is no greatest element.
\spqed
\end{proof}

The situation of Theorem~\ref{thm:meet-semilattice} is depicted in Figure~\ref{fig:meet-semilattice}, and
has already been discussed in Section~\ref{section:introducing-a-strict-order}:
In the case of tied connections that represent mutually exclusive merges,
choosing to merge one connection over the other may lead to
very different results.
We therefore need a strategy to select one of these
solutions over the others. This will be the main focus of Sections~\ref{section:partial-dendrograms}
and~\ref{section:HC}.

\section{Partial dendrograms} \label{section:partial-dendrograms}
In this section, we construct the embedding of partial dendrograms into ultrametrics.
Let an \deft{ordered dissimilarity space} be denoted by $(X,<,d)$. We generally assume that the order
relation is non-empty, meaning that there are comparable elements in $(X,<)$. 
Recall the partial dendrograms of Figure~\ref{fig:partial-dendrograms-intro}, and the
mathematical definition of a partial dendrogram in Definition~\ref{def:dendrogram}.
Partial dendrograms are clearly a generalisation of dendrograms. To distinguish between the
two, we will occasionally refer to the non-partial dendrograms as \deft{complete dendrograms}.

\medskip

For a partial dendrogram $\theta$, we  will write $\theta(\infty)$ to denote the maximal
partition in the image of $\theta$.
The only difference between a partial dendrogram and a complete dendrogram is that for a partial dendrogram
we do not require a greatest element in the image of $\theta$. 
However, since $\Part{X,<}$ is finite, a partial dendrogram $\theta \in \PD(X,<)$ is
\emph{eventually constant}; that is, there exists a positive real number $t_0$ for which 
\[
t \ge t_0 \ \Rightarrow \ \theta(t) = \theta(\infty).
\] 
We call the smallest such number the \deft{diameter} of $\theta$, formally given by
\[
\diam(\theta) \ = \ \min \{ x \in \R_+ \, | \, \theta(x) = \theta(\infty) \}.
\]

Looking at the partial dendrograms of Figure~\ref{fig:partial-dendrograms-intro},
each connected component in a partial dendrogram is a complete dendrogram over its leaf nodes.
Since complete dendrograms map to ultrametrics, each connected component 
gives rise to an ultrametric on the subset of $X$ constituted by the connected component's leaf nodes.
That is, if $\theta(\infty) = \{B_j\}_{j=1}^k$, and if $\theta_j$ is the complete dendrogram
over $B_j$ for $1 \le j \le k$, we can define the ultrametrics $\ultra_j = \Psi_{B_j}(\theta_j)$
so that $\left\{(B_j,\ultra_j)\right\}_{j=1}^k$ is a disjoint family of ultrametric
spaces, which union covers $X$.

\medskip

Now consider the following general result. 
\begin{lemma} \label{lemma:ultrametric-extension}
Given a family of bounded, disjoint ultrametric spaces $\{(X_j,d_j)\}_{j=1}^n$
together with a positive real number $K \ge \max_j\left\{\diam(X_j,d_j)\right\}$, the map
\[
d_\cup : \bigcup X_j \times \bigcup X_j \longto \R_+
\] 
given by
\[
  d_\cup(x,y) =
  \begin{cases}
    d_j(x,y) & \text{if $\exists j : x,y \lin X_j$}, \\
    K & \text{otherwise}
  \end{cases}
\]
is an ultrametric on $\bigcup_j X_j$.
\end{lemma}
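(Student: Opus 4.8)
### Proof strategy

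The plan is to verify the three defining properties of an ultrametric (d1, d2, d3) for $d_\cup$ directly from the definition, reducing everything to the known ultrametric properties of the pieces $d_j$. Properties $d1$ and $d2$ are immediate: for $d1$, note that $x,x \in X_j$ for whichever block contains $x$, so $d_\cup(x,x) = d_j(x,x) = 0$; for $d2$, symmetry holds in each case of the definition because each $d_j$ is symmetric and the constant value $K$ is symmetric by construction. The only real content is the ultrametric inequality $d3$, so that is where I would concentrate the argument.

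For $d3$, I would fix three points $x,y,z \in \bigcup_j X_j$ and do a case analysis on how they distribute among the blocks $\{X_j\}$. The key observation to isolate first is that whenever two of the three points lie in different blocks, their $d_\cup$-distance is exactly $K$, and since $K \ge \max_j\{\diam(X_j,d_j)\}$, this $K$ dominates every intra-block distance. So the cases split naturally: (i) all three points lie in a common block $X_j$, in which case $d3$ for $d_\cup$ reduces verbatim to the ultrametric inequality for $d_j$; (ii) the three points lie in at least two distinct blocks. In case (ii) I would argue that at least two of the three pairwise distances equal $K$ — this is because if the three points are not all in one block, then no matter how they are grouped, at most one pair can be an intra-block pair, so the remaining two pairs are cross-block pairs of value $K$. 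Then for any choice of the "middle" point, the right-hand side $\max$ contains at least one $K$, hence equals $K$ (as $K$ dominates any intra-block value), while the left-hand side is at most $K$; so the inequality holds.

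The main obstacle, and the step deserving the most care, is the bookkeeping in case (ii): I must make sure the claim "at least two of the three pairwise distances equal $K$" is justified cleanly and that the dominance $K \ge \diam(X_j,d_j)$ is invoked correctly to conclude $d_\cup(x,z) \le K = \max\{\cdots\}$ whenever the relevant max contains a cross-block pair. A careful way to phrase this is: if $x,z$ lie in different blocks then the left-hand side is already $K$, and the right-hand side is a max of two terms each of which is either $K$ or an intra-block distance bounded by $K$, but at least one of them must be $K$ because $y$ cannot share a block with both $x$ and $z$; if instead $x,z$ lie in the same block (so $y$ is in a different block), then the left-hand side $d_\cup(x,z) = d_j(x,z) \le \diam(X_j,d_j) \le K$, while both $d_\cup(x,y)$ and $d_\cup(y,z)$ equal $K$, so the right-hand side is $K$ and the inequality again holds. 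Assembling these subcases completes $d3$, and with $d1$, $d2$ already dispatched, $d_\cup$ is an ultrametric on $\bigcup_j X_j$.
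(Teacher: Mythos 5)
Your proof is correct and rests on the same key observation as the paper's: all cross-block distances equal $K$, which dominates every intra-block distance, so any triple spanning more than one block has at least two of its three pairwise distances equal to $K$ and the ultrametric inequality holds trivially (the all-in-one-block case reducing to the ultrametric inequality for the relevant $d_j$). The only difference is organisational --- you verify the inequality directly for all $n$ blocks by an exhaustive case analysis, whereas the paper checks the two-block case via the isosceles-triangle observation and then appeals to induction to adjoin the remaining $X_j$; your version is, if anything, slightly more self-contained, since it avoids the induction and spells out the subcases explicitly.
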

\begin{proof}
To prove that the ultrametric inequality holds,
we start by showing that $d_{\cup_{1,2}}$ is an ultrametric on the restriction to the disjoint
union $X_1 \cup X_2$: Let $x,y \in X_1$ and $z \in X_2$, and choose a positive
$K \ge \max\{\diam(X_1,d_1),\diam(X_2,d_2)\}$. We now have
\begin{align*}
d_{\cup_{1,2}}(x,z) &= K & d_{\cup_{1,2}}(x,y) &= d_1(x,y) & d_{\cup_{1,2}}(y,z) &= K.
\end{align*}
This means that every triple of points are either already contained in an ultrametric space,
or they make up an isosceles triangle. In both cases, the ultrametric inequality holds,
according to the observation in Example~\ref{ex:ultrametric-inequality}.

By induction, we can now prove that $\big((X_1 \cup X_2) \cup X_3),d_{\cup_{1,2,3}} \big)$
is an ultrametric space, and so on, until all the $(X_j,d_j)$ are included.
\spqed
\end{proof}
%Restricting $K$ to be strictly positive makes the above construction work even when each $(X_j,d_j)$ 
%is a trivial ultrametric space, in which case $d_\cup$ becomes the discrete metric on~$\bigcup_j X$.
%\smallskip

Hence, for our partial dendrogram $\theta$ with $\theta(\infty)=\{B_j\}_{j=1}^k$ and subspace
ultrametrics $\{\ultra_j\}_{j=1}^k$,
pick a $K \ge \max_j\{\diam(B_j,\ultra_j)\}$, and define $\ultra_\theta : X \times X \to \R_+$ by
\begin{equation} \label{eqn:ultrametric-u}
\ultra_\theta(x,y) = 
\begin{cases}
\ultra_j(x,y) & \text{if $\exists j : x,y \lin B_j$}, \\
K & \text{otherwise}.
\end{cases}
\end{equation}
According to Lemma~\ref{lemma:ultrametric-extension}, equation \eqref{eqn:ultrametric-u} is an
ultrametric on $X$.

\begin{definition} \label{def:Ultra-eps}
Given an ordered space $(X,<)$ and a non-negative real number $\vareps$,
\deft{the ultrametric completion on $\vareps$} is the map
$\Ultra_\vareps : \PD(X,<) \longto \U(X)$ mapping
\[
\Ultra_\vareps : \theta \mapsto \ultra_\theta,
\]
where $\ultra_\theta$ is defined as in~\eqref{eqn:ultrametric-u}, setting $K = \diam(\theta) + \vareps$.
\end{definition}

\begin{example}
To illustrate how the ultrametric completion turns out in the case of the partial dendrograms
of Figure~\ref{fig:partial-dendrograms-intro}, we have the following figure:

\begin{center}
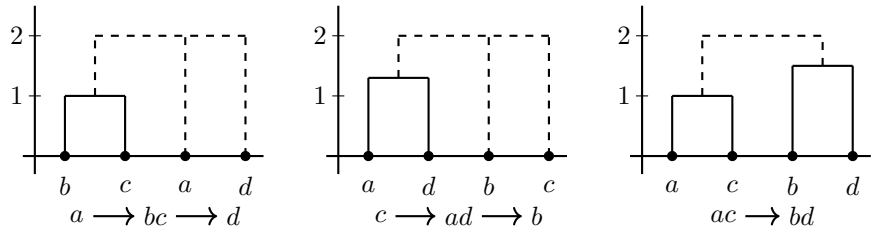

\vspace{5px}
\begin{tabular}{ccc}
  \input{partial_dendrogram_a_bc_d_completed} &
  \input{partial_dendrogram_c_ad_b_completed} &
  \input{partial_dendrogram_ac_bd_completed}   
\end{tabular}
\captionof{figure}{``Completed'' dendrograms corresponding to 
the partial dendrograms of Figure~\ref{fig:partial-dendrograms-intro}, using $K=2.0$. 
The completions are marked by the dashed lines.}
\label{fig:completed-dendrograms}
\end{center}
\end{example}

The above discussion serves to show that the construction is well defined.
Our next goal is two-fold. First, we wish to provide an (explicit) function from partial
dendrograms to dendrograms that realises this map. And second, we wish to establish
conditions for this function to be an embedding; that is, an injective map.
Injectivity is not strictly required for the theory to work, but it increases the discriminative
power of the theory. An example to the contrary is provided towards the end of the section.

\bigskip

We have the map $\Psi_X : \D(X) \longto \U(X)$ from~\eqref{eqn:Psi-X-def}, mapping dendrograms
to ultrametrics. We now seek a map $\kappa_\vareps : \PD(X,<) \longto \D(X)$
making the following diagram commute:
\begin{equation} \label{diag:kappa-commutes}
\begin{tikzcd}[column sep=5em,every label/.append style = {font = \normalsize}]
\D(X) \ar{r}{\Psi_X} & \U(X) \\[2.5em]
\PD(X,<) \ar[u,"\kappa_\vareps"] \ar{ru}[swap]{\Ultra_\vareps} & 
\end{tikzcd}.
\end{equation}
Seeing that $\kappa_\vareps$ must map partial dendrograms to complete dendrograms,
a quick glance at Figure~\ref{fig:completed-dendrograms} suggests the following definition:
\begin{equation*}
\kappa_\vareps(\theta)(x) \ = \ 
\begin{cases}
\theta(x) & \text{for $x < \diam(\theta) + \vareps$} \\
\{X\} & \text{otherwise}.
\end{cases}
\end{equation*}
It is straightforward to check that $\kappa_\vareps(\theta)$ is a complete dendrogram.

\begin{theorem} \label{thm:kappa-lifts}
$\Psi_X \circ \kappa_\vareps = \Ultra_\vareps$. That is; diagram~\eqref{diag:kappa-commutes} commutes.
\end{theorem}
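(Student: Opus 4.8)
The plan is to establish the functional identity pointwise: for an arbitrary $\theta \in \PD(X,<)$ and all $x,y \in X$, I would show that $\Psi_X(\kappa_\vareps(\theta))(x,y)$ equals $\Ultra_\vareps(\theta)(x,y) = \ultra_\theta(x,y)$. Write $\phi = \kappa_\vareps(\theta)$, $K = \diam(\theta) + \vareps$, and let $\theta(\infty) = \{B_j\}_{j=1}^k$ with component ultrametrics $\ultra_j = \Psi_{B_j}(\theta_j)$, so that by construction $\ultra_\theta(x,y) = \ultra_j(x,y)$ when $x,y$ share a block $B_j$ and $\ultra_\theta(x,y) = K$ otherwise. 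Since $\Psi_X$ is defined by minimising over the levels at which $x$ and $y$ first lie in a common block, the whole proof amounts to carrying out that minimisation for $\phi$ and matching it against these two cases.

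First I would record three facts. (i) Because $\theta$ is order preserving into $(\Part{X},\subp)$, every $\theta(t)$ refines $\theta(\infty)$; hence elements in distinct blocks $B_i \ne B_j$ are never co-clustered by $\theta$ at any level. (ii) Restricting $\theta$ to a single component reproduces $\theta_j$: the blocks of $\theta(t)$ contained in $B_j$ are exactly the blocks of $\theta_j(t)$, so for $x,y \in B_j$ the smallest level at which they share a $\theta$-block is $\Psi_{B_j}(\theta_j)(x,y) = \ultra_j(x,y)$. (iii) Each within-component level is bounded by $K$: since $\theta$ has reached $\theta(\infty)$ by height $\diam(\theta)$, all of $B_j$ is merged by then, so $\diam(B_j,\ultra_j) \le \diam(\theta) \le K$ and in particular $\ultra_j(x,y) \le K$; this incidentally re-confirms that the constant $K$ chosen by $\Ultra_\vareps$ is admissible in Lemma~\ref{lemma:ultrametric-extension}.

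With these in hand the computation splits according to the definition $\phi(t) = \theta(t)$ for $t < K$ and $\phi(t) = \{X\}$ for $t \ge K$. If $x,y \in B_j$, fact (ii) produces a common block already in $\theta$ at level $\ultra_j(x,y)$, which by (iii) is either $< K$, or, in the boundary situation $\vareps = 0$ with $\ultra_j(x,y) = \diam(\theta)$, is supplied at level exactly $K$ through the $\{X\}$ branch; in neither case does a smaller level work, so $\Psi_X(\phi)(x,y) = \ultra_j(x,y) = \ultra_\theta(x,y)$. If instead $x \in B_i$ and $y \in B_j$ with $i \ne j$, fact (i) forbids a common block for every $t < K$, while $\phi(t) = \{X\}$ supplies one for every $t \ge K$; hence the minimising level is exactly $K = \ultra_\theta(x,y)$. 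The diagonal $x=y$ is immediate, both sides being $0$.

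The routine part is this two-case bookkeeping; the step I expect to require the most care is fact (ii)---the identification of the restriction of $\theta$ to a connected component with the complete dendrogram $\theta_j$, as it is precisely what licenses reading $\ultra_j$ off $\theta$ itself---together with the boundary accounting at level exactly $K$ when $\vareps = 0$, where the coincidence is provided by the $\{X\}$ branch rather than by $\theta$. Once (ii) is pinned down from the definition of $\theta_j$ as the component dendrogram of $\theta(\infty)$, the remainder is a direct unwinding of the minimisation defining $\Psi_X$, and the equality of the two functions gives the commutativity of diagram~\eqref{diag:kappa-commutes}.
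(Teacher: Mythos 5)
Your proposal is correct and follows essentially the same route as the paper's proof: both split into the intra-block case (handled by identifying the restriction of $\theta$ to a connected component with the component dendrogram $\theta_j$, so that the minimisation defining $\Psi_X$ reproduces $\ultra_j$) and the inter-block case (where the $\{X\}$ branch of $\kappa_\vareps$ forces the value $\diam(\theta)+\vareps$). Your explicit treatment of the boundary situation $\vareps=0$ with $\ultra_j(x,y)=\diam(\theta)$ is a small point of extra care that the paper's proof passes over silently, but it does not change the argument.
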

\begin{proof}
Assume first that $\theta \in \PD(X,<)$ is a proper partial dendrogram, 
and that $\image{\theta} = \{\mcal{B}_i\}_{i=0}^n$.
Let the coarsest partition in the image of $\theta$ be given by $\mcal{B}_n = \{B_j\}_{j=1}^m$.
That is, each block $B_j$ corresponds to a connected component in the partial dendrogram.
Pick a block $B \in \mcal{B}_n$ and assume $x,y \in B$. 

If
\[
k = \min \{\, i \in \N \, | \, \exists B' \in \mcal{B}_i \, : \, B = B' \,\},
\]
then $\mcal{B}_k$ is the finest partition containing all of $B$ in one block.
Since $B \subs X$, the partitions
\[
\mcal{B}_i^B \ = \ \{ \, B \cap B' \, | \, B' \in \mcal{B}_i \, \} \quad \text{for $1 \le i \le k$}
\]
constitute a chain in $\Part{B}$ containing both $S(B)$ and $\{B\}$. 
Hence, we can construct a complete dendrogram over $B$ by defining
\begin{equation} \label{eqn:theta-B}
\theta_B(x) = \{\, B \cap B' \, | \, B' \in \theta(x) \, \}.
\end{equation}
This is exactly the complete dendrogram corresponding to the connected component of the 
tree over $X$ having the elements of $B$ as leaf nodes. 
By Definition~\ref{def:Ultra-eps}, 
\begin{align} \label{eqn:in-B-they-align}
  x,y \in B & \Rightarrow \ \Ultra_\vareps(\theta)(x,y) \, = \, \Psi_B(\theta_B)(x,y). 
\end{align}
Due to~\eqref{eqn:theta-B}, we have
\begin{multline*}  
  x,y \in B  \Rightarrow \ \left(
  \exists B \in \theta_B(x) \, : \, x,y \in B 
  \ \Leftrightarrow \ \exists B' \in \theta(x) \, : \, x,y \in B'
  \right) \\
   \Rightarrow \ 
  \min\{ \, t \in \R_+ \, | \, \exists B \in \theta_B(t) \, : \, x,y \in B \, \}
  \\ = \min\{ \, t \in \R_+ \, | \, \exists B' \in \theta(t) \, : \, x,y \in B' \, \}.
\end{multline*}
Hence, by the definition of $\Psi_X$ in~\eqref{eqn:Psi-X-def} we conclude that
\begin{align*}
  x,y \in B & \Rightarrow \ \Psi_B(\theta_B)(x,y) \, = \, (\Psi_X \circ \kappa_\vareps)(\theta)(x,y). 
\end{align*}
Combining this with~\eqref{eqn:in-B-they-align}, we get that whenever $x,y \in B$, 
we have $\Psi_X \circ \kappa_\vareps = \Ultra_\vareps$.

On the other side, let $x \in B_i$ and $y \in B_j$ with $i \ne j$. 
By definition, we have $\Ultra_\vareps(\theta)(x,y) = \diam(\theta) + \vareps$.
And, since there is no block in $\theta(\infty)$ containing both $x$ and $y$, 
we find that the minimal partition in $\image{\kappa_\vareps(\theta)}$ containing $x$ and $y$ in one block
is $\{X\}$. But this means that $\Psi_X(\kappa_\vareps(\theta))(x,y) = \diam(\theta) + \vareps$, 
so $\Psi_X \circ \kappa_\vareps = \Ultra_\vareps$ holds in this case too.

Finally, if $\theta$ is a complete dendrogram, we have $\kappa_\vareps(\theta) = \theta$, so 
$\Psi_X \circ \kappa_\vareps(\theta) = \Psi_X(\theta)$. But since $\theta(\infty) = \{X\}$,
it follows that $\Ultra_\vareps(\theta)$ maps exactly to the ultrametric over $X$ defined
by $\Psi_X(\theta)$. 
\spqed
\end{proof}

\begin{theorem} \label{thm:Ultra-is-injective}
  Let $(X,<)$ be a strict poset with a non-empty order relation. Then $\Ultra_\vareps$ is injective
  if $\vareps > 0$.
\end{theorem}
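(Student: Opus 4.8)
The plan is to show that $\theta$ can be reconstructed from the ultrametric $\ultra_\theta = \Ultra_\vareps(\theta)$, so that $\Ultra_\vareps$ is invertible on its image and hence injective. Equivalently, since Theorem~\ref{thm:kappa-lifts} gives $\Ultra_\vareps = \Psi_X \circ \kappa_\vareps$ and $\Psi_X$ is a bijection (in particular injective), it would suffice to prove that $\kappa_\vareps$ is injective; but I find it cleanest to argue the reconstruction directly in terms of $\ultra_\theta$.

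First I would pin down $\diam(\theta)$ and the coarsest partition $\theta(\infty)=\{B_j\}_{j=1}^k$ from $\ultra_\theta$ alone. Write $K = \diam(\theta)+\vareps$ for the between-block value. Each within-block restriction is $\ultra_j = \Psi_{B_j}(\theta_j)$, whose diameter $\diam(B_j,\ultra_j)$ is the height at which $B_j$ finishes forming; since $\diam(\theta) = \max_j \diam(B_j,\ultra_j)$, every within-block distance is at most $\diam(\theta)$. Because $\vareps > 0$, this yields the strict inequality $(\text{within-block distance}) \le \diam(\theta) < \diam(\theta) + \vareps = K$. As the order relation on $(X,<)$ is non-empty, no regular partition can place comparable elements together, so $\theta(\infty)\ne\{X\}$ and at least one between-block pair exists; hence $\max_{x,y}\ultra_\theta(x,y) = K$. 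This recovers $K$, then $\diam(\theta)=K-\vareps$, and the equivalence ``$x,y$ lie in a common block of $\theta(\infty)$ iff $\ultra_\theta(x,y) < K$'', which recovers the partition $\{B_j\}_{j=1}^k$.

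With $\theta(\infty)$ in hand, the restriction of $\ultra_\theta$ to $B_j \times B_j$ equals $\ultra_j$, and the injectivity of $\Psi_{B_j}$ recovers each component dendrogram $\theta_j = \Psi_{B_j}^{-1}(\ultra_j)$. Since the blocks of $\theta(t)$ never cross between distinct $B_j$, the family $\{\theta_j\}_j$ reassembles $\theta$ uniquely (namely $\theta(t)$ restricted to $B_j$ equals $\theta_j(t)$ for every $t$). Thus $\ultra_\theta$ determines $\theta$, proving injectivity.

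The crux --- and the only place $\vareps>0$ is genuinely needed --- is the strict separation $(\text{within-block}) < K$ that lets the between-block pairs be read off as exactly those of maximal distance. When $\vareps=0$ a last within-block merge can occur at the same height $\diam(\theta)$ as the between-block value, making between- and within-block pairs indistinguishable from $\ultra_\theta$, which is the mechanism behind the promised counterexample. Along the way I would also record the routine checks that $K \ge \max_j\diam(B_j,\ultra_j)$ (so that Lemma~\ref{lemma:ultrametric-extension} applies and $\ultra_\theta$ is indeed an ultrametric) and that $\theta(\infty)\ne\{X\}$ under the non-empty-order hypothesis.
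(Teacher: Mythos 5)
Your proof is correct, and it rests on the same underlying fact as the paper's: because $\vareps>0$, the completion level $K=\diam(\theta)+\vareps$ sits strictly above every merge height inside the partial dendrogram, so the completion destroys no information. The presentation differs, though. The paper factors $\Ultra_\vareps=\Psi_X\circ\kappa_\vareps$ and dispatches injectivity of $\kappa_\vareps$ in essentially one line, by observing that $\kappa_\vareps(\theta)$ agrees with $\theta$ on $[0,\diam(\theta)+\vareps)$; it leaves implicit why the two diameters must coincide and never visibly invokes the non-empty-order hypothesis. You instead build an explicit inverse at the ultrametric level: recover $K$ as the maximal distance (which is where non-emptiness of the order enters, guaranteeing $\theta(\infty)\ne\{X\}$ and hence that a between-block pair realising $K$ exists), read off $\diam(\theta)=K-\vareps$ and the coarsest partition from the strict gap between within-block and between-block distances, and then recover each component via the bijectivity of $\Psi_{B_j}$. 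What your route buys is precision about exactly where each hypothesis is used -- in particular your remark that $\vareps=0$ collapses the within/between distinction cleanly explains the paper's Example on non-injectivity -- at the cost of being longer than the paper's two-line argument.
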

\begin{proof}
Since $\Ultra_\vareps = \Psi_K \circ \kappa_\vareps$ and $\Psi_X$ is a bijection, injectivity
follows if $\kappa_\vareps$ is injective. Assume that $\kappa_\vareps(\theta) = \kappa_\vareps(\theta')$.
Then, for every $x<\diam(\theta) + \vareps$, we have 
\[
\kappa_\vareps(\theta)(x) = \kappa_\vareps(\theta')(x)  \ \Leftrightarrow \ \theta(x) = \theta'(x).
\]
\spqed
\end{proof}

\begin{example} \label{ex:K-injectivity}
If $\vareps$ is not chosen to be strictly positive, the map $\Ultra_\vareps$
will not necessarily be injective. Consider the below dendrograms.
\begin{center}
\vspace{5px}
\begin{tabular}{c@{\hspace{.4cm}}c@{\hspace{.4cm}}c}
  \input{partial_dendrogram_counterex_01} &
  \input{partial_dendrogram_counterex_02} &
  \input{partial_dendrogram_counterex_completion}
\end{tabular} 
\end{center}
Both of the partial dendrograms are mapped to the same complete dendrogram (on the right)
for $\vareps = 0$.
This illustrates what we mean by \emph{reduced discriminative power} 
in the case of a non-injective completion.
Since the partial dendrograms exhibit distinctively different information, 
it is desirable that the methodology can distinguish them.
\end{example}

\section{Hierarchical clustering of ordered sets}
\label{section:HC}

We are now ready to embark on the specification of order preserving hierarchical clustering of ordered sets.
We do this by extending our notion of optimised hierarchical clustering from
Section~\ref{section:optimised-hc}.

\smallskip

Consider the following modification of classical hierarchical clustering.
The only difference is that for each iteration, we check that there are elements that
actually can be merged while preserving the order relation.
According to Theorem~\ref{thm:one-point-quotient}, this means merging a pair of non-comparable 
elements at each iteration. Recall that $S(X)$ denotes the singleton partition of $X$.

\bigskip

Let $(X,<,d)$ be given together with a linkage function~$\Link$.
\begin{enumerate}
\item Set $Q_0 = S(X)$, and endow $Q_0$ with the induced order relation $<_0$.
\item \label{step:ordered-pick} Among the pairs of non-comparable clusters, pick a pair of minimal
  dissimilarity  according to~$\Link$, and combine them into one cluster by taking their union. 
\item Endow the new clustering with the induced order relation.
\item If all elements of $X$ are in the same cluster, or if all clusters are comparable, we are done. 
  Otherwise, go to Step~\ref{step:ordered-pick} and continue.
\end{enumerate}

The procedure results in a chain of ordered partitions $\{(Q_i,<_i)\}_{i=0}^m$ together with the
dissimilarities $\{\rho_i\}_{i=0}^m$ at which the partitions where formed.
For an ordered set $(X,<)$, recall that non-comparability of $a,b \in X$ is denoted $a \noncmp b$.
Let the \deft{non-comparable separation of $(X,<,d)$}, be given by
\[
\sep_\noncmp(X,<,d) \ = \ \min_{x,y \in X} \{\, d(x,y) \, | \, x \ne y \land  x \noncmp y \,\}.
\] 
The reader may wish to compare the following lemma to Remark~\ref{remark:monotonicity}.
\begin{lemma}
  The sequence of pairs $\{(Q_i,\rho_i)\}_{i=0}^m$ produced by the above procedure
  maps to a partial dendrogram through application of~\eqref{eqn:dendrogram-from-chain} if and only if
  \[
  \sep_\noncmp(Q_i,<_i,\Link) \le \sep_\noncmp(Q_{i+1},<_{i+1},\Link) .
  \]
\end{lemma}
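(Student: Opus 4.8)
The plan is to mirror the proof of the analogous classical statement recorded in Remark~\ref{remark:monotonicity}, the only genuine changes being that the separation is taken over non-comparable pairs and that the target is a partial dendrogram rather than a complete one. The first step is to identify the merge heights: since Step~\ref{step:ordered-pick} of the procedure always merges a pair of non-comparable blocks of minimal $\Link$-dissimilarity, the height at which $Q_{i+1}$ is formed from $Q_i$ is exactly
\[
\rho_{i+1} \;=\; \sep_\noncmp(Q_i,<_i,\Link),
\]
with $\rho_0 = 0$ for $Q_0 = S(X)$. Consequently the displayed inequality of the lemma is, term by term, nothing but the assertion that the sequence $\{\rho_i\}$ is non-decreasing, and the whole statement reduces to: \eqref{eqn:dendrogram-from-chain} produces a partial dendrogram if and only if $\{\rho_i\}$ is monotone --- precisely the content of Remark~\ref{remark:monotonicity}, transported to the order-preserving setting.

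For the forward (``if'') direction I would assume monotonicity and verify the two defining axioms of a partial dendrogram from Definition~\ref{def:dendrogram}. Axiom D$3$ holds because $\rho_0 = 0$ is the smallest height and $Q_0 = S(X)$, so $\theta_\mcal{Q}(0) = S(X)$. Axiom D$1$ holds because, with only finitely many heights $\rho_i$, the map $\theta_\mcal{Q}$ is a right-continuous step function: between consecutive distinct $\rho$-values it is constant, so for every $t$ there is an $\vareps>0$ with $\theta_\mcal{Q}(t) = \theta_\mcal{Q}(t+\vareps)$. Monotonicity of $\{\rho_i\}$ is what guarantees that the index $\max\{i : \rho_i \le x\}$ advances through $0,1,\dots,m$ in order, so that each $Q_i$ is realised at its formation height $\rho_i$ and $\theta_\mcal{Q}$ is genuinely order preserving into $\Part{X}$ (recall $Q_0 \subp Q_1 \subp \cdots \subp Q_m$, since every step merges). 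I would explicitly note that D$2$ is \emph{not} claimed: the procedure halts as soon as all clusters are pairwise comparable (by Theorem~\ref{thm:one-point-quotient} no further merge of comparable blocks can preserve the order), which may occur strictly below $\{X\}$, and this is exactly why the output is a partial and not a complete dendrogram.

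For the converse I would argue by contraposition: suppose $\rho_{j} > \rho_{j+1}$ for some $j$. Then at the argument $x = \rho_{j+1}$ the set $\{i : \rho_i \le \rho_{j+1}\}$ excludes $j$ yet contains $j+1$, so $\theta_\mcal{Q}(\rho_{j+1}) = Q_k$ for some $k \ge j+1$; the block merge that defines $Q_j$ is thereby recorded at a height strictly below its own formation value $\rho_j$, and $Q_j$ never appears in $\image{\theta_\mcal{Q}}$. Hence \eqref{eqn:dendrogram-from-chain} no longer faithfully encodes the chain $\{(Q_i,\rho_i)\}$ as a partial dendrogram, just as in the classical case of Remark~\ref{remark:monotonicity}. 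I expect this converse to be the main obstacle, because the raw step function is monotone into $\Part{X}$ for \emph{any} height sequence; the point to make carefully is that a partial dendrogram must realise each formed cluster $Q_i$ at the height $\rho_i$ at which the procedure creates it, and it is this correspondence --- not bare set-theoretic monotonicity of $\theta_\mcal{Q}$ --- that fails precisely when $\{\rho_i\}$ is not non-decreasing.
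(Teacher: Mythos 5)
Your argument is correct, and it is worth saying up front that the paper offers \emph{no} proof of this lemma at all: it is stated bare, with only the instruction to ``compare'' with Remark~\ref{remark:monotonicity}, which itself is an unproved assertion. So there is no authorial proof to match; what you have done is supply the missing details along exactly the route the cross-reference gestures at. Your key reduction --- that $\rho_{i+1}=\sep_\noncmp(Q_i,<_i,\Link)$ because Step~\ref{step:ordered-pick} merges a minimal-dissimilarity non-comparable pair, so the displayed inequality is precisely monotonicity of the heights --- is the right observation and is the whole content of the forward direction once you check D$1$ and D$3$ as you do. (One pedantic caveat you share with the paper: D$3$ needs $\rho_1>0$, which fails in the degenerate case of two distinct non-comparable elements at dissimilarity zero; the paper ignores this throughout, so it is reasonable for you to as well.)

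Where you actually improve on the source is the converse. Remark~\ref{remark:monotonicity} claims that without monotonicity ``the resulting function $\theta_\mcal{Q}$ will not be an order preserving map,'' but as you correctly observe this is not literally true: since $Q_0 \subp Q_1 \subp \cdots \subp Q_m$ and $x\le y$ forces $\max\{i:\rho_i\le x\}\le\max\{i:\rho_i\le y\}$, the formula~\eqref{eqn:dendrogram-from-chain} yields an order preserving map satisfying D$1$ and D$3$ for \emph{any} height sequence. The ``only if'' direction is therefore vacuously false under the most literal reading of ``maps to a partial dendrogram,'' and it only becomes true under the reading you make explicit: the dendrogram must realise each $Q_j$ at its formation height, and when $\rho_j>\rho_{j+1}$ the partition $Q_j$ drops out of $\image{\theta_\mcal{Q}}$ entirely. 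Flagging that the failure is one of faithful encoding rather than of monotonicity of $\theta_\mcal{Q}$ is more honest than the paper's own phrasing, and I would keep that remark in any written-up version.
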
 

Since the singleton partition $Q_0$ maps to a partial dendrogram, the algorithm will
produce a partial dendrogram for any ordered dissimilarity space, and since there can be at most
$|X|-1$ merges, the procedure always terminates. 

\medskip

As for classical hierarchical clustering, the procedure is non-deterministic in the sense that
given a set of tied pairs, we may pick a random pair for the next merge. Hence, the procedure is
capable of producing partial dendrograms for all possible tie resolution strategies:

\begin{definition} \label{def:all-partial-dendrograms}
Given an ordered dissimilarity space $(X,<,d)$ and a linkage function $\Link$, 
we write $\D^\Link(X,<,d)$ to denote the set of all possible outputs from the above procedure 
\end{definition}

{\flushleft The} set $\D^\Link(X,<,d)$ differs from $\D^\Link(X,d)$ in two important ways:
\begin{itemize}
\item $\D^\Link(X,<,d)$ contains partial dendrograms, not dendrograms.
\item The cardinality of $\D^\Link(X,<,d)$ is at least that of $\D^\Link(X,d)$, and often higher,
  due to mutually exclusive merges and the ``dead ends'' in $\Part{X,<}$ 
(see~Figure~\ref{fig:meet-semilattice}).
\end{itemize}
Even for single linkage we have $\big| \D^\SLink(X,<,d) \big| > 1$ if there
are mutually exclusive tied connections.

\medskip

In the spirit of optimised hierarchical clustering, we suggest the following definition,
employing the ultrametric completion $\Ultra_\vareps$ from Definition~\ref{def:Ultra-eps}:
\begin{definition} \label{def:ordered-HC}
  Given an ordered dissimilarity space $(X,<,d)$ together with a linkage function~$\Link$, 
  let $\vareps > 0$. An \deft{order preserving hierarchical agglomerative clustering using
    $\Link$ and $\vareps$} is given by
  \begin{equation} \label{eqn:ordered-HC}
    \HC_{\opt,\vareps}^{<\Link}(X,<,d) \ = \ 
    \argmin_{\theta \in \D^\Link(X,<,d)}\norm{\Ultra_\vareps(\theta) - d}_p.
  \end{equation}
\end{definition}

The next theorem shows that if we remove the order relation, then
optimised clustering and order preserving clustering coincide.
Keep in mind that a dissimilarity space
is an ordered dissimilarity space with an empty order relation; that is, $(X,d)=(X,\emptyset,d)$.
\begin{theorem} \label{thm:coincidence-on-empty-order}
  If the order relation is empty, then order preserving optimised hierarchical clustering and
  optimised hierarchical clustering coincide:
  \[
  \HC_{\opt,\vareps}^{<\Link}(X,\emptyset,d) = \HC_\opt^\Link(X,d).
  \]
\end{theorem}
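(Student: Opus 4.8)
The plan is to show that the two optimisation problems in~\eqref{eqn:ordered-HC} and~\eqref{eqn:opt-HC} are literally identical, by arguing first that their feasible sets of dendrograms coincide, and second that their objective functions agree on that common set.

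First I would unpack Definition~\ref{def:all-partial-dendrograms} in the case of the empty order. When $<$ is empty, no two elements of $X$ are comparable, so at every stage of the order preserving procedure the induced relation $S_0$ on the current quotient is empty (there are simply no pairs in the order to generate it), whence the induced order $<_i$ is empty at each step. Consequently the phrase ``among the pairs of non-comparable clusters'' in Step~\ref{step:ordered-pick} ranges over \emph{all} pairs of clusters, and the termination clause ``all clusters are comparable'' never fires while more than one cluster remains. The procedure therefore collapses exactly onto the classical hierarchical clustering procedure of Section~\ref{section:HC-at-a-glance}: the same merges are available at each step with the same linkage values, and the process halts only once $\theta(\infty) = \{X\}$. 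I would conclude that every output is a \emph{complete} dendrogram and that the set of reachable outputs is precisely $\D^\Link(X,d)$; that is, $\D^\Link(X,\emptyset,d) = \D^\Link(X,d)$.

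Second I would observe that the two objective functions agree on this common domain. For any complete dendrogram $\theta$ we have $\theta(\infty) = \{X\}$, so the ultrametric completion of~\eqref{eqn:ultrametric-u} has the single block $X$ and the ``otherwise'' case never occurs. By the final case in the proof of Theorem~\ref{thm:kappa-lifts} we have $\kappa_\vareps(\theta) = \theta$, and hence $\Ultra_\vareps(\theta) = \Psi_X(\kappa_\vareps(\theta)) = \Psi_X(\theta)$, independently of $\vareps$. It follows that $\norm{\Ultra_\vareps(\theta) - d}_p = \norm{\Psi_X(\theta) - d}_p$ for every $\theta \in \D^\Link(X,\emptyset,d)$, so the two argmins range over the same set and minimise the same quantity, giving the claimed equality.

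The main obstacle, such as it is, lies entirely in the first step: one must check carefully that the induced order remains empty after each merge and that the modified termination condition genuinely reduces to the classical one, so that no spurious proper partial dendrograms are admitted into $\D^\Link(X,\emptyset,d)$. Once that procedural reduction is secured, the coincidence of the objectives is immediate from Theorem~\ref{thm:kappa-lifts}, and the result follows.
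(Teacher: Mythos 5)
Your proposal is correct and follows essentially the same route as the paper: the paper's proof likewise first establishes $\D^\Link(X,\emptyset,d)=\D^\Link(X,d)$ (via the observation that $\sep_\noncmp(Q,<_Q,\Link)=\sep(Q,\Link)$ for the trivial induced order, which is your procedural reduction stated compactly) and then invokes $\Ultra_\vareps|_{\D(X)}=\Psi_X$ to identify the objectives. Your version merely spells out the two steps in more detail.
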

\begin{proof}
First, notice that
\[
\forall \, (Q,<_Q) \in \Part{X,\emptyset} \ : \ \sep_\noncmp(Q,<_Q,\Link) = \sep(Q,\Link),
\]
where $<_Q$ denotes the (trivial) induced order. Hence, we have $\D^\Link(X,\emptyset,d) = \D^\Link(X,d)$.
Since $\Ultra_\vareps|_{\D(X)} = \Psi_X$, the result follows.
\spqed
\end{proof}

\subsection{On the choice of $\vareps$}
\label{section:on-vareps}

In $\HC_{\opt,\vareps}^{<\Link}(X,<,d)$ we identify the elements from $\D^\Link(X,<,d)$
that are closest to the dissimilarity measure~$d$ when measured in the $p$-norm.
The injectivity of $\Ultra_\vareps$ induces a relation $\pre_{d,\vareps}$ on $\PD(X,<)$ defined by
\[
\theta \pre_{d,\vareps} \theta' 
\ \Leftrightarrow \ 
\norm{\Ultra_\vareps(\theta) - d}_p \le \norm{\Ultra_\vareps(\theta') -d}_p,
\]
and the optimisation finds the minimal elements under this order.

The choice of $\vareps$ may affect the ordering of dendrograms under $\pre_{d,\vareps}$.
We show this by providing an alternative formula for $\norm{\ultra - d}_p$ 
that better expresses the effect of the choice of~$\vareps$.
Assume $\theta$ is a partial dendrogram over $(X,<)$ with $\theta(\infty) = \{B_i\}_{i=1}^m$,
and let $\Ultra_\vareps(\theta) = \ultra$.
We split the sum for computing $\norm{\ultra - d}_p$ in two:
the intra-block differences and the inter-block differences. The \deft{intra-block differences} are
independent of $\vareps$, and are given by
\begin{equation} \label{eqn:intra-block-differences}
\alpha \ = \ \sum_{i=1}^m \sum_{x,y \in B_i} \abs{\ultra(x,y) - d(x,y)}^p.
\end{equation}
On the other hand, the \deft{inter-block differences} are dependent on $\vareps$, and can be computed as
\begin{equation} \label{eqn:inter-block-pairs}
  \beta_\vareps \ = \ \sum_{\underset{i \ne j}{(x,y) \in B_i \times B_j}}
  \abs{\diam(\theta) + \vareps - d(x,y)}^p.
\end{equation}
This yields
$\norm{\ultra - d}_p = \sqrt[\leftroot{0}\uproot{3}{\scriptstyle p}]{\alpha + \beta_\vareps}$.
If we think of $\ultra$ as an approximation of $d$, and saying that $\card{X}=N$,
the mean $p$-th error of this approximation can be expressed as a function of $\vareps$:
\[
E_d(\vareps|\theta,p) \ = \ \frac{1}{N} \norm{u-d}_p^p \ = \
\frac{\alpha}{N} \ + \ \frac{1}{N}\sum_{\overset{(x,y) \in B_i \times B_j}{i \ne j}}
\abs{\diam(\theta) + \vareps - d(x,y)}^p.
\]

From the formula for $E_d(\vareps|\theta,p)$, we see that
when $\vareps$ becomes large, the inter-block differences dominate the approximation error. 
For increasing $\vareps$, having low error eventually equals having few inter-block pairs. 
Alternatively: the intra-block differences have insignificant influence on the approximation error
for large $\vareps$.
This means that as $\vareps$ increases beyond $\diam(X,d)$, the
partial dendrograms close to $d$ will be those that have a low number of inter-block pairs,
regardless of the quality of the intra-block ultrametric fit.
From the standpoint of ultrametric fitting, this is intuitively wrong.
Also, large $\vareps$ will lead to clusterings where as many elements as possible are placed in one
large cluster, since this is the most effective method for reducing the number of inter-block pairs.

On the other side, a low value of $\vareps$ will move the weight towards optimising the 
intra-block ultrametric fit. Since the inter-block distances are all set to $\diam(\theta) + \vareps$,
it is in the intra-block ultrametric fit we can make a difference in the optimisation. This will also
reduce the bias of cluster size as a function of $\vareps$.

In all, it is the authors' opinion that this points towards selecting a low value for $\vareps$.
In the process of choosing, we have the following result at our aid:

\begin{theorem} \label{theorem:vareps-0}
For any finite ordered dissimilarity space $(X,<,d)$ and linkage function $\Link$, there exists
an $\vareps_0 > 0$ for which
\[ 
\vareps,\vareps' \in (0, \vareps_0)
\ \Rightarrow \ 
\big(\D^\Link(X,<,d),\pre_{d,\vareps}) \approx \big(\D^\Link(X,<,d),\pre_{d,\vareps'}).
\]
That is; all $\vareps \in (0,\vareps_0)$ induce the same order on the partial dendrograms.
\end{theorem}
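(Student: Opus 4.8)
The plan is to reduce the statement to a finite collection of one-variable sign questions and then settle each of them by a leading-order analysis as $\vareps \to 0^+$. First I would record finiteness: since $X$ is finite, $\Part{X,<}$ is finite, and every element of $\D^\Link(X,<,d)$ is a partial dendrogram whose image is a chain in $\Part{X,<}$, so $\D^\Link(X,<,d)$ is a finite set. Next, because $t \mapsto t^{1/p}$ is strictly increasing on $\R_+$, the relation $\pre_{d,\vareps}$ is unchanged if we compare the $p$-th powers of the norms instead of the norms themselves. Writing $\alpha(\theta)$ and $\beta_\vareps(\theta)$ for the intra- and inter-block quantities of \eqref{eqn:intra-block-differences} and \eqref{eqn:inter-block-pairs} attached to a given $\theta$, I would set
\[
f_\theta(\vareps) \ = \ \norm{\Ultra_\vareps(\theta) - d}_p^p \ = \ \alpha(\theta) + \beta_\vareps(\theta),
\]
where $\alpha(\theta)$ and $\diam(\theta)$ are constants depending on $\theta$ alone, and all the $\vareps$-dependence is confined to $\beta_\vareps(\theta)$. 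Then $\theta \pre_{d,\vareps} \theta'$ holds exactly when $f_\theta(\vareps) \le f_{\theta'}(\vareps)$, so the entire relation $\pre_{d,\vareps}$ is determined by the signs of the finitely many pairwise differences $g_{\theta,\theta'} = f_\theta - f_{\theta'}$.

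The core of the argument is the single claim that, for each ordered pair $(\theta,\theta')$, there is a threshold $\vareps_{\theta,\theta'}>0$ on which the sign of $g_{\theta,\theta'}$ is constant throughout $(0,\vareps_{\theta,\theta'})$. To prove it I would first restrict to an interval $(0,\vareps_1)$ lying strictly below every positive ``kink value'' $d(x,y)-\diam(\theta)$ and $d(x,y)-\diam(\theta')$ arising from inter-block pairs. On such an interval each summand $\abs{\diam(\theta)+\vareps-d(x,y)}^p$ is of the form $(c+\vareps)^p$ or $(c-\vareps)^p$ with $c\ne 0$, hence real-analytic, while the exceptional summands with $d(x,y)=\diam(\theta)$ contribute only a clean multiple of $\vareps^p$. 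Consequently $g_{\theta,\theta'}(\vareps) = A(\vareps) + c\,\vareps^p$ on $(0,\vareps_1)$, where $A$ is real-analytic at $0$ and $c$ is a constant. Expanding $A$ in its Taylor series, $g_{\theta,\theta'}$ has a well-defined leading-order term as $\vareps\to 0^+$, namely the surviving term of least order among the integer powers of $\vareps$ and the single power $\vareps^p$; either all coefficients cancel and $g_{\theta,\theta'}\equiv 0$ on $(0,\vareps_1)$, or this leading term is nonzero and fixes the sign of $g_{\theta,\theta'}$ on some $(0,\vareps_{\theta,\theta'})$. In both cases the sign is eventually constant, which is the claim.

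To finish, I would set $\vareps_0 = \min_{(\theta,\theta')}\vareps_{\theta,\theta'}$, a minimum over a finite index set and therefore strictly positive. For any $\vareps,\vareps'\in(0,\vareps_0)$, every pairwise difference $g_{\theta,\theta'}$ has the same sign at $\vareps$ and at $\vareps'$, so $\pre_{d,\vareps}$ and $\pre_{d,\vareps'}$ are literally the same relation on the common underlying set $\D^\Link(X,<,d)$. The identity map is then trivially an order isomorphism, which yields $\approx$ and, more strongly, confirms that all $\vareps\in(0,\vareps_0)$ induce the same order.

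I expect the main obstacle to be exactly the no-oscillation step, i.e.\ verifying that the sign of $g_{\theta,\theta'}$ stabilises as $\vareps\to 0^+$. The delicate point is the term $\vareps^p$ when $p$ is not an even integer, since it is not real-analytic at the origin; the leading-order comparison above is designed precisely to absorb it. As an alternative to the explicit expansion, one could observe that each $f_\theta$ is definable in an o-minimal expansion of the real field (the reals equipped with power functions), whereupon the monotonicity theorem immediately gives eventual constancy of the sign of $g_{\theta,\theta'}$ near $0^+$.
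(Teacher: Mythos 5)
Your proof is correct, and it is worth noting that it executes the decisive step differently from the paper. Both arguments share the same skeleton: $\D^\Link(X,<,d)$ is finite, there are only finitely many ``critical'' values of $\vareps$, and since all of them are strictly positive one may take $\vareps_0$ below the smallest. But the paper controls these critical values by asserting that each error function $E_d(\vareps|\theta,p)$ has at most $p$ positive global minima, giving at most $pn$ special values in total --- an argument that implicitly treats the error as a degree-$p$ polynomial in $\vareps$ (so it presumes $p$ is a positive integer) and, more importantly, counts minima of individual curves rather than the quantity that actually governs when $\pre_{d,\vareps}$ changes, namely the crossings between two different dendrograms' curves. You instead work directly with the pairwise differences $g_{\theta,\theta'}(\vareps)=\norm{\Ultra_\vareps(\theta)-d}_p^p-\norm{\Ultra_\vareps(\theta')-d}_p^p$ and show their signs stabilise as $\vareps\to 0^+$ by isolating the non-analytic contribution $c\,\vareps^p$ (coming from inter-block pairs with $d(x,y)=\diam(\theta)$) from a part that is real-analytic at $0$, then reading off the leading term. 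This buys two things: it identifies and controls the correct object (sign of a pairwise comparison, including the degenerate case $g_{\theta,\theta'}\equiv 0$), and it covers non-integer $p$, where the paper's polynomial counting would not literally apply. The o-minimality remark is a legitimate alternative route to the same no-oscillation conclusion. The only cosmetic caution is that your thresholds $\vareps_{\theta,\theta'}$ must also stay below the kink values $\abs{d(x,y)-\diam(\theta)}$ you introduce, which your choice of $\vareps_1$ already ensures.
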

\begin{proof}
  Since $X$ is finite, $\D^\Link(X,<,d)$ is also finite.
  And according to $E_d(\vareps|\theta,p)$, if the cardinality of $\D^\Link(X,<,d)$ is $n$, 
  there are at most $pn$ positive values of $\vareps$ that are distinct
  global minima of partial dendrograms in $\D^\Link(X,<,d)$. But this means there
  is a finite set of $\vareps$ for which the order on
  $(\D^\Link(X,<),\pre_{\vareps,p})$ changes. And since all these values are strictly positive,
  they have a strictly positive lower bound.
\spqed
\end{proof}

Since the value of $\vareps_0$ depends on $D^\Link(X,<,d)$, it is non-trivial to compute.
For practical applications, we recommend to choose a very small positive number for~$\vareps$, but not
so small that it becomes zero due to floating point rounding when added to the diameter of the
partial dendrograms.

\subsection{Idempotency of $\HC_{\opt,\vareps}^{<\Link}$} \label{section:idempotency}
A detailed axiomatic analysis along the lines of for example \citet{Ackerman2016} is 
beyond the scope of this paper, and is considered for future work.
We still include a proof of idempotency of $\HC_{\opt,\vareps}^{<\Link}$, since this is
an essential property of classical hierarchical clustering.

%% A function $f$ is \deft{idempotent} if $f \circ f = f$.
%% For classical hierarchical clustering, the set of ultrametrics $\U(X) \subs \mcal{M}(X)$
%% over a set $X$ are fixed points under the map
%% \[
%% (\Psi_X \circ \HC^\Link(X,-)) : \mcal{M}(X) \to \U(X).
%% \]
%% In particular, if $\ultra = \Psi_X \circ \HC^\Link(X,d)$, then $\ultra = \Psi_X \circ \HC^\Link(X,\ultra)$,
%% a property referred to by \citet{Jardine1971} as \emph{appropriateness}.
Idempotency of hierarchical clustering necessarily depends on the linkage function.
We introduce the following concept, that allows us to prove this property for a range of linkage functions:
We say that $\Link$ is a \deft{convex linkage function} if we always have
\[
\SLink(p,q,d) \le \Link(p,q,d) \le \CLink(p,q,d).
\]
Notice that if $\ultra$ is an ultrametric on $X$, the ultrametric inequality yields
\[
\ultra(a,b) = \sep(X,\ultra) \ \Rightarrow \ \forall c \in X \ : \ \ultra(a,c) = \ultra(b,c),
\]
so if $\Link$ is a convex linkage function and $\ultra(a,b) = \sep(X,\ultra)$, we have
\[
\Link(\{a,b\},\{c\}) = \Link(\{a\},\{c\}) = \Link(\{b\},\{c\}) \quad \forall c \ne a,b.
\]
This is to say that a convex linkage function preserves the structure of the original ultrametric
when minimal dissimilarity elements are merged.
As a result, for any $\ultra \in \U(X)$, the set $\D^\Link(X,\ultra)$ contains exactly one element,
namely the dendrogram corresponding to the ultrametric, which is why classical hierarchical
clustering is idempotent.

%% :\footnote{Theorem~\ref{thm:opt-has-all-fixpoints} is of course known
%%   for the classical linkage functions. However, the authors have not seen the use of convex linkage
%%   functions, allowing us to prove this property for a range of linkage functions.}

%% \begin{theorem} \label{thm:opt-has-all-fixpoints}
%% For a convex linkage function $\Link$, an ultrametric $\ultra \in \U(X)$ and $\theta = \Psi_X^{-1}(\ultra)$,
%% we have \hbox{$\HC_\opt^\Link(X,\ultra) = \{\theta\}$}.
%% \end{theorem}
%% Hence, all of $\U(X)$ are fixed points under $\Psi_X \circ \HC_\opt^\Link(X,-)$ 
%% whenever $\Link$ is convex.

%% \medskip

For ordered spaces, the case is different. It is easy to construct an ordered ultrametric
space $(X,<,\ultra)$ for which $\ultra(a,b) = \sep(X,\ultra)$ and $a<b$, in which case the ultrametric
cannot be reproduced. Hence, all of $\U(X)$ cannot be fixed points under 
$\Ultra_\vareps \circ \HC_{\opt,\vareps}^{<\Link}(X,<,-)$, but the mapping is still idempotent:
\begin{theorem}[Idempotency]
  For an ordered dissimilarity space $(X,<,d)$ and a convex linkage function~$\Link$, we have
  $\theta \in \HC_{\opt,\vareps}^{<\Link}(X,<,d) \ \Rightarrow \
  \HC_{\opt,\vareps}^{<\Link}\left(X,<,\Ultra_\vareps(\theta)\right) = \{\theta\}$.
\end{theorem}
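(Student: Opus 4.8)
The plan is to split the statement into a cheap uniqueness half and a substantial existence half. Write $\ultra = \Ultra_\vareps(\theta)$, so that we are running the clustering procedure on the ordered dissimilarity space $(X,<,\ultra)$ and must show the objective in~\eqref{eqn:ordered-HC} is minimised at exactly $\theta$. Since $\ultra$ is itself an ultrametric, $\norm{\Ultra_\vareps(\theta) - \ultra}_p = 0$, which is the global minimum of a nonnegative quantity. Provided we know $\theta \in \D^\Link(X,<,\ultra)$, uniqueness is then immediate from Theorem~\ref{thm:Ultra-is-injective}: as $\vareps > 0$, the map $\Ultra_\vareps$ is injective, so any feasible $\theta' \ne \theta$ has $\Ultra_\vareps(\theta') \ne \ultra$ and hence $\norm{\Ultra_\vareps(\theta') - \ultra}_p > 0$. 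Thus the whole task reduces to showing that $\theta$ is a possible output of the procedure run on $\ultra$, i.e. $\theta \in \D^\Link(X,<,\ultra)$.

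To establish that membership I would reproduce, step for step, the run that produced $\theta$ from $(X,<,d)$. Let $S(X) = Q_0 \to Q_1 \to \cdots \to Q_m = \theta(\infty)$ be that merge sequence, with $(p_i,q_i)$ the non-comparable pair merged at $Q_i$ and $\rho_i$ its merge height in $\theta$, and show the identical sequence is a legal run on $\ultra$. Two structural facts about $\ultra$ drive the argument. First, each block of $\theta(\infty)$ is an antichain (Theorem~\ref{thm:one-point-quotient}), so comparable elements always lie in distinct blocks and therefore sit at the completion height $K = \diam(\theta) + \vareps$. Second, inside a block $\ultra$ agrees with the sub-dendrogram ultrametric, so for two current clusters $p,q$ lying in a common block the value $\ultra(x,y)$ is constant over $x \in p$, $y \in q$, equal to the height at which $p$ and $q$ are joined in $\theta$; this is the standard ``distance between ultrametric balls'' computation via the isosceles property of Example~\ref{ex:ultrametric-inequality}.

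With these in hand I would verify the run conditions at each step $i$. Convexity of $\Link$ together with the constancy just noted gives $\SLink(p_i,q_i,\ultra) = \CLink(p_i,q_i,\ultra) = \rho_i$, and the squeeze $\SLink \le \Link \le \CLink$ forces $\Link(p_i,q_i,\ultra) = \rho_i$. The same constancy shows that every other intra-block non-comparable pair $(p,q)$ has $\Link(p,q,\ultra)$ equal to its join height in $\theta$, which is $\ge \rho_i$, while every inter-block pair has $\Link(p,q,\ultra) = K > \diam(\theta) \ge \rho_i$. Hence $(p_i,q_i)$ attains $\sep_\noncmp(Q_i,<_i,\Link)$ and is an admissible choice, and since the $\rho_i$ are non-decreasing the reconstructed chain maps back to $\theta$ under~\eqref{eqn:dendrogram-from-chain}. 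Termination must also agree: comparability depends only on $(X,<)$ and the current partition, not on the dissimilarity, so because the original run halted at $\theta(\infty)$ the coarsest partition $\theta(\infty)$ carries no non-comparable pair of distinct blocks, and the run on $\ultra$ therefore halts exactly there, neither sooner nor later.

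The main obstacle I anticipate is the middle step: pinning down $\Link(p,q,\ultra)$ for \emph{all} competing non-comparable pairs, not merely the merged one, so as to certify that $(p_i,q_i)$ is genuinely of minimal non-comparable dissimilarity at every stage. This is precisely where convexity of $\Link$ is indispensable, since it is what forces the three linkage variants to the single value dictated by the ultrametric, and it is exactly the hypothesis whose failure breaks idempotency for general linkage functions. The inter-block estimate $K > \diam(\theta) \ge \rho_i$ and the antichain observation are comparatively routine, but both must be recorded explicitly to rule out premature or spurious merges and thereby fix the termination point.
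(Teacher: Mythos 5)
Your proof is correct, and its technical core coincides with what the paper uses: each block of $\theta(\infty)$ is an antichain, the intra-block values of $\ultra$ equal join heights and are constant over pairs of current clusters, comparable and inter-block pairs sit at $K=\diam(\theta)+\vareps$, and convexity squeezes $\Link$ between $\SLink$ and $\CLink$ to pin down its value. The logical skeleton differs, though. The paper proves the stronger statement that $\D^\Link\!\left(X,<,\Ultra_\vareps(\theta)\right)$ is the singleton $\{\theta\}$ --- every run of the procedure on $\ultra$ reproduces $\theta$ --- so the argmin is trivially $\{\theta\}$ and no appeal to the objective function is needed. You instead prove only feasibility, $\theta \in \D^\Link(X,<,\ultra)$, by replaying the original merge sequence, and then dispatch optimality and uniqueness by observing that $\theta$ attains objective value $0$ while injectivity of $\Ultra_\vareps$ (Theorem~\ref{thm:Ultra-is-injective}) forces every other feasible dendrogram to have strictly positive value. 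Your decomposition buys a lighter burden on the procedure --- you need only exhibit one legal run rather than control all tie resolutions --- at the cost of leaning on the injectivity theorem and of carrying out the step-by-step verification that $(p_i,q_i)$ is minimal among all non-comparable pairs at stage $i$. The paper's route yields the sharper conclusion that the feasible set itself collapses to a point, but compresses the corresponding argument into the informal discussion of convex linkage functions preceding the theorem; yours is the more explicitly verified of the two.
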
 
\begin{proof}
  Let $\theta(\infty)=\{B_i\}_{i=1}^m$. Then each $B_i$ is an antichain in $(X,<)$, so we have
  \[
  \forall x,y \in B_i \ : \ \sep(B_i,\ultra|_{B_i}) \, = \, 
  \sep_\noncmp(B_i,\ultra|_{B_i}) \quad \for \ 1 \le i \le m.
  \]
  Since $\vareps > 0$, we also have
  \[
  x,y \in B_i \ \Rightarrow \ \ultra(x,y) < \diam(X,\ultra) \quad \for \ 1 \le i \le m.
  \]
  And, lastly, since every pair of comparable elements are in pairwise different blocks,
  we have
  \[
  x<y \lor y<x \ \Rightarrow \ \ultra(x,y) = \diam(X,\ultra).
  \] 
  Now, since $\Link$ is convex, based on the discussion
  preceding the theorem, the intra-block structure of every block
  will be preserved. And, since every inter-block dissimilarity is accompanied by comparability
  across blocks, the procedure for generation of $\D^\Link\!\left(X,<,\Ultra_\vareps(\theta)\right)$
  will exactly reproduce the intra block structure of all blocks and then halt.
  Hence, $\D^\Link\!\left(X,<,\Ultra_\vareps(\theta)\right) = \{\theta\}$.
\spqed
\end{proof}

\section{Polynomial time approximation}
\label{section:approximation}
In the absence of an efficient algorithm for $\HC_{\opt,\vareps}^{< \Link}$, this section provides a polynomial
time approximation scheme. The efficacy as approximation is demonstrated in
Section~\ref{section:approximation-demo}, and a demonstration on real world data is given
in Section~\ref{section:demo}.

\smallskip

Recall the set $\D^\Link(X,<,d)$ of partial dendrograms over $(X,<,d)$ from
Definition~\ref{def:all-partial-dendrograms}. The algorithm for producing a random
element of $\D^\Link(X,<,d)$ is described at the beginning of Section~\ref{section:HC};
the key is to pick a random pair for merging whenever we encounter a set of tied connections.

\smallskip

The approximation model is deceivingly simple; we generate a set of random partial dendrograms,
and choose the one with the best ultrametric fit.
\begin{definition} \label{def:approximation}
  Let $(X,<,d)$ be given, and let $N$ be a positive integer. For any random selection of $N$
  partial dendrograms $\{\theta_i\}_i$ from $\D^\Link(X,<,d)$,
  an \deft{$N$-fold approximation of $\HC_{\opt,\vareps}^{< \Link}(X,<,d)$} is a partial
  dendrogram $\theta \in \{\theta_i\}_i$ minimising $\norm{\Ultra_\vareps(\theta) - d}_p$.
  We denote the $N$-fold approximation scheme by $\HC^{<\Link}_{N,\vareps}$.
\end{definition}

\subsection{Running time complexity}
Assume that $\card{X}=n$.
In the worst case, we may have to check $n \choose 2$ pairs to find one that is
not comparable, and the test for $a \noncmp b$ has complexity $O(n^2)$, leading to
a complexity of $O(n^4)$ of finding a mergeable pair. Since there are up to $n-1$
merges, the worst case estimate of the running time complexity for producing one element
in $\D^\Link(X,<,d)$ is~$O(n^5)$.

\medskip

A part of this estimate is the number of comparability tests we have to perform
in order to find a mergeable pair. For a sparse order relation, we may have to test significantly
less than $n \choose 2$ pairs before finding a mergeable pair: if $K$ is the expected number of
test we have to do, the expected complexity of finding a mergeable pair becomes
$O(K n^2)$. This yields a total expected algorithmic complexity
of $O(K n^3)$. If the order relation is empty, we have $K=1$, and the complexity of producing a
dendrogram becomes $ O(n^3)$, which is the running time complexity of classical hierarchical clustering.
Hence, if the order relation is sparse, we can generally expect the algorithm to
execute significantly faster than the worst case estimate.

\medskip

When producing an $N$-fold approximation, the $N$ random partial dendrograms can be generated in
parallel, reducing the computational time of the approximation. For the required number of dendrograms to
obtain a good approximation, please see Section~\ref{section:approximation-demo}.

\section{Demonstration of approximation efficacy on randomly generated data}
\label{section:approximation-demo}
The purpose of the demonstration is to check to which degree the approximation reproduces
the order preserving clusterings of $\HC^{<\Link}_{\opt,\vareps}$.
We start by describing the random data model and the quality measures we use in assessing the efficacy of
the approximation, before presenting the experimental setup and the results.

\subsection{Random ordered dissimilarity spaces}
To test the correctness and convergence ratio of the approximation scheme, we employ randomly
generated ordered dissimilarity spaces. The random model consists of two parts: the random partial order and
the random dissimilarity measure.

\subsubsection{Random partial order}
A partial order is equivalent to a transitively closed directed acyclic graph, so we can use any random
model for directed acyclic graphs to generate random partial orders.
We choose to use the classical \ER{} random graph model \citep{Bollobas2001}.
Recall that a directed acyclic graph on $n$ vertices is a binary $n \times n$ adjacency matrix that is
\emph{permutation similar} to a strictly upper triangular matrix; that is, there exists a permutation
that, when applied to both the rows and the columns of one matrix, transforms it into the other.
Let this family of $n \times n$ matrices be denoted by $\RO(n)$.
For a number $p \in [0,1]$, the sub-family $\RO(n,p) \subs \RO(n)$ is defined as
follows: for $A \in \RO(n)$, let $A'$ be strictly upper triangular and
permutation similar to $A$. Then
each entry above the diagonal of $A'$ is $1$ with probability $p$.
The sought partial order is the transitive closure of this graph; we denote the corresponding set of
transitively closed directed acyclic graphs by~$\overline{\RO}(n,p)$.

\subsubsection{Random dissimilarity measure}
If $\card{X}=n$, a dissimilarity measure over $X$ with no tied connections consists of $n \choose 2$
distinct values. Hence, any permutation of the sequence $\{1,\ldots,{n \choose 2}\}$ is a non-tied
random dissimilarity measure over $X$.

To generate tied connections, let $t \ge 1$ be the \deft{expected number of ties per level}. That is,
for each unique value in the dissimilarity measure, that value is expected to have multiplicity~$t$.
In the case where $t$ does not divide $n \choose 2$, we resolve this by setting the multiplicity of
the largest dissimilarity to $\left({n \choose 2} \! \mod t \right)$.

We write $\RD(n,t)$ to denote the family of random dissimilarity measures over sets of $n$ elements
with an expected number of $t$ ties per level.

\begin{definition} \label{def:random-space}
  Given positive integers $n$ and $t$ together with $p \in [0,1]$, the family of
  \deft{random ordered dissimilarity spaces generated by $(n,p,t)$} is given by
  \[
  \RODS(n,p,t) \ = \ \overline{\RO}(n,p) \times \RD(n,t).
  \]
\end{definition}

\subsection{Measures of cluster quality}
In the demonstration, we start by generating a random ordered dissimilarity space. We then run the
optimal clustering method on the space, finding the optimal order preserving hierarchical
clustering. Finally, we run the approximation scheme on the space and study to which degree the approximation
manages to reproduce the optimal hierarchical clustering. For this, we need a quantitative measure of
clustering quality relative a known optimum.

A large body of literature exists on the topic of comparing clusterings
(see for instance \citep{VinhEppsBailey2010} for a brief review).
We have landed on the rather popular \emph{adjusted Rand index} \citep{HubertArabie1985}
to measure the ability of the approximation in finding a decent partition, comparing against the
optimal result.
 
Less work is done on this type of comparison for partial orders and directed acyclic graphs. We suggest
to use a modified version of the adjusted Rand index for this purpose too, based on an adaptation of
the Rand index used for network analysis \citep{Hoffman2015}. For an introduction to the Rand index, and also
to some of the versions of the adjusted Rand index, see \citep{Rand1971,HubertArabie1985,GatesAhn2017}.

\subsubsection{Adjusted Rand index for partition quality}
The Rand index compares two clusterings by computing the percentage of corresponding decisions made in
forming the clusterings; that is, counting whether pairs of elements are placed together in both
clusterings or apart in both clusterings.
An adjusted Rand index reports in the range $(-\infty,1]$, where zero is equivalent to a random draw,
and anything above zero is better than chance.
We use the adjusted Rand index (\mari) to compute the efficacy of the approximation in finding a
partition close to a given planted partition.
This corresponds to what \citet{GatesAhn2017} refers to as a \emph{one sided} Rand index, since one of
the partitions are given, whereas the other is drawn from some distribution. In the below
demonstration, we assume that the approximating partition is drawn from the set of all
partitions over $X$ under the uniform distribution.

\subsubsection{Adjusted Rand index for induced order relations}
When comparing induced orders on partitions over a set, unless the partitions coincide,
it is not obvious which blocks in one partition correspond to which blocks in the other.
To overcome this problem, we base our measurements on the base space projection:
\begin{definition} \label{def:base-space-projection}
  For an ordered set $(X,E)$ and a partition $Q$ of $X$ with induced order $E'$,
  \deft{the base space projection of $(Q,E')$ onto $X$} is the order relation $E_Q$ on $X$ defined as
  \[
  (x,y) \in E_Q \ \Leftrightarrow \ ([x],[y]) \in E'.
  \]
\end{definition}

{\flushleft This} allows us to compare the induced orders in terms of different orders on $X$.
Notice that if the induced order $E'$ is a [strict] partial order on $Q$, then $E_Q$ is a
[strict] partial order on $X$.

\smallskip

\citet{Hoffman2015} demonstrate that the adjusted Rand index can be used to detect missing links in networks
by computing the similarity of edge sets.
The concept relies on the fact that a network link and a link in an equivalence relation are not
that different: Both networks and equivalence relations are special classes of relations, and the Rand
index simply counts the number of coincidences and mismatches between two relation sets.
While \citet{Hoffman2015} uses the \mari{} to compare elements within a network, we use the same
method to compare across networks.

Let $A$ and $B$ be the adjacency matrices of two base space projections, and let $A_i$ denote the $i$-th
row of $A$, and likewise for $B_i$. If $\la a,b \ra$ is the inner product of $a$ and $b$, we define
\begin{equation*} %\label{eqn:ri-diag-coeff}
  \begin{array}{ll}
    a_i = \la A_i, B_i \ra     & \quad c_i = \la A_i, 1 - B_i \ra \\
    b_i = \la 1 - A_i, B_i \ra & \quad d_i = \la 1 - A_i, 1 - B_i \ra.
  \end{array}
\end{equation*}
Here, $a_i$ is the number of common direct descendants of $i$ in both relations, $b_i$ is the number of
descendants of $i$ found in $A$ but not in $B$, $c_i$ is the number of descendants of $i$ in $B$ but not
in $A$, while $d_i$ counts the common non-descendants of $i$ in the two relations.
Using this, we can compute the \deft{element wise adjusted order Rand index}
\[
\oari_{i} = \frac{2(a_i d_i - b_i c_i)}{(a_i+b_i)(b_i+d_i)+(a_i+c_i)(c_i+d_i)}
\qquad \text{for $1 \le i \le n$},
\]
measuring the element wise order correlation between the base space projections in the
Hubert-Arabie adjusted Rand index \citep{HubertArabie1985,Warrens2008}\footnote{%
This particular formulation of the adjusted Rand index relies on the networks having known and fixed labels,
so that we know which vertices map to which vertices \citep{Warrens2008}, which indeed holds for the base
space projections of two different induced order relations.}.
Notice that we compare the $i$-th row in $A$ to the $i$-row in $B$ since these rows correspond to
the projections' respective descentand relations for the $i$-th element in $X$.
In \citep{Hoffman2015}, the above index is computed for each element pair \emph{within} the network
to produce the intra-network similarity coefficient.

Since we are interested in the overall match, we choose to report on the mean value,
defining the \deft{adjusted order Rand index for $A$ and $B$} as
\begin{equation*}% \label{eqn:oari}
  \oari(A,B)
  \ = \ \frac{1}{n} \sum_{i=1}^n \oari_i.
\end{equation*}

\subsubsection{Normalised ultrametric fit}
A natural choice of quality measure is to report the \deft{ultrametric fit}
$\norm{\Ultra_\vareps(\theta) - d}_p$ of the obtained partial dendrogram $\theta$,
especially if we can compare it to the ultrametric fit of the optimal solution.
The scale of the ultrametric fit depends heavily on both the size of the space and the order of the norm,
so we choose to normalise.
Also, we invert the normalised value, so that the optimal fit has a value of $1$, and a worst possible fit
has value $0$. This makes it easy to compare the convergence of the ultrametric fit to the convergence of the
\mari{} and \moari{}.
\begin{definition}
  Given a set of partial dendrograms $\{\theta_i\}$ over $(X,<,d)$,
  let their respective ultrametric fits be given by $\delta_i = \norm{\Ultra_\vareps(\theta_i) - d}_p$.
  The \deft{normalised ultrametric fit} are the corresponding values
  \[
  \hat \delta_i = 1 - \frac{\delta_i - \min_i \{ \delta_i \}}{ \max_i \{ \delta_i\} - \min_i \{ \delta_i\} }.
  \]  
  In the presence of a reference solution, we substitute $\min_i \{ \delta_i \}$ with the
    ultrametric fit of the reference.
\end{definition}

\subsubsection{Ultrametric fit relative the optimal ultrametric}
\label{section:ari-vs-fit}
The reference partition can be reached through
different sequences of merges, and neither $\ALink$ nor $\CLink$ are invariant in this respect.
Neither \mari{}, \moari{} nor ultrametric fit captures the match between the optimal hierarchy
and the approximated hierarchy.
We therefore also include plots of the difference between the optimal ultrametric $\ultra_\opt$
and the approximated ultrametric $\ultra_{N,\vareps}$. Since both ultrametrics are equivalent to their
respective hierarchies, the magnitude $\norm{\ultra_\opt - \ultra_{N,\vareps}}_p$ can be interpreted
as a measure of difference in hierarchies. In the below plots, this is reported as $opt. fit$.
As for the ultrametric fit, we normalise and invert the values for easy comparison.

\subsection{Demonstration on randomly generated data}
The experiments in the demonstration split in two. First, we demonstrate the efficacy of the
approximation relative a known optimal solution, to see to which degree $\HC^{<\Link}_{N,\vareps}$
manages to approximate $\HC^{<\Link}_{\opt,\vareps}$.
Second, we study the convergence rate of the ultrametric fit for larger spaces with much
larger numbers of tied connections; spaces for which the optimal algorithm does not terminate within
any reasonable time.

\medskip

For each parameter combination in Table~\ref{table:params-ref-solution}, a set of $30$ random ordered
dissimilarity spaces are generated. For each space, $100$ approximations are generated
according to the prescribed procedure.
We then bootstrap the approximations to generate $N$-fold approximations for different $N$.

\begin{table}[hbpt]
  \begin{center}
    \begin{tabular}{l|l|l|l|l|c}
      & \multicolumn{1}{c|}{$n$} &
      \multicolumn{1}{c|}{$\Link$} &
      \multicolumn{1}{c|}{link probability ($p$)} &
      \multicolumn{1}{c|}{expected ties ($t$)} & reference   \\
      \hline
      Figure~\ref{fig:efficacy-varying-p} &
      $200$ & $\SLink$, $\ALink$, $\CLink$ & $0.01,0.02,0.05$ & $5$ & yes \\
      Figure~\ref{fig:efficacy-varying-t} &
      $200$ & $\SLink$, $\ALink$, $\CLink$ & $0.05$ & $3,7$ & yes \\
      Figure~\ref{fig:approximations-p01} &
      $500$ & $\SLink$, $\ALink$, $\CLink$ & $0.01$ & $10,50,100$ & no \\
      Figure~\ref{fig:approximations-p05} &
      $500$ & $\SLink$, $\ALink$, $\CLink$ & $0.05$ & $50,100$ & no \\
      Figure~\ref{fig:approximations-p10} &
      $500$ & $\SLink$, $\ALink$, $\CLink$ & $0.10$ & $100$ & no 
    \end{tabular}
    \caption{Parameter settings for the demonstrations. The right-most column indicates whether
      the reference clustering is available or not. The left-most column refers to the figure wherein
      the outcome of the corresponding experiment is presented.
      The parameters have been chosen to illustrate how the algorithm behaviour changes with changing
      expected number of ties, changing link probability in the random partial order, and
      choice of linkage function.}
    \label{table:params-ref-solution}
  \end{center}
\end{table}

We present the results in terms of convergence plots, showing the efficacy of the approximation as a
function of the sample size~$N$.
For the results where a reference solution is available, the plots contain four curves:

{\flushleft
\begin{tabular}{l@{ - }p{10cm}}
$\E(\ari)$ & The expected adjusted Rand index of the approximated partition. \\
$\E(\oari)$ & The expected adjusted Rand index of the approximated induced order.  \\
$norm. fit$ & The mean of the normalised fit. \\
$opt. fit$ & The mean of the normalised difference between the approximated ultrametric and
  the optimal ultrametric.
\end{tabular}}

\bigskip

For the results where no reference solution is available, we present the distribution of
the normalised fit.

\bigskip

The results are presented in Figures~\ref{fig:efficacy-varying-p}, \ref{fig:efficacy-varying-t}, \ref{fig:approximations-p01}
and~\ref{fig:approximations-p10} on pages~\pageref{fig:efficacy-varying-p}, \pageref{fig:efficacy-varying-t}, \pageref{fig:approximations-p01}
and~\pageref{fig:approximations-p10}, respectively. The parameter settings corresponding to the
figures are given in Table~\ref{table:params-ref-solution} for easy reference,
and are also repeated in the figure text.

\medskip

As we can see from the below results, the approximation generally performs very well.
We also see that a large expected number of tied connections requires larger sample size for a good
approximation, while a more dense order relation
(higher value of $p$) seems to require a smaller sample compared to a more sparse relation.
We also see that there is a seemingly strong correlation between the ultrametric fit of the approximation
and the similarity between the approximation ultrametric and the optimal ultrametric.

Regarding choice of linkage function, the approximation only requires small samples for both $\SLink$ and
$\ALink$, while $\CLink$ requires larger samples for larger numbers of tied connections.

\begin{figure}
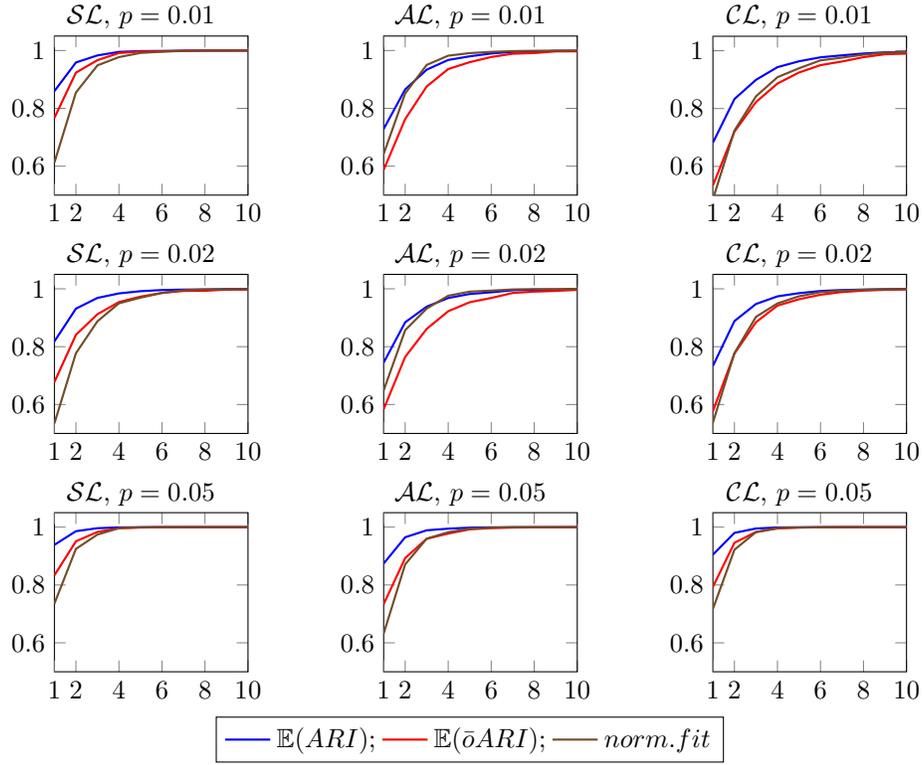

  \begin{center}
    \begin{tabular}{c@{$\qquad$}c@{$\qquad$}c}
      \makecell{
        $\SLink$, $p=0.01$ \\
        \input{para_vs_opt_n200_p01_t5_K30_N100_single}
      } & \makecell{
         $\ALink$, $p=0.01$ \\
        \input{para_vs_opt_n200_p01_t5_K30_N100_average}
      } & 
      \makecell{ 
        $\CLink$, $p=0.01$ \\
        \input{para_vs_opt_n200_p01_t5_K30_N100_complete} 
      }
      \\[2em]
      \makecell{
        $\SLink$, $p=0.02$ \\
        \input{para_vs_opt_n200_p02_t5_K30_N100_single}
      } &
      \makecell{
         $\ALink$, $p=0.02$ \\
        \input{para_vs_opt_n200_p02_t5_K30_N100_average} 
      } &
      \makecell{
        $\CLink$, $p=0.02$ \\
        \input{para_vs_opt_n200_p02_t5_K30_N100_complete} 
      } \\[2em]
      \makecell{
        $\SLink$, $p=0.05$ \\
        \input{para_vs_opt_n200_p05_t5_K30_N100_single}
      } &
      \makecell{
         $\ALink$, $p=0.05$ \\
        \input{para_vs_opt_n200_p05_t5_K30_N100_average} 
      } &
      \makecell{
        $\CLink$, $p=0.05$ \\ 
        \input{para_vs_opt_n200_p05_t5_K30_N100_complete} 
      } \\
      \multicolumn{3}{c}{\begin{NoHyper}\ref{arilegend}\end{NoHyper}}
    \end{tabular}
    \caption{Efficacy for $n=200$ and $t=5$ with $p \in \{0.01,0.02,0.05\}$.
      The first axis is the size of the drawn sample.
    }
    \label{fig:efficacy-varying-p}
  \end{center}
\end{figure}

\begin{figure}
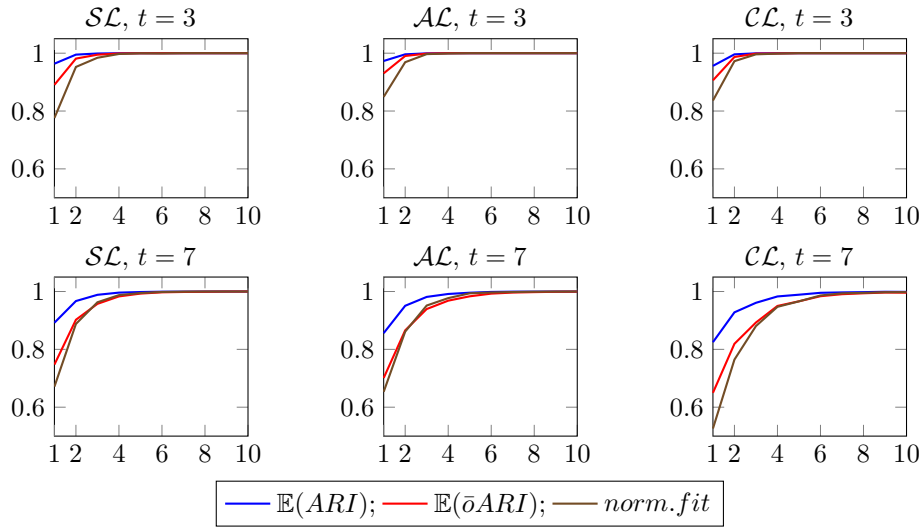

  \begin{center}
    \begin{tabular}{c@{$\qquad$}c@{$\qquad$}c}
      \makecell{
        $\SLink$, $t=3$ \\
        \input{para_vs_opt_n200_p05_t3_K30_N100_single}
      } & \makecell{
         $\ALink$, $t=3$ \\
        \input{para_vs_opt_n200_p05_t3_K30_N100_average}
      } & 
      \makecell{ 
        $\CLink$, $t=3$ \\
        \input{para_vs_opt_n200_p05_t3_K30_N100_complete} 
      }
      \\[2em]
      \makecell{
        $\SLink$, $t=7$ \\
        \input{para_vs_opt_n200_p05_t7_K30_N100_single}
      } &
      \makecell{
         $\ALink$, $t=7$ \\
        \input{para_vs_opt_n200_p05_t7_K30_N100_average} 
      } &
      \makecell{
        $\CLink$, $t=7$ \\ 
        \input{para_vs_opt_n200_p05_t7_K30_N100_complete} 
      } \\
      \multicolumn{3}{c}{\begin{NoHyper}\ref{arilegend}\end{NoHyper}}
    \end{tabular}
    \caption{Efficacy for $n=200$ and $p=0.05$ with $t \in \{3,7\}$.
      The first axis is the size of the drawn sample.
      The plots for $t=5$ can be found in the bottom row of
      Figure~\ref{fig:efficacy-varying-p}.}
    \label{fig:efficacy-varying-t}
  \end{center}
\end{figure}

\begin{figure}
  \begin{center}
    \begin{tabular}{c@{$\qquad$}c@{$\qquad$}c}
      \makecell{
        $\SLink$, $t=10$ \\       
        \input{poly_convergence_n500_p01_t10_K30_N100_single}
      } & \makecell{
        $\ALink$, $t=10$ \\
        \input{poly_convergence_n500_p01_t10_K30_N100_average}
      } & 
      \makecell{ 
        $\CLink$, $t=10$ \\
        \input{poly_convergence_n500_p01_t10_K30_N100_complete}
      }
      \\[2em]
      \makecell{
        $\SLink$, $t=50$ \\
        \input{poly_convergence_n500_p01_t50_K30_N100_single}
      } &
      \makecell{
        $\ALink$, $t=50$ \\
        \input{poly_convergence_n500_p01_t50_K30_N100_average}
      } &
      \makecell{
        $\CLink$, $t=50$ \\
        \input{poly_convergence_n500_p01_t50_K30_N100_complete}
      } \\[2em]
      \makecell{
        $\SLink$, $t=100$ \\
        \input{poly_convergence_n500_p01_t100_K30_N100_single}
      } &
      \makecell{
        $\ALink$, $t=100$ \\
        \input{poly_convergence_n500_p01_t100_K30_N100_average}
      } &
      \makecell{
        $\CLink$, $t=100$ \\ 
        \input{poly_convergence_n500_p01_t100_K30_N100_complete}
      } \\
      \multicolumn{3}{c}{\begin{NoHyper}\ref{convlegend}\end{NoHyper}}
    \end{tabular}
    \caption{Polynomial approximation rate for $n=500$, $P=0.01$ and $t \in \{10,20,40\}$.
      The first axis is the size of the drawn sample.
    }
    \label{fig:approximations-p01}
  \end{center}
\end{figure}

\begin{figure}
  \begin{center}
    \begin{tabular}{c@{$\qquad$}c@{$\qquad$}c}
      \makecell{
        $\SLink$, $t=50$ \\
        \input{poly_convergence_n500_p05_t50_K30_N100_single}
      } &
      \makecell{
        $\ALink$, $t=50$ \\
        \input{poly_convergence_n500_p05_t50_K30_N100_average}
      } &
      \makecell{
        $\CLink$, $t=50$ \\
        \input{poly_convergence_n500_p05_t50_K30_N100_complete}
      } \\[2em]
      \makecell{
        $\SLink$, $t=100$ \\
        \input{poly_convergence_n500_p05_t100_K30_N100_single}
      } &
      \makecell{
        $\ALink$, $t=100$ \\
        \input{poly_convergence_n500_p05_t100_K30_N100_average}
      } &
      \makecell{
        $\CLink$, $t=100$ \\ 
        \input{poly_convergence_n500_p05_t100_K30_N100_complete}
      } \\ 
      \multicolumn{3}{c}{\begin{NoHyper}\ref{convlegend}\end{NoHyper}}
    \end{tabular}
    \caption{Polynomial approximation rate for $n=500$, $p=0.05$ and $t \in \{50,100\}$.
      The first axis is the size of the drawn sample.
    }
    \label{fig:approximations-p05}
  \end{center}
\end{figure}

\begin{figure}
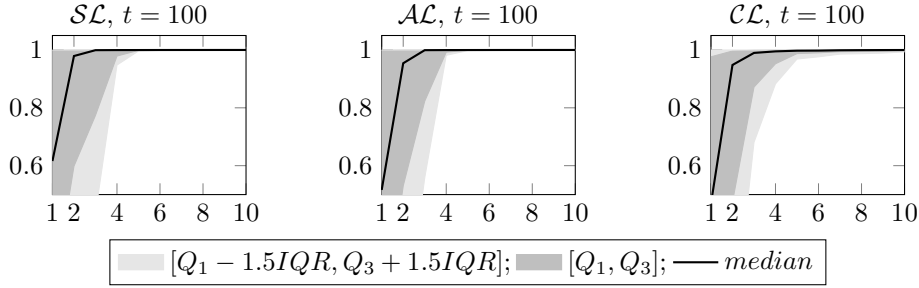

  \begin{center}
    \begin{tabular}{c@{$\qquad$}c@{$\qquad$}c}
      \makecell{
        $\SLink$, $t=100$ \\
        \input{poly_convergence_n500_p10_t100_K30_N100_single}
      } &
      \makecell{
        $\ALink$, $t=100$ \\
        \input{poly_convergence_n500_p10_t100_K30_N100_average}
      } &
      \makecell{
        $\CLink$, $t=100$ \\ 
        \input{poly_convergence_n500_p10_t100_K30_N100_complete}
      } \\ 
      \multicolumn{3}{c}{\begin{NoHyper}\ref{convlegend}\end{NoHyper}}
    \end{tabular}
    \caption{Polynomial approximation rate for $n=500$, $p=0.10$ and $t = 100$.
      The first axis is the size of the drawn sample.
    }
    \label{fig:approximations-p10}
  \end{center}
\end{figure}

\FloatBarrier

\subsubsection{First conclusions}
The first thing that strikes the eye is that the approximations converge very rapidly.
Even for moderately sized spaces ($\sim\!\! 500$ elements), it appears to be
sufficient with $20$ samples for $\SLink$ and $\ALink$, and for smaller spaces
($\sim\!\! 200$ elements), even fewer samples are required.
We also notice that there is a strong correlation between the \mari, \moari{} and normalised fit.

For the part of the demonstration where we have no reference clustering, we cannot know
for sure whether the best reported fit is also optimal. However, from the convergent
behaviour of the data, and the strong correlation between optimality and normalised fit
in Figures~\ref{fig:efficacy-varying-p} and~\ref{fig:efficacy-varying-t}, this points
in the direction of convergence to the true optimum.

Only $\CLink$ displays convergence issues, indicating that if one wishes to use
$\CLink$ for large spaces or large numbers of tied connections, it may be wise to
do so in conjunction with convergence tests.

On the other hand, since $\SLink$ is independent of tie resolution order, 
every sequence of merges ending in the same maximal partition will produce the same partial dendrogram.
This explains why the convergence rate of $\SLink$ is less affected by the
expected number of tied connections than, say, $\CLink$.

The convergence rate is very high in some of the plots of Figures~\ref{fig:approximations-p05}
and~\ref{fig:approximations-p10}. The authors believe this is due the high probability of two
random elements being comparable (high $p$ in $\overline{\RODS}(n,p,t)$), since a dense relation leads
to fewer candidate solutions. This in contrast to the larger set of candidates for a more sparse relation,
such as in Figure~\ref{fig:approximations-p01}.

On the other hand, as we can see in Figures~\ref{fig:approximations-p01}
and~\ref{fig:approximations-p05}, keeping $p$ fixed and increasing the number of tied connections,
and thereby the number of possible branch points, causes a slower convergence rate.

\medskip

To summarise, we see that the approximation is both good and effective for $\SLink$ and $\ALink$.
For $\CLink$, although the approximation method seems good, the required sample size must be increased
in the presence of large amounts of tied connections.

\section{Demonstration on data from the parts database}
\label{section:demo}
While the above demonstration shows that $\HC^{<\Link}_{N,\vareps}$ performs well with respect
to approximating $\HC^{<\Link}_{\opt,\vareps}$,
another question is how order preserving hierarchical clustering deals with the 
dust of reality. In this section, we present results from applying the approximation algorithm
to subsets of the parts database described briefly in Section~\ref{section:motivating-use-case}.
As benchmark, we run classical hierarchical clustering on the same problem instances, comparing
the performance of the methods using \mari, \moari{} and loop frequency (described below).
As hierarchical methods for constrained clustering do not offer a no-link constraint,
we also propose a simplified approach simulating no-link behaviour for $\ALink$
and $\CLink$ which we call $\HC^+$. 

To select data for the demonstration we proceeded as follows: We considered the part-of relations as
a directed graph, and extracted all the connected components. As it turned out, there was one gigantic
component and a large number of singleton elements, but also a hand-full of connected components
of $11$ to $40$ elements each. We selected these smaller connected components as
our demo dataset without any further consideration.
Dissimilarities between the elements were obtained from a dissimilarity measure produced by
an ongoing project in the company working on the very task of classifying equivalent equipment.
Some key characteristics of the data is provided in Table~\ref{tab:cc-characteristics}

\begin{table}[htpb]
  \begin{center}
    \input{ccs_stats}
    \caption{Some key characteristics of the connected components selected for the demonstration.
      The in/out deg. column provides the directed average degree when the data is considered as a DAG.
      The column $p$ shows the probability for two random elements to be connected in the transitive
      reduction.}
    \label{tab:cc-characteristics}
  \end{center}
\end{table}

\medskip

Due to limited labeling of the data, we do not know which elements are copies of other elements,
so we have to fake copying to produce planted partitions. For the demonstration, we pick
a connected component $(X^0,E^0)$ where $X^0 = \{x_1^0,\ldots,x_n^0\}$, and for some positive number
$m$ we make $m-1$ copies of $X^0$ and $E^0$, leading to $m$ partially ordered sets
$\big\{(X^k,E^k)\big\}_{k=0}^{m-1}$. We then form their disjoint union $(X,E)$ where
$\card{X} = m \card{X^0}$. $X$ now consists of $m$ connected components, each a copy of the others.
If $x_i^0 \in X^0$, then \deft{the set of elements equivalent to $x_i^0$} is the
set~$\{x_i^k\}_{k=0}^{m-1} \subs X$. Hence, the clusters we seek are the sets on this form.

If we denote the dissimilarity measure that comes with the data by $d_0$, we define the
extension to all of $X$ as follows: First, if both elements are in the same component
$X^k$ for $0 \le k \le m$, then we simply use $d_0$. And if they are in different components,
indicating that they are in a copy-relationship,
we increase their dissimilarity by an offset $\alpha \ge 0$. Concretely,
the extended dissimilarity $d^\alpha : X \times X \to \R_+$ is given by
\begin{equation*} %\label{eqn:d-alpha}
  d^\alpha(x_i^r,x_j^s) \ = \
  \begin{cases}
    d_0(x_i^0,x_j^0) & \text{if $r=s$}, \\
    \alpha + d_0(x_i^0,x_j^0) & \text{otherwise}.
  \end{cases}
\end{equation*}
This means that if $x$ and $y$ are copies of each other, then $d^\alpha(x,y)=\alpha$,
and if $x$ and $y$ are in the same component and if $z$ is a copy of $x$,
then $d(z,y) = \alpha + d_0(x,y)$. Furthermore, for each modified distance,
we add a small amount of Gaussian noise to $\alpha$ to induce some variability.
As a result, two copies $x_i^r$ and $x_i^s$ are offset by approximately $\alpha$, and by varying
the magnitude of $\alpha$ we can study how the offset affects the clustering.

\subsection{Simulated constrained clustering}
The available methods for hierarchical constrained clustering do not easily incorporate
the partial order as a constraint. What we would like to compare against, is hierarchical constrained
clustering with do-not-cluster constraints. For $\CLink$ and $\ALink$, we can obtain this
by setting the dissimilarity between comparable elements to a sufficiently large number, causing all
comparable elements to be merged towards the end. Indeed,
for $\CLink$ it is sufficient to set this dissimilarity to any value exceeding $\max\{ d^\alpha \}$,
and as the below demonstration shows, this value works equally well for $\ALink$.
We denote hierarchical clustering with this kind of modified dissimilarity by~$\HC^{+\Link}$.

Since $d_0 < 1$ for all pairs of elements, we chose to use $1.0$ as our maximum dissimilarity.

\subsection{A measure of order preservation}
While the \moari{} measures the correlation between the induced order of the planted partition and the
induced order of the obtained clustering, the \moari{} does not convey information about whether the
induced relation is a partial order or not. Since this is a key question for applications where order
preservation is of high importance (such as acyclic partitioning of graphs),
we suggest the following simple measure.

Let $(Q,E')$ be a partition of $(X,E)$, and let $E_Q$ be the base space projection of $(Q,E')$ onto $X$
(Definition~\ref{def:base-space-projection}). We say that $(Q,E')$ \deft{induces a loop} if there
are elements on the form $(x,x) \in E_Q$. The number of loops induced by $(Q,E')$
is thus the quantity $\card{\{ \, (x,y) \in E_Q \, | \, x = y \,\} }$.
There is at most one loop per element of $X$, and if $E_Q$ contains a cycle, then every element of the
cycle corresponds to a loop. In the name of normalisation, we measure the amount of loops as
the fraction of elements in $X$ that is a part of a cycle:
\[
\loops(Q,E') \ = \ \frac{\card{\{ \, (x,y) \in E_Q \, | \, x = y \,\} }}{\card{X}}.
\]

\subsection{Picking a clustering in the hierarchy for comparison}
Given a problem instance $(X,<,d)$ and a planted partition $Q \in \Part{X,<}$, the planted induced
partial order is necessarily the induced relation $<'$.
But in comparing a hierarchical clustering with a planted partition, we have to make a choice of
clustering in the hierarchy. 
Given a hierarchical clustering, we choose to find the clustering in the hierarchy that has the
highest \mari{} relative the planted partition. We
then report all of \mari, \moari{} and \mloops{} with regards to this clustering.

\subsection{Variance of the difference} \label{section:vardiff}
In the below plots, we present the mean values of \mari, \moari{} and \mloops{} together with a visual
indication of variability. For each instance of a random ordered dissimilarity space $(X,<,d)$,
we run all of $\HC^{<\Link}_{N,\vareps}$, $\HC^\Link$ and $\HC^{+\Link}$.
Thus, we can analyse the performance of the
methods by pairwise comparison on a problem instance level. That is, we choose to consider pairwise
differences such as
\[
\ari(\HC^{<\Link}_{N,\vareps}(X,<,d)) - \ari(\HC^{+\Link}(X,<,d))
\]
as one random variable, and likewise for $\oari$ and $\loops$. The variance of this random variable shows the
variance in the difference, and we can use this magnitude to analyse whether the sets of results are
statistically distinguishable. For the below plots, we mark a region about each line corresponding to one
standard deviation of this random variable. This means that the regions encompassing the lines
will not overlap unless the difference between the mean values is less than two standard deviations.

To reduce the number of plots, we choose to plot the results of all three methods together. This is
obviously impractical with respect to pairwise comparisons, so we employ the following convention:
the indicated variance about the mean of $\HC^{<\Link}_{N,\vareps}$ and $\HC^{+\Link}$ is the standard
deviation of the differences between these methods. The indicated variance about the mean
of $\HC^\Link$ represents the standard deviation of the differences between~$\HC^\Link$
and~$\HC^{<\Link}_{N,\vareps}$.

\subsection{Execution and results}

The parameters given in Table~\ref{table:demo-params} define how the ordered dissimilarity spaces
are constructed for each of the connected components. For each instance of an ordered
dissimilarity space, $\HC^{<\Link}_{N,\vareps}$, $\HC^\Link$ and $\HC^{+\Link}$ are all run on the same
instance with a choice of linkage function $\Link \in \{\SLink,\ALink,\CLink\}$. This allows us to compare
the performance of the methods against each other on a per-instance basis. For each parameter combination
in $\{\alpha\} \times \{\SLink,\ALink,\CLink\}$, we repeated this process $50$ times. The variance of
the difference is based on these sets of $50$ executions.

\begin{table}[tpbh]
  \begin{center}
    \begin{tabular}{lll}
      parameter & value(s) & explanation \\
      \hline
      $\alpha$   & $\{0.10,0.15,\ldots,0.50\}$ & mean copy dissimilarity \\
      $\sigma$   & $0.10$                      & variance of $\alpha$ \\
      $\Link$    & $\{\SLink,\ALink,\CLink\}$  & linkage models \\
      $m$        & $\ceil{200/\card{X^0}}$     & number of copies (see below)\\
      $N$        & $10$                        & sample size in the $N$-fold approximation \\
      $\vareps$  & $10^{-12}$                  & ultrametric completion level \\
      $p$        & $1$                         & choice of norm for ultrametric fitting \\
      \hline
    \end{tabular}
    \caption{Parameters for execution of experiments. The number $m$ of copies is the least number
      for which the total number of elements, $m \card{X^0}$, is at least $200$.}
    \label{table:demo-params}
  \end{center}
\end{table}

We present three families of plots, for \mari, \moari{} and \mloops, respectively
We have picked three connected components for the presentation that we believe represent the span
of observations. The full set of plots is provided in the appendix.

First, connected component number $7$ ($cc7$) is the sample on which we see the most clear benefit from
using $\HC^{<\Link}_{N,\vareps}$, significantly outperforming both $\HC^\Link$ and $\HC^{+\Link}$
on all quality measures.
Although $cc7$ is not representative for the majority of observations, it is empirical evidence
that there exist problem instances for which order preserving clustering cannot be well approximated
by hierarchical constrained clustering through do-not-cluster constraints.

Connected component number $1$ ($cc1$) represents the majority of the instances.
While $\HC^{<\Link}_{N,\vareps}$ still is best in class with respect to all quality measures, we see
that for $\ALink$ and $\CLink$ the method $\HC^{+\Link}$ performs equally well with respect to \mari{} and
sometimes also \moari. 

At the other extreme of $cc7$ there is connected component number $4$ ($cc4$), presented in the bottom row of
Figure~\ref{fig:efficacy-ari}. For this component, all the clustering models perform equally well in
all quality measures, indicating that they produce the exact same clusterings. This can only be
explained by the fact that the original dissimilarity measure $d_0$, when restricted to this
component, both is an ultrametric, and incorporates the order relation (Section~\ref{section:idempotency}).

The results are also summarised in Table~\ref{table:scores} after the plots.

\begin{figure}[htpb]
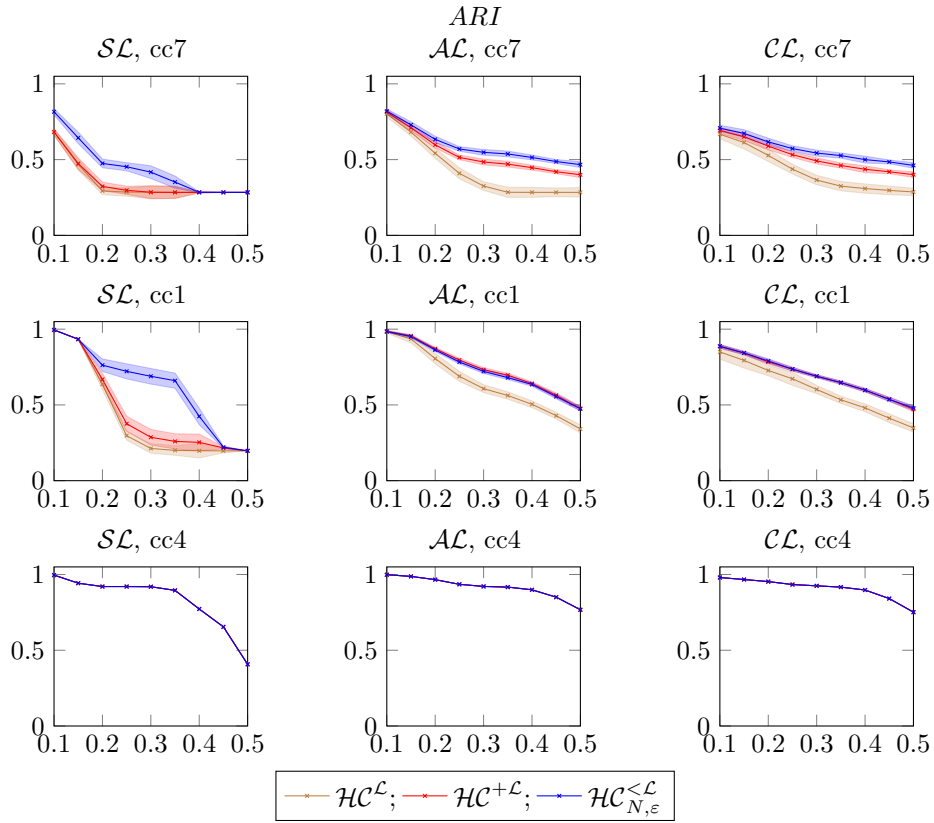

  \begin{center}
    \begin{tabular}{c@{$\qquad$}c@{$\qquad$}c}
      \multicolumn{3}{c}{\mari} \\
      \makecell{$\SLink$, cc$7$ \\  
        \input{cc_7_SL_010_ari}
      } &
      \makecell{$\ALink$, cc$7$ \\
        \input{cc_7_AL_010_ari}
      } &
      \makecell{$\CLink$, cc$7$ \\
        \input{cc_7_CL_010_ari}
      } \\
      \makecell{$\SLink$, cc$1$ \\
        \input{cc_1_SL_010_ari}
      } &
      \makecell{$\ALink$, cc$1$ \\
        \input{cc_1_AL_010_ari}
      } &
      \makecell{$\CLink$, cc$1$ \\
        \input{cc_1_CL_010_ari}
      } \\
      \makecell{$\SLink$, cc$4$ \\
        \input{cc_4_SL_010_ari}
      } &
      \makecell{$\ALink$, cc$4$ \\
        \input{cc_4_AL_010_ari}
      } &
      \makecell{$\CLink$, cc$4$ \\
        \input{cc_4_CL_010_ari}
      } \\ 
      \multicolumn{3}{c}{\begin{NoHyper}\ref{reallegend}\end{NoHyper}}  
    \end{tabular} 
    \caption{Performance of the different clustering methods with respect to \mari{}
      on connected components $7$, $1$ and $4$. The shaded regions represent one standard
      deviation of the pairwise differences, as described in Section~\ref{section:vardiff}.}
    \label{fig:efficacy-ari}
  \end{center}
\end{figure}

\begin{figure}[htpb]
  \begin{center}
    \begin{tabular}{c@{$\qquad$}c@{$\qquad$}c}
      \multicolumn{3}{c}{\moari} \\
      \makecell{$\SLink$, cc$7$ \\  
        \input{cc_7_SL_010_oari}
      } &
      \makecell{$\ALink$, cc$7$ \\
        \input{cc_7_AL_010_oari}
      } &
      \makecell{$\CLink$, cc$7$ \\
        \input{cc_7_CL_010_oari}
      } \\
      \makecell{$\SLink$, cc$1$ \\
        \input{cc_1_SL_010_oari}
      } &
      \makecell{$\ALink$, cc$1$ \\
        \input{cc_1_AL_010_oari}
      } &
      \makecell{$\CLink$, cc$1$ \\
        \input{cc_1_CL_010_oari}
      } \\
      \multicolumn{3}{c}{\begin{NoHyper}\ref{reallegend}\end{NoHyper}}  
    \end{tabular} 
    \caption{Performance of the different clustering methods with respect to \moari{}
      on connected components $7$ and $1$. The shaded regions represent one standard
      deviation of the pairwise differences, as described in Section~\ref{section:vardiff}.}
    \label{fig:efficacy-oari}
  \end{center}
\end{figure}

\begin{figure}[htpb]
  \begin{center}
    \begin{tabular}{c@{$\qquad$}c@{$\qquad$}c}
      \multicolumn{3}{c}{$\loops$} \\
      \makecell{$\SLink$, cc$7$ \\  
        \input{cc_7_SL_010_loops}
      } &
      \makecell{$\ALink$, cc$7$ \\
        \input{cc_7_AL_010_loops}
      } &
      \makecell{$\CLink$, cc$7$ \\
        \input{cc_7_CL_010_loops}
      } \\
      \makecell{$\SLink$, cc$1$ \\
        \input{cc_1_SL_010_loops}
      } &
      \makecell{$\ALink$, cc$1$ \\
        \input{cc_1_AL_010_loops}
      } &
      \makecell{$\CLink$, cc$1$ \\
        \input{cc_1_CL_010_loops}
      } \\
      \multicolumn{3}{c}{\begin{NoHyper}\ref{reallegend}\end{NoHyper}}  
    \end{tabular} 
    \caption{Performance of the different clustering methods with respect to $\loops$
      on connected components $7$ and $1$. The shaded regions represent one standard
      deviation of the pairwise differences, as described in Section~\ref{section:vardiff}.}
    \label{fig:efficacy-loops}
  \end{center}
\end{figure}

\FloatBarrier

We summarise the experiment observations in Table~\ref{table:scores}.
As can be seen from the table, $\HC^{<\Link}_{N,\vareps}$ is best in class in every category.
However, $\HC^{+\Link}$ is also best in class in $81\%$ of the cases when we restrict our attention to
\mari{} and $\Link \in \{\ALink,\CLink\}$.

\begin{table}[tpbh]
  \begin{center}
    \begin{tabular}{|c|c|c|c|c|c|c|c|c|c|}
      \cline{2-10}
      \multicolumn{1}{c|}{\rule{0pt}{2.7ex}}
      & \multicolumn{3}{c|}{$\HC^{<\Link}$}
      & \multicolumn{3}{c|}{$\HC^{\Link}$}
      & \multicolumn{3}{c|}{$\HC^{+\Link}$}
      \\ \cline{2-10}
      \multicolumn{1}{c|}{\rule{0pt}{2.5ex}}
      & \mari & \moari & $\loops$
      & \mari & \moari & $\loops$
      & \mari & \moari & $\loops$
      \\ \hline
      %% $\SLink$    & $7$ & $7$ & $7$ & $3$ & $3$ & $5$ & $3$ & $3$ & $4$ \\ \hline
      %% $\ALink$    & $7$ & $7$ & $7$ & $1$ & $1$ & $2$ & $6$ & $4$ & $2$ \\ \hline
      %% $\CLink$    & $7$ & $7$ & $7$ & $1$ & $1$ & $1$ & $6$ & $3$ & $2$  \\ \hline
      %%             & $100\%$ & $100\%$ & $100\%$ & $24\%$ & $48\%$ & $38\%$ & $71\%$ & $48\%$ & $38\%$
            $\SLink$ & $8$ & $8$ & $8$ & $4$ & $4$ & $3$ & $4$ & $4$ & $2$  \\
      $\ALink$ & $8$ & $8$ & $8$ & $2$ & $2$ & $1$ & $7$ & $6$ & $1$  \\
      $\CLink$ & $8$ & $8$ & $8$ & $2$ & $3$ & $0$ & $7$ & $6$ & $1$  \\
      \hline
               & $100 \%$ & $100 \%$ & $100 \%$ & $33 \%$ & $37 \%$ & $16 \%$ & $75 \%$ & $66 \%$ & $16 \%$ 

      \\
      \hline 
    \end{tabular}
    \caption{The table presents for how many of the eight selected samples the different methods
      are best in class with regards to \mari, \moari{} and $\loops$. The scores are based on visual
      inspection of the plots. For \mari{} and \moari, we count a one if there is less than one standard
      deviation to the best plot in at least half the sampled $\alpha$ values and zero otherwise.
      For \mloops, we count a one if the expected value is zero throughout.
      The full list of plots can be found in Appendix~\ref{appendix:plots}.} 
    \label{table:scores}
  \end{center}
\end{table}

To conclude, we see that if clustering is the sole objective, then $\HC^{+\Link}$ is a good alternative
to $\HC^{<\Link}$ whenever $\Link \in \{\ALink,\CLink\}$. If order preservation,
or acyclic partitioning, is of any importance, then $\HC^{<\Link}_{N,\vareps}$ is the only viable method
among those we have tested.

Moreover, as demonstrated by the top row of Figure~\ref{fig:efficacy-ari}, although $\HC^{+\Link}$
may be a good approximation of $\HC^{<\Link}_{N,\vareps}$ when  $\Link \in \{\ALink,\CLink\}$,
there are problem instances on which the latter outperforms the former with significant margin, also for \mari.

\FloatBarrier

\section{Summing up}
\label{section:conclusions}
In this paper we have put forth a theory for order preserving hierarchical agglomerative clustering
for strictly partially ordered sets. The clustering uses classical linkage functions such as single-, average-,
and complete linkage. The clustering is optimisation based, and therefore also permutation invariant.

The output of the clustering process is partial dendrograms;
sub-trees of dendrograms with several connected components.
We have shown that the family of partial dendrograms over a set embed 
into the family of dendrograms over the set.

When applying the theory to non-ordered sets, we see that we have a new theory for
hierarchical agglomerative clustering that is very close to the classical theory,
but that is optimisation based rather than algorithmic.
Differently from classical hierarchical clustering, our theory is permutation
invariant. We have shown that for single linkage, the theory coincides with classical
hierarchical clustering, while for complete linkage, the clustering problem becomes NP-hard.
However, the computational complexity is directly linked to the number of tied connections, and
in the absence of tied connections, the theories coincide.

We present a polynomial approximation scheme for the clustering theory, and demonstrate
its convergence properties and efficacy on randomly generated data. We also provide a demonstration
on real world data comparing against existing methods, showing that our model is best in class in
all selected quality measures.

\subsection{Future work topics}

We suggest the following future work topics:

\subsubsection{Complexity}
\label{section:conclusion-complexity}
While NP-hardness of $\HC^{<\CLink}_{\opt,\vareps}$ follows from Theorem~\ref{thm:CL-is-NP-hard},
the complexity classes of order preserving hierarchical agglomerative clustering for
$\SLink$ and $\ALink$ remain to be established.

\subsubsection{Order versus dissimilarity}

Since the order relation is treated as a binary constraint it has a significant effect on the output from the
clustering process, and may in some cases lead to undesirable outcomes.
For example, if the dissimilarity measure associates ``wrong'' elements for clustering,
the induced order relation may exclude future merges of elements correctly belonging
together by erroneously identifying them as comparable. Also, if elements are wrongly
identified as comparable to begin with, they can never be merged. Both due to
Theorem~\ref{thm:one-point-quotient}.

Together, these observations indicate that ``loosening up'' the stringent
nature of the order relation may be beneficial in applications where order preservation is not
a strict requirement.

\paragraph{Acknowledgments.}
I would also like to thank the anonymous reviewers at Machine Learning for
constructive feedback and comments greatly improving the exposition.
I would also like to thank Henrik Forssell, Department of Informatics (University of Oslo),
and Gudmund Hermansen, Department of Mathematics (University of Oslo), for their comments,
questions and discussions leading up to this work.

\appendix

\section{Plots from the part database demo}
\label{appendix:plots}

This section lists all the plots from the experiments described in Section~\ref{section:demo}.
The plots are grouped per connected component,
and present results for all clustering methods, quality measures and linkage models.
Please see Table~\ref{tab:cc-characteristics} for a list of statistical properties of
the different connected components, and Table~\ref{table:demo-params} for the parameter settings used
during the experiments.

\input{cc_plots}

\FloatBarrier

\section{Reference implementation}
The implementation used for the experiments in Sections~\ref{section:approximation-demo}
and~\ref{section:demo} is available as open source at \url{https://bitbucket.org/Bakkelund/ophac}.

%% The implementation makes no claim to be optimal
%% with regards to speed, but we include a small table of observed running times,
%% as a rough guide (Table~\ref{table:running-times}).
%% \begin{table}[tphb] 
%%   \begin{center} 
%%     \begin{tabular}{r|cr@{$\,$s.$\ $}|cr@{$\,$s.$\ $}|cr@{$\,$s.}l}
%%       \multicolumn{1}{c|}{$n$} & $p$ & \multicolumn{1}{c|}{$T$} & $p$ & \multicolumn{1}{c|}{$T$} & $p$ & \multicolumn{2}{c}{$T$} \\
%%       \hline
%%       $250$  & $0.01$ & $0.480$ & $0.05$ & $1.69$ & $0.10$ & $3.49$ & \\
%%       $500$  & $0.01$ & $7.81$  & $0.05$ & $37.1$ & $0.10$ & $76.9$ & \\
%%       $1000$ & $0.01$ & $162$   & $0.05$ & $917$ & $0.10$ & $2063$ &  \\
%%       \hline
%%       $\phantom{O^{n^2}}$ & \multicolumn{2}{c|}{$O(n^{4.20})$} & \multicolumn{2}{c|}{$O(n^{4.54})$} & \multicolumn{2}{c}{$O(n^{4.60})$}
%%     \end{tabular}
%%     \caption{Running times for different space sizes and link probabilities.
%%       The number of expected ties is $100$ and linkage is $\SLink$ for all samples.
%%       $T$ is the mean time required to compute a $1$-fold approximation.
%%       The bottom row shows the mean empirical running time complexity computed from the above data.
%%       A standard laptop computer with a $3.5$ GHz.\ processor was used for the benchmarking.}
%%     \label{table:running-times}
%%   \end{center}
%% \end{table}

\bibliography{preamble}

\end{document}